\newcommand{\bxi}{{\boldsymbol{\xi}}}
\def\x{{\boldsymbol{x}}}
\def\g{{\boldsymbol{g}}}
\def\m{{\boldsymbol{m}}}
\newcommand{\EE}{{\mathbb{E}}}
\newcommand{\RR}{{\mathbb{R}}}
\newcommand{\vx}{{\mathbf{x}}}
\newcommand{\vy}{{\mathbf{y}}}
\newcommand{\bs}{{\mathbf{s}}}
\newcommand{\vm}{{\mathbf{m}}}
\newcommand{\vg}{{\mathbf{g}}}
\newcommand{\cF}{{\mathcal{F}}}
\newcommand{\cL}{{\mathcal{L}}}
\newcommand{\cO}{{\mathcal{O}}}
\newtheorem{proposition}{Proposition}
\newtheorem{theorem}{Theorem}
\newtheorem{lemma}{Lemma}
\newtheorem{remark}{Remark}
\newtheorem{corollary}{Corollary}
\begin{document}

\title{DecentLaM: Decentralized Momentum SGD for Large-batch  Deep Training}

\author{Kun Yuan$^1$\thanks{Equal contribution. Correspondence can be addressed to Kun Yuan ({\tt\small kun.yuan@alibaba-inc.com})
} , Yiming Chen$^{1*}$, Xinmeng Huang$^{2*}$, Yingya Zhang$^1$, Pan Pan$^1$, Yinghui Xu$^1$, Wotao Yin$^1$\\
\vspace{-2mm}\\
$^1$Alibaba Group $\quad ^2$University of Pennsylvania \\
}


\maketitle

\begin{abstract}
   The scale of deep learning nowadays calls for efficient distributed training algorithms. Decentralized momentum SGD (DmSGD), in which each node averages only with its neighbors, is more communication efficient than vanilla Parallel momentum SGD that incurs global average across all computing nodes. On the other hand, the large-batch training has been demonstrated critical to achieve runtime speedup. This motivates us to investigate how DmSGD performs in the large-batch scenario.
   
   In this work, we find the momentum term can amplify the  inconsistency bias in DmSGD. Such bias becomes more evident as batch-size grows large and hence results in severe performance degradation. We next propose DecentLaM, a novel  \underline{decent}ralized \underline{la}rge-batch \underline{m}omentum SGD to remove the momentum-incurred bias. The convergence rate for both non-convex and strongly-convex scenarios is established. Our theoretical results justify the superiority of DecentLaM to DmSGD especially in the large-batch scenario. Experimental results on a variety of computer vision tasks and models demonstrate that DecentLaM promises both efficient and high-quality training.
   
   
\end{abstract}

\section{Introduction}

Efficient distributed training across multiple computing nodes is critical for large-scale deep learning tasks nowadays. As a principal training algorithm, Parallel SGD computes a globally averaged gradient either using the {\em Parameter Server (PS)} \cite{li2014scaling} or the {\em All-Reduce} communication primitive \cite{patarasuk2009bandwidth}. 
Such global synchronization across all nodes either incurs significant bandwidth cost or high latency that can severely hamper the training scalability. 

Decentralized SGD \cite{nedic2009distributed, chen2012diffusion, lian2017can, lian2018asynchronous, assran2019stochastic} based on {\em partial averaging} has become one of the major approaches in the past decade to reduce communication overhead in distributed optimization. Partial averaging, as opposed to {\em global averaging} used in Parallel SGD, requires every node to compute the average of the nodes in its neighborhood, see Fig.~\ref{fig:decen_comm}. If a sparse topology such as one-peer exponential graph \cite{assran2019stochastic} is utilized to connect all nodes, each node only communicates with {\em one} neighbor {\em each} iteration and hence saves remarkable communications. Decentralized SGD can typically achieve $1.3\sim2\times$ training time speedup without performance degradation \cite{lian2017can,assran2019stochastic,kong2021consensus}. 

Existing Decentralized SGD methods \cite{lian2017can,lian2018asynchronous,tang2018d,lin2021quasi,assran2019stochastic} and their momentum accelerated variants \cite{yu2019linear,singh2020squarm, gao2020periodic,balu2020decentralized} primarily utilize  small-batch in their algorithm design. However, recent hardware advances make it feasible to store large batches in memory and compute their gradients timely. Furthermore, the total batch size naturally grows when more computing nodes participate into training. These two primary reasons lead to the recent  exploration of large-batch deep training algorithms. 


\begin{figure}[t!]
    \centering
    \includegraphics[width=0.48\textwidth]{./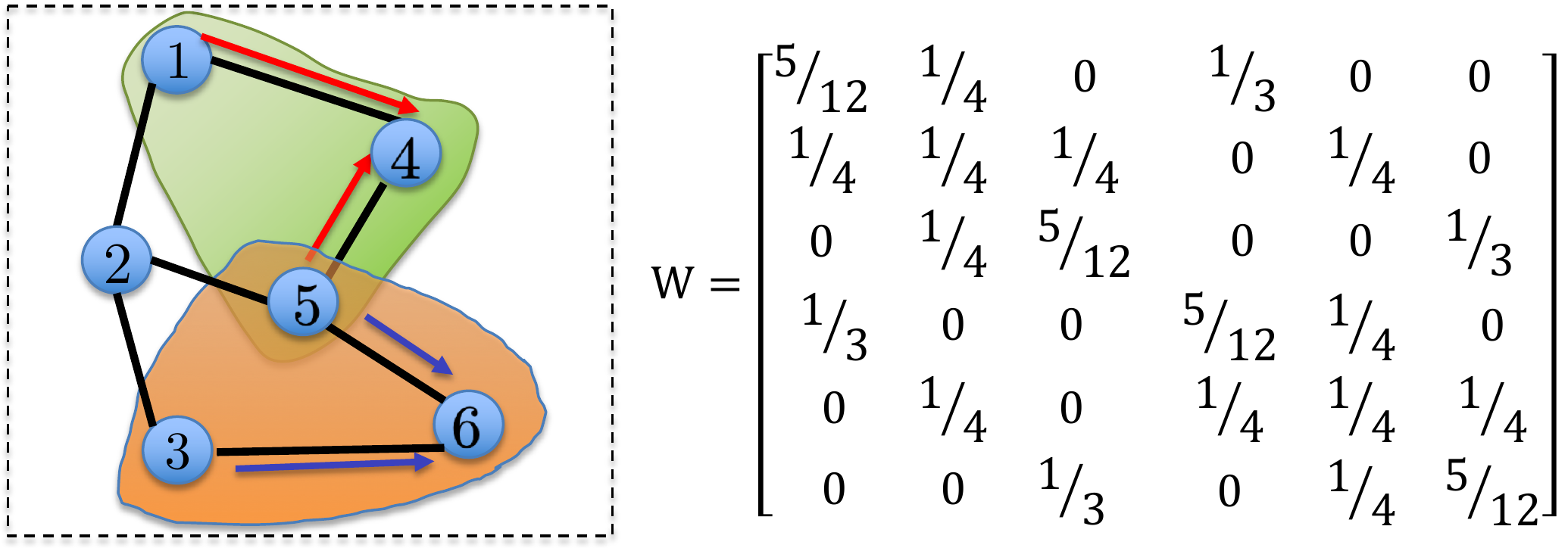}
    \vspace{-1mm}
    \caption{\small Illustration of decentralized methods. Nodes receive information from neighbors; they do not relay information. For example, nodes $4$ and $6$ collect information from their neighbors $\{1,5\}$ and $\{3,5\}$, respectively.
    Other nodes do the same but not depicted. Topology connectivity can be represented in a matrix, as shown in the right figure, see more details in Sec.~\ref{sec:DmSGD}. }
    \label{fig:decen_comm} \vspace{-2mm}
    \vspace{-2mm}
\end{figure}

In fact, large-batch training has been extensively studied in Parallel SGD. Pioneering works \cite{goyal2017accurate,you2017large,you2019largeb} find large-batch can significantly speed up Parallel SGD. First, the computation of a large-batch  gradient can fully utilize the computational resources (e.g., the computing nodes, the CUDA cores and GPU memories within each node). Second, large-batch gradient will result in a reduced variance and hence enables a much larger learning rate. 
With newly-proposed layer-wise adaptive rate scaling (LARS) \cite{you2017large} and its variant \cite{you2019largeb}, large-batch Parallel momentum SGD (PmSGD) can cut down the training time of BERT and Resnet-50 from days to hours \cite{you2018imagenet}.

\begin{table}[t]
\centering
\vskip 0.15in
\begin{tabular}{ccccc}
\toprule
\textbf{Dataset}           & \multicolumn{2}{c}{\textbf{Cifar-10}} & \multicolumn{2}{c}{\textbf{ImageNet}} \\
\textbf{Batch-size} & 2K    & 8K    & 2K    & 32K     \\ \midrule
\textbf{PmSGD}     & 91.6\%    & 89.2\%     & 76.5\% & 75.3\%   \\
\textbf{DmSGD}      & 91.5\%  & 88.3\%    & 76.5\%     & 74.9\%  \\ 
\bottomrule \\
\end{tabular}
\vskip -0.1in
\caption{Top-1 validation accuracy comparison between PmSGD and DmSGD under the small-batch and large-batch settings. No layer-wise adaptive rate scaling is used in any of these algorithms. All hypter-parameters are exactly the same. More experimental details can be referred to Appendix \ref{app-table-introduction}} 
\vskip -0.2in
\label{talbe:motivation}
\end{table}

This naturally motivates us to study {\em how Decentralized momentum SGD (DmSGD) performs with large batch-size}. To this end, we compared PmSGD and DmSGD over Cifar-10 (Resnet-20) and ImageNet (Resnet-50) with both small and large batches.
Their performances are listed in Table \ref{talbe:motivation}. While DmSGD achieves the same accuracy as PmSGD with small batch-size, it has far more performance degradation in the large-batch scenario. This surprising observation, which reveals that the extension of DmSGD to large-batch is non-trivial, raises two fundamental questions: 
\begin{itemize}
    \item Why does DmSGD suffer from severe  performance degradation in the large-batch scenario?
    \vspace{-2mm}
    \item Is there a way to enhance the accuracy performance of large-batch DmSGD so that it can match with or even beat large-batch PmSGD?
    \vspace{-2mm}
\end{itemize}
This paper focuses on these questions and provides affirmative answers. In particular, our main contributions are:
\begin{itemize}
    \item We find large-batch DmSGD has severe performance degradation compared to large-batch PmSGD and have clarified the reason behind this phenomenon.  
    It is discovered that the momentum term can significantly amplify the inconsistency bias in DmSGD.  When batch-size is large and the gradient noise is hence dramatically reduced, such inconsistency bias gets dominant and thus degrades DmSGD's performance notably. 
    \vspace{-2mm}
    \item We propose DecentLaM, a novel \underline{decent}ralized \underline{la}rge-batch \underline{m}omentum SGD to remove the momentum-incurred bias in DmSGD. We establish its convergence rate for both non-convex and strongly-convex scenarios. Our theoretical results show that DecentLaM has superior performance to  existing decentralized momentum methods, and such superiority gets more evident as batch size gets larger.
    \vspace{-2mm}
    \item Experimental results on a variety of computer vision tasks and models show that DecentLaM outperforms various existing baselines such as DmSGD, DA/AWC/D$^2$-DmSGD, SlowMo, PmSGD, and PmSGD$+$LARS in terms of the training accuracy.
    \vspace{-2mm}
\end{itemize}

The rest of this paper is organized as follows: We briefly summarize related works in Sec.~\ref{sec:related_works} and review DmSGD in Sec.~\ref{sec:DmSGD}. We identify the issue that causes performance degradation in DmSGD (Sec.~\ref{sec:DmSG-bias}) and propose DecentLaM to resolve it (Sec.~\ref{sec:decentLaM}). The convergence analysis and experiments are established in Sec.~\ref{sec:DecentLaM_convergence} and  Sec.~\ref{sec:experiment}, respectively. 

\section{Related Works}
\label{sec:related_works}
\vspace{1mm}
\noindent \textbf{Decentralized deep training.}
Decentralized optimization can be tracked back to \cite{tsitsiklis1986distributed}. Decentralized gradient descent \cite{nedic2009distributed,yuan2016convergence}, diffusion \cite{chen2012diffusion,sayed2014adaptation} and dual averaging \cite{duchi2011dual} are among the first decentralized algorithms that target on general optimization problems arise from  signal processing and control communities. In the context of deep  learning, Decentralized SGD (DSGD) has gained a lot of attentions recently. DSGD will incur model inconsistency among computing nodes. However, it is established in \cite{lian2017can} that DSGD can reach the same linear speedup as vanilla Parallel SGD in terms of convergence rate. After that, \cite{assran2019stochastic} comes out to extend DSGD to directed topologies. A recent work \cite{koloskova2020unified} proposes a unified framework to analyze DSGD  with changing topologies and local updates. \cite{lian2018asynchronous,luo2020prague} extend DSGD to the asynchronous setting. Various communication-efficient techniques can be further integrated into DSGD such as periodic  updates \cite{stich2019local,koloskova2020unified,yu2019linear}, communication compression \cite{alistarh2017qsgd,bernstein2018signsgd,koloskova2019decentralized,koloskova2019decentralized2,tang2019doublesqueeze}, and lazy communication \cite{chen2018lag,liu2019communication}. 

\vspace{1mm}
\noindent \textbf{Momentum SGD training.} Momentum SGD  have been extensively studied due to their empirical success in deep learning. The works \cite{loizou2020momentum,yuan2016influence,gitman2019understanding,liu2020improved,yu2019linear} establish that momentum SGD converges at least as fast as SGD. Momentum SGD for overparameterized models are shown to converge faster than SGD asymptotically \cite{sebbouh2020convergence}.  The exploration in decentralized momentum SGD (DmSGD) is relatively limited.  \cite{assran2019stochastic} proposes a widely-used DmSGD approach in which a local momentum SGD step is updated first before the partial averaging is conducted (see Algorithm \ref{Algorithm: DmSGD}). This approach is extended by  \cite{singh2020squarm,gao2020periodic} to involve communication quantization and periodic local updates. Another work (Doubly-averaging DmSGD, or DA-DmSGD) \cite{yu2019linear} imposes an additional partial averaging over momentum to increase stability. \cite{wang2019slowmo} proposes a slow
momentum (SlowMo) framework, where each node periodically synchronize and
perform a momentum update. \cite{balu2020decentralized} proposes a new variant in which the partial-averaging step is mixed up with the local momentum SGD update. All these decentralize momentum methods were studied and tested with {\em small} batch sizes. 

\vspace{1mm}
\noindent \textbf{Large-batch training.} The main challenge to use large-batch lies in the generalization performance degradation. Recent works in large-batch training centered on adaptive learning rate strategies to enhance accuracy performance. For example, Adam and its variants \cite{kingma2014adam, reddi2019convergence} adjust the learning rate based on the gradient variance, 
and \cite{goyal2017accurate} utilizes learning rate warm-up and linear scaling to boost the performance in large-batch scenario. The layer-wise adaptive rate scaling \cite{you2017large,you2019large}  can reduce the training time of Resnet-50 and BERT from days to hours. However, the study of large-batch training in decentralized algorithms is quite limited. This paper does not focus on the adaptive rate strategy for decentralized algorithms. Instead, we target on clarifying why existing momentum methods result in an intrinsic convergence bias, and how to update momentum properly to remove such bias and hence improve the performance in the large-batch scenario.

\vspace{1mm}
\noindent \textbf{Decentralized methods on heterogeneous data.} Decentralized large-batch training within data-centers shares the same essence with decentralized training on heterogeneous data for EdgeAI applications. In large-batch training, the stochastic bias caused by gradient noise gets significantly reduced and the inconsistency bias caused by data heterogeneity will become dominant. \cite{tang2018d,lu2019gnsd,yuan2020influence,xin2020improved} proposed decentralized stochastic primal-dual algorithms to remedy the influence of data heterogeneity. However, none of these algorithms, due to their difficulty to be integrated with momentum acceleration, show strong effective empirical performances in commenly-used deep learning models such as ResNet-50 or EfficientNet. A concurrent work \cite{lin2021quasi} proposes Quasi-Global momentum (QG-DmSGD), which locally approximates the global optimization direction, to mitigate the affects of heterogeneous data.
It is worth noting that while DecentLaM is proposed for the large-batch setting within data-centers, it is also suitable for  EdgeAI applications where inconsistency bias resulted from heterogeneous data dominates.


\section{Decentralized Momentum SGD}
\label{sec:DmSGD}
\noindent \textbf{Problem.} Suppose $n$ computing nodes collaborate to solve the distributed optimization problem:
\begin{align}\label{dist-opt}
\min_{x \in \mathbb{R}^d}\ f(x)=\frac{1}{n}\sum_{i=1}^n [f_i(x): = \mathbb{E}_{\xi_i \sim D_i} F(x;\xi_i)]
\end{align}
where $f_i(x)$ is local to node $i$, and random variable $\xi_i$ denotes the local data that follows distribution $D_i$. Each node $i$ can locally evaluate stochastic gradient $\nabla F(x;\xi_i)$; it must communicate to access information from other nodes.

\vspace{1mm}
\noindent \textbf{Notation.} We let $[n] := \{1,\cdots,n\}$, and $\mathds{1} \in \mathbb{R}^d$ be a vector with each element being $1$. We let $\lambda_i(A)$ denote the $i$-th largest eigenvalue of matrix $A$.

\vspace{1mm}
\noindent \textbf{Network topology and weights.} Decentralized methods are based on partial averaging within neighborhood that is defined by the network topology. We assume all computing nodes are connected by an undirected network topology. Such connected  topology can be of any shape, but its degree and connectivity will affect the communication efficiency and convergence rate of the decentralized algorithm. For a given topology, we define $w_{ij}$, the weight to scale information flowing from node $j$ to node $i$, as follows:
\begin{align}\label{wij}
w_{ij}
\begin{cases}
> 0 & \mbox{if node $j$ is connected to $i$, or $i=j$;} \\
= 0 & \mbox{otherwise.}\vspace{-2mm}
\end{cases}
\end{align}
We further define $\mathcal{N}_i:=\{j|w_{ij} > 0\}$ as the set of neighbors of node $i$ which also includes node $i$ itself. We define {\em weight matrix} $W := [w_{ij}]_{i,j=1}^{n} \in \mathbb{R}^{n\times n}$ to stack all weights into a matrix. Such matrix $W$ will characterize the sparsity and connectivity of the underlying network topology. An example of the topology and its associated weight matrix $W$ is illustrated in Fig.~\ref{fig:decen_comm}.

\vspace{1mm}
\noindent \textbf{Partial averaging.}
With weights $\{w_{ij}\}$ and the set of neighbors $\mathcal{N}_i$, the neighborhood partial averaging operation of node $i$ can be expressed as 
\begin{align}\label{partial-ave}
\hspace{-10mm}
\mbox{Partial averaging:}\quad x_i^{+} \leftarrow \sum_{j\in \mathcal{N}_i}w_{ij} x_j.
\end{align}
Partial averaging has much lower communication overheads. When the network topology is sparse, partial averaging typically incurs $O(1)$ latency plus $O(1)$ bandwidth cost, which are independent of the number of computing nodes $n$. Consequently, decentralized methods are more communication efficient than those based on global averaging.

\vspace{1mm}
\noindent \textbf{Decentralized SGD (DSGD).} Given a connected network topology and weights $\{w_{ij}\}$, each node $i$ in DSGD will iterate in parallel as follows: 
\begin{align}
x_i^{(k+\frac{1}{2})} &= x_i^{(k)} - \gamma \nabla F(x_i^{(k)}; \xi_i^{(k)})\quad \mbox{(local update)} \label{dsgd-1}\\
x_i^{(k+1)} &= \sum_{j\in \mathcal{N}_i} w_{ij}\, x_j^{(k+\frac{1}{2})} \hspace{0.8cm} \mbox{(partial averaging)} \label{dsgd-2}
\end{align}
where $x_i^{(k)}$ is the local model of node $i$ at iteration $k$, $\xi_i^{(k)}$ is the realization of $\xi_i$ at iteration $k$, and $\gamma$ is the learning rate. When the network topology is fully connected and $w_{ij} = 1/n$, DSGD will reduce to the  Parallel SGD algorithm. 

\begin{algorithm}[t]
 	\DontPrintSemicolon
 	\KwRequire{Initialize $\gamma$, $x^{(0)}_{i}$; let  $m^{(0)}_{i}\hspace{-0.3mm}=\hspace{-0.3mm}0, \beta\hspace{-0.3mm}\in\hspace{-0.3mm}(0,1)$}

    \vspace{1mm}
 	\For{$k=0, 1,2,...,T-1$, every node $i$}{
 		Sample $\xi^{(k)}_{i}$ and update $\tilde{g}_i^{(k)} \hspace{-0.8mm}=\hspace{-0.8mm} {\nabla}F(x^{(k)}_{i}\hspace{-0.4mm};\xi^{(k)}_{i})$ \;
 		
 		$m^{(k+1)}_i = \beta m^{(k)}_{i} + \tilde{g}_i^{(k)} \hspace{0.7cm} \triangleright \mbox{\footnotesize{momentum update}}$ \;
 		
 		$x^{(k+\frac{1}{2})}_i = x^{(k)}_{i} - \gamma m^{(k+1)}_i \hspace{0.38 cm} \triangleright \mbox{\footnotesize{local model update}}$ \;
 		
 		$x^{(k+1)}_i  = \sum_{j\in \mathcal{N}_i} w_{ij} x^{(k+\frac{1}{2})}_j \hspace{1.1mm} \triangleright \mbox{\footnotesize{partial average}}$\;
 	}
 	\caption{DmSGD}
 	\label{Algorithm: DmSGD}
 \end{algorithm}

\vspace{1mm}
\noindent \textbf{Decentralized momentum SGD (DmSGD).} Being the momentum accelerated extension of DSGD, DmSGD has been widely-used in existing literatures \cite{lian2018asynchronous,assran2019stochastic,singh2020squarm, gao2020periodic,yu2019linear,balu2020decentralized}. The primary version of DmSGD is listed in Algorithm \ref{Algorithm: DmSGD}. 
When small-batch is used, DmSGD will achieve $1.3\sim2\times$ speedup in training time compared to PmSGD without visibly loss of generalization performance. 

\vspace{1mm}
\noindent \textbf{Assumptions.} We introduce several standard assumptions to facilitate future analysis:

\vspace{0.5mm}
\noindent \textbf{A.1} Each $f_i(x)$ is $L$-smooth, i.e., $\|\nabla f_i(x) - \nabla f_i(y)\| \le L \|x - y\|$ for any $x,y\in \mathbb{R}^d$.

\vspace{0.5mm}
\noindent \textbf{A.2} The random sample  $\xi_i^{(k)}$ is independent of each other for any $k$ and $i$. We also assume each stochastic gradient is unbiased and has bounded variance, i.e., $\mathbb{E}[{\nabla}F(x;\xi_{i})] = \nabla f_i(x)$ and $\mathbb{E}\|{\nabla}F(x;\xi_{i}) - \nabla f_i(x) \|^2 \le  \sigma^2$.

\vspace{0.5mm}
\noindent \textbf{A.3} The network topology is strongly connected, and the weight matrix is symmetric and satisfies $W\mathds{1} = \mathds{1}$.

\vspace{1mm} 
\noindent Assumption A.3 indicates  $\sum_{j\in\mathcal{N}_i}w_{ij} = 1$ for $i\in[n]$, which is critical to guarantee the partial averaging \eqref{partial-ave} to  converge to the global averaging asymptotically. The weight matrix satisfying Assumption A.3 can be easily constructed, see \cite[Table~14.1]{sayed2014adaptation}.

\section{DmSGD Incurs Severe Inconsistency Bias}

Table \ref{talbe:motivation} shows that large-batch DmSGD has severe performance degradation compared to PmSGD in the large-batch scenario.  This section targets to explore the reason behind this phenomenon. To highlight the insight, we assume each $f_i(x)$ to be strongly convex in this section, and $x^\star$ to be the global solution to problem \eqref{dist-opt}. We let matrix $\vx = [x_1, \cdots, x_n]^T \in \mathbb{R}^{n\times d}$ to stack all local models across the network, and matrix $\nabla f(\vx) = [\nabla f(x_1), \cdots, \nabla f(x_n)]^T \in \mathbb{R}^{n\times d}$ to stack all local (accurate) gradients.

\vspace{1mm}
\noindent \textbf{Limiting bias of decentralized methods.} The convergence of decentralized methods such as DSGD and DmSGD will suffer from two sources of bias:
\begin{itemize}
    \item \textbf{Stochastic bias} is caused by the utilization of stochastic gradient in algorithms.
    
    \item \textbf{Inconsistency bias} is caused by the data  inconsistency between nodes. 
\end{itemize}

\noindent In addition, these two bias are orthogonal to each other, i.e., 
\begin{align}
\boxed{\lim_{k\to \infty} \sum_{i=1}^n \mathbb{E}\|x_i^{(k)} - x^\star\|^2 = \mbox{sto. bias} + \mbox{inconsist. bias}} \nonumber 
\end{align}
An example illustrating the stochastic and inconsistency bias of DSGD is established in Appendix \ref{app-limit-bias-dsgd}.

\vspace{1mm}
\noindent \textbf{Inconsistency bias dominates large-batch scenario.} In the large-batch scenario, the gradient nosie will be notably reduced. In an extreme case where a full-batch gradient is utilized, the stochastic bias becomes zero. This leads to
\begin{proposition}\label{prop-limit-bias}
The inconsistency bias dominates the convergence of {\em large-batch} decentralized algorithms.
\end{proposition}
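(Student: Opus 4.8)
The statement is qualitative, so the plan is to make the boxed decomposition precise for a concrete decentralized method and then read off the batch-size dependence of each term. I would carry this out for plain DSGD \eqref{dsgd-1}--\eqref{dsgd-2}; the momentum case (Algorithm~\ref{Algorithm: DmSGD}) then follows by the same argument applied to the augmented state $(\vx^{(k)},\vm^{(k)})$, and the appendix example (Appendix~\ref{app-limit-bias-dsgd}) already supplies the prototype. Write the iteration in stacked form $\vx^{(k+1)} = W\bigl(\vx^{(k)} - \gamma\,\nabla F(\vx^{(k)};\xi^{(k)})\bigr)$ and split $\nabla F(\vx^{(k)};\xi^{(k)}) = \nabla f(\vx^{(k)}) + \bs^{(k)}$, where $\bs^{(k)}$ is zero-mean and, by A.2, $\EE\|\bs^{(k)}\|^2 \le n\sigma^2/B$ when every node draws a mini-batch of size $B$. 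The batch size enters the whole analysis only through this bound.

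First I would analyze the noiseless recursion $\bar\vx^{(k+1)} = W\bigl(\bar\vx^{(k)} - \gamma\,\nabla f(\bar\vx^{(k)})\bigr)$. Under A.1 and the strong-convexity assumption, for $\gamma$ small enough this map is a contraction with a unique fixed point $\vx^{\infty}$ satisfying $\vx^{\infty} = W(\vx^{\infty} - \gamma\,\nabla f(\vx^{\infty}))$. Left-multiplying by $\mathds{1}^T$ and using $W\mathds{1}=\mathds{1}$ gives $\sum_{i=1}^n \nabla f_i(x_i^{\infty}) = 0$, so $\vx^{\infty}$ is in general neither consensual nor equal to $\mathds{1}(x^\star)^T$; expanding around $x^\star$ expresses $\vx^{\infty} - \mathds{1}(x^\star)^T$ through a pseudoinverse of $I-W$ acting on the heterogeneity matrix $[\nabla f_1(x^\star),\dots,\nabla f_n(x^\star)]^T$. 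The resulting quantity $\|\vx^{\infty} - \mathds{1}(x^\star)^T\|^2$ is strictly positive whenever the local gradients disagree at $x^\star$, and it is determined solely by $W$, $\gamma$, and $\{\nabla f_i(x^\star)\}$ --- it does not involve $\sigma$ or $B$. This is the inconsistency bias.

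Second I would bound the stochastic fluctuation by tracking $a_k := \EE\|\vx^{(k)} - \vx^{\infty}\|^2$ directly. Subtracting the two recursions and using the contraction of the linearized map together with independence of $\bs^{(k)}$ from the past yields a single Lyapunov inequality $a_{k+1} \le \rho\, a_k + c\,\gamma^2 \|W\|^2\, n\sigma^2/B$ with $\rho = \rho(\gamma,L,\mu,\lambda_2(W)) < 1$. Summing the geometric series gives $\limsup_k a_k = \mathcal{O}(\sigma^2/B)$, with the implied constant depending on $\gamma,L,\mu,\lambda_2(W)$; the independence of $\bs^{(k)}$ is exactly what makes the cross term vanish in the limit, i.e.\ the orthogonality asserted in the boxed equation. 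Combining the two steps,
\[
\lim_{k\to\infty}\sum_{i=1}^n \EE\|x_i^{(k)}-x^\star\|^2 \;=\; \mathcal{O}(\sigma^2/B) \;+\; \bigl\|\vx^{\infty}-\mathds{1}(x^\star)^T\bigr\|^2,
\]
where the first term is the stochastic bias and the second is the inconsistency bias. Since only the first term decreases with $B$, in the large-batch limit $B\to\infty$ (equivalently, full-batch gradients) the stochastic bias goes to $0$ while the inconsistency bias is unchanged; hence the latter dominates.

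I expect the main obstacle to be the second step: establishing the $\sigma^2/B$ steady-state bound while simultaneously certifying that the cross term between $\vx^{(k)}-\vx^{\infty}$ and the accumulated noise is asymptotically negligible, since these quantities are correlated along the trajectory. Tracking the single Lyapunov quantity $a_k$ circumvents this, but it requires care to keep $\rho<1$ for the chosen $\gamma$ and to extract the correct constant; for DmSGD the same inequality holds on $(\vx^{(k)},\vm^{(k)})$ with a momentum-dependent contraction factor and a momentum-amplified inconsistency term, which is precisely the effect this paper sets out to remove.
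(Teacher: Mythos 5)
Your proposal is correct and follows essentially the same route as the paper: the paper justifies this proposition informally by citing the known DSGD limiting-bias decomposition (Appendix \ref{app-limit-bias-dsgd}) into a stochastic term that scales with the gradient-noise variance and an inconsistency term that does not, and by characterizing the inconsistency bias through the deterministic fixed point of the recursion exactly as you do. Your contraction/Lyapunov derivation of the $\mathcal{O}(\sigma^2/B)$ steady-state fluctuation is simply a self-contained version of the bound the paper imports from prior work, so there is no substantive difference in approach.
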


\vspace{1mm}
\noindent \textbf{DmSGD's inconsistency bias: intuition.}
\label{sec:DmSG-bias}
The inconsistency bias can be achieved by letting gradient noise be zero. To this end, we let $g_i^{(k)} = \mathbb{E}[{\nabla}F(x^{(k)}_{i}\hspace{-0.4mm};\xi^{(k)}_{i})] = \nabla f_i(x_i^{(k)})$ be the full-batch gradient.
Note that both model $x_i$ and gradient $g_i$ are deterministic due to the removal of gradient noise. By substituting the momentum update and local model update into the partial averaging step in Algorithm \ref{Algorithm: DmSGD}, we can write DmSGD (see Appendix \ref{app-reform-dsgd}) into: 
\begin{align}\label{xsdhs7}
\vx^{(k+1)} &= \underbrace{W \Big( \vx^{(k)} - \gamma \nabla f(\vx^{(k)}) \Big)}_{\rm DSGD} + \underbrace{\beta \Big( \vx^{(k)} - W \vx^{(k-1)}\Big)}_{\rm momentum}.
\end{align}
Since the momentum term in DmSGD \eqref{xsdhs7} {\em cannot vanish} as $k$ increases, it will impose an extra inconsistency bias to DSGD. A simple numerical simulation on a full-batch linear regression problem confirms such conclusion. It is observed in Fig.~\ref{fig:ls-dsgd-dmsgd} that DmSGD converges faster but suffers from a larger bias than DSGD. 
\begin{figure}[t!]
    \centering
    \includegraphics[width=0.35\textwidth]{./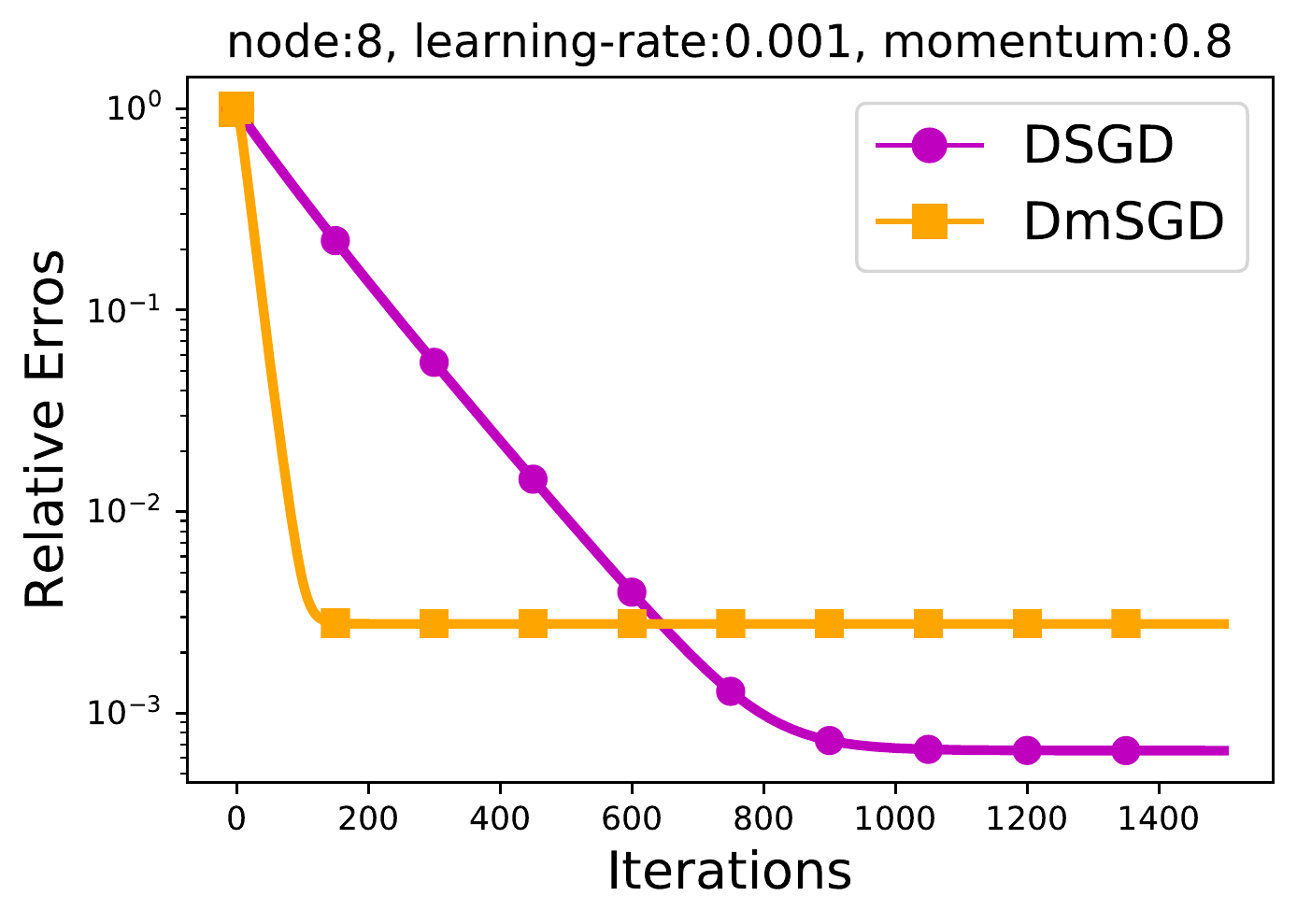}
    \caption{Convergence comparison between DSGD and DmSGD for a full-batch linear regression problem. Detailed experimental setting is in Appendix \ref{app-exp-linear-regression}.}
    \label{fig:ls-dsgd-dmsgd} \vspace{-2mm}
    \vspace{-2mm}
\end{figure}

\vspace{1mm}
\noindent \textbf{DmSGD's inconsistency bias: magnitude.}
The following proposition quantitatively evaluates the magnitude of DmSGD's inconsistency bias. We let $\rho = \max\{|\lambda_2(W)|,|\lambda_n(W)|\}$, which characterize how well the network is connected. Since $W$ is doubly stochastic (Assumption A.3), it holds that $\rho \in (0,1)$ (see Appendix \ref{app-pre}). A well connected network will have $\rho \to 0$.

\begin{proposition}\label{prop-dmsdg-bias}
Under Assumptions A.1 and A.3, if each $f_i(x)$ is further assumed to be strongly-convex, and the full-batch gradient $\nabla f_i(x)$ is accessed per iteration,  then DmSGD \eqref{xsdhs7} has the following inconsistency bias:
\begin{align}\label{DmSGD-hb}
\lim_{k\to \infty} \sum_{i=1}^n \|x_i^{(k)} - x^\star\|^2 = O\Big( \frac{\gamma^2 b^2}{(1-\beta)^2(1-\rho)^2} \Big),
\end{align}
where $b^2 = (1/n)\sum_{i=1}^n \|\nabla f_i(x^\star)\|^2$ denotes the data inconsistency between nodes, and $\beta$ is the momentum coefficient. (Proof is in Appendix \ref{app-inconsist-bias-dsgd})
\end{proposition}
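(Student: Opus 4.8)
The plan is to analyze the linear recursion \eqref{xsdhs7} directly in the limit, exploiting the fact that in the full-batch deterministic regime the entire trajectory is governed by a fixed linear-plus-gradient map. First I would pass to the mean-subtracted coordinates: write $\bar{x}^{(k)} = (1/n)\mathds{1}^T\vx^{(k)}$ for the network average and decompose each iterate as $\vx^{(k)} = \mathds{1}(\bar{x}^{(k)})^T + \vx_{\perp}^{(k)}$, where $\vx_{\perp}^{(k)}$ lives in the orthogonal complement of $\mathrm{span}(\mathds{1})$. Because $W\mathds{1}=\mathds{1}$ and $W$ is symmetric (Assumption A.3), $W$ acts on the perp-space as a contraction with factor at most $\rho = \max\{|\lambda_2(W)|,|\lambda_n(W)|\} \in (0,1)$. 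Projecting \eqref{xsdhs7} onto $\mathrm{span}(\mathds{1})^{\perp}$ gives a recursion of the form
\begin{align}
\vx_{\perp}^{(k+1)} = W\vx_{\perp}^{(k)} - \gamma (I - \tfrac{1}{n}\mathds{1}\mathds{1}^T) W \nabla f(\vx^{(k)}) + \beta\big(\vx_{\perp}^{(k)} - W \vx_{\perp}^{(k-1)}\big), \nonumber
\end{align}
which I would rewrite as a first-order system in the stacked variable $(\vx_{\perp}^{(k)}, \vx_{\perp}^{(k-1)})$.

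Next I would take the limit. Since each $f_i$ is strongly convex and $L$-smooth, standard arguments (e.g. the descent/consensus Lyapunov analysis already used elsewhere in decentralized optimization) show that for $\gamma$ small enough the recursion \eqref{xsdhs7} converges to a fixed point $\vx^{(\infty)}$; this is where the orthogonality claim "sto. bias $+$ inconsist. bias" is being used, here with zero stochastic part. At the fixed point the system becomes static: subtracting the consensus component, the limiting consensus error $\vx_{\perp}^{(\infty)}$ satisfies
\begin{align}
(I - W)\vx_{\perp}^{(\infty)} = -\gamma (I - \tfrac{1}{n}\mathds{1}\mathds{1}^T) W \nabla f(\vx^{(\infty)}) + \beta (W - I)\vx_{\perp}^{(\infty)}, \nonumber
\end{align}
so $(1+\beta)(I-W)\vx_{\perp}^{(\infty)} = -\gamma\, \Pi W \nabla f(\vx^{(\infty)})$ with $\Pi = I - \tfrac1n\mathds{1}\mathds{1}^T$. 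Inverting $(I-W)$ on the perp-space (where its smallest singular value is at least $1-\rho$) yields the bound $\|\vx_{\perp}^{(\infty)}\| = O\big(\gamma \|\nabla f(\vx^{(\infty)})\| / ((1+\beta)(1-\rho))\big)$. The factor $1+\beta$ in the denominator is harmless; the claimed $(1-\beta)^{-2}$ will instead come from bounding $\|\nabla f(\vx^{(\infty)})\|$ itself, since the fixed point of the averaged dynamics is displaced from $x^\star$ by an amount controlled by the momentum. Concretely, I would show the consensus average $\bar{x}^{(\infty)}$ satisfies a perturbed stationarity condition in which the gradient-scaling is effectively $\gamma/(1-\beta)$ (momentum geometrically amplifies the step), and combine this with $\nabla f_i(x_i^{(\infty)}) = \nabla f_i(x^\star) + O(L\|x_i^{(\infty)} - x^\star\|)$ and $\|x_i^{(\infty)} - x^\star\| = O(\|\bar{x}^{(\infty)} - x^\star\| + \|\vx_{\perp}^{(\infty)}\|)$ to close the estimate, identifying the leading term as $b^2 = (1/n)\sum_i \|\nabla f_i(x^\star)\|^2$. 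Squaring gives $\sum_i \|x_i^{(\infty)} - x^\star\|^2 = O\big(\gamma^2 b^2 / ((1-\beta)^2(1-\rho)^2)\big)$.

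The main obstacle I anticipate is making the dependence on $\beta$ tight — specifically showing that momentum contributes a clean $(1-\beta)^{-1}$ factor to the effective step size rather than merely a bounded constant. This requires carefully tracking how the momentum term $\beta(\vx^{(k)} - W\vx^{(k-1)})$ accumulates at stationarity; a naive bound only sees the $1+\beta$ factor above and misses the amplification. The cleanest route is probably to eliminate the momentum variable entirely — unroll $m_i^{(k)} = \sum_{t} \beta^{k-t} \tilde g_i^{(t)}$, so that at the fixed point $m_i^{(\infty)} = \nabla f_i(x_i^{(\infty)})/(1-\beta)$ — and then observe that DmSGD's fixed point coincides with that of DSGD run with effective learning rate $\gamma/(1-\beta)$, at which point one can either invoke the known DSGD inconsistency-bias bound (cf. Appendix \ref{app-limit-bias-dsgd}) with $\gamma \mapsto \gamma/(1-\beta)$, or redo the $(I-W)$-inversion argument above with that substitution. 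A secondary technical point, which I would handle but not dwell on, is justifying existence and uniqueness of the fixed point and the convergence $\vx^{(k)} \to \vx^{(\infty)}$: strong convexity plus small $\gamma$ makes the combined map a contraction in a suitable norm on the product space, and uniqueness of the fixed point then makes the limiting bias well-defined.
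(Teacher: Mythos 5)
Your overall strategy (pass to the fixed point of the deterministic recursion, invert $I-W$ on the complement of $\mathrm{span}(\mathds{1})$, and bound the residual gradient by $b$ plus smoothness terms) is the same as the paper's. But the main line of your argument contains a sign error that destroys the key $(1-\beta)$ factor, and the repair you propose for it would not work. Setting all iterates equal in the projected recursion $\vx_{\perp}^{(k+1)} = W\vx_{\perp}^{(k)} - \gamma (I-\tfrac1n\mathds{1}\mathds{1}^T) W \nabla f(\vx^{(k)}) + \beta(\vx_{\perp}^{(k)} - W\vx_{\perp}^{(k-1)})$ gives $(I-W)\vx_{\perp}^{(\infty)} = -\gamma (I-\tfrac1n\mathds{1}\mathds{1}^T) W\nabla f(\vx^{(\infty)}) + \beta(I-W)\vx_{\perp}^{(\infty)}$, i.e. $(1-\beta)(I-W)\vx_{\perp}^{(\infty)} = -\gamma (I-\tfrac1n\mathds{1}\mathds{1}^T) W\nabla f(\vx^{(\infty)})$ --- not $(1+\beta)$. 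The entire $(1-\beta)^{-2}$ in the bound comes from this one equation; it is not "harmless." Your fallback plan --- recovering $(1-\beta)^{-2}$ from a "perturbed stationarity condition" on the average $\bar{x}^{(\infty)}$ --- fails, because left-multiplying the fixed-point equation by $\tfrac1n\mathds{1}^T$ gives $\tfrac1n\mathds{1}^T\nabla f(\vx^{(\infty)}) = 0$ exactly, with no $\beta$-dependence whatsoever: the average gradient vanishes at the fixed point, and $\|\nabla f(\vx^{(\infty)})\|$ itself is only $O(\sqrt{n}\,b + L\|\vx^{(\infty)}-\vx^\star\|)$ with no momentum amplification. So the argument as primarily written would prove the (false, too strong) bound $O(\gamma^2 b^2/(1-\rho)^2)$ for DmSGD.

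The "cleanest route" you sketch in your final paragraph is, however, exactly right and is essentially what the paper does: at the fixed point $m_i^{(\infty)} = \nabla f_i(x_i^{(\infty)})/(1-\beta)$, so the DmSGD fixed point satisfies $(1-\beta)(I-W)\vx^{(\infty)} = -\gamma W\nabla f(\vx^{(\infty)})$, i.e. it coincides with the DSGD fixed point under the substitution $\gamma \mapsto \gamma/(1-\beta)$, and the DSGD inconsistency bound then yields the claim. Had you led with that and discarded the perp-space derivation, the proposal would be correct. One further step you gesture at but should make explicit (and which the paper carries out): closing the loop between $\|\vx^{(\infty)}-\bar{\vx}^{(\infty)}\|$ and $\|\bar{x}^{(\infty)}-x^\star\|$ requires strong convexity to get $\sqrt{n}\|\bar{x}^{(\infty)}-x^\star\| \le \tfrac{2L}{\mu}\|\vx^{(\infty)}-\bar{\vx}^{(\infty)}\|$, after which a small-$\gamma$ condition absorbs the $\gamma L$ self-terms and leaves the $\sqrt{n}\gamma b/((1-\beta)(1-\rho))$ leading order.
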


\noindent 
Note that we do not take expectation over $\sum_{i=1}^n \|x_i^{(k)} - x^\star\|^2$ in \eqref{DmSGD-hb} because no gradient noise exists in recursion \eqref{xsdhs7}. Recall from Appendix \ref{app-limit-bias-dsgd} that DSGD has an inconsistency bias on the order of $O(\gamma^2 b^2/(1-\rho)^2)$. Comparing it with \eqref{DmSGD-hb}, it is observed that the momentum term in DmSGD is essentially amplifying the inconsistency bias by a margin of $1/(1-\beta)^2$. This can explain why DmSGD converges less accurate than DSGD as illustrated in Fig.~\ref{fig:ls-dsgd-dmsgd}. Noting that $\beta$ is typically set close to $1$ in practice, DmSGD can suffer from a significantly large inconsistency bias. Since inconsistency bias dominates the large-batch scenario (see Proposition \ref{prop-limit-bias}),  DmSGD is thus observed to have  severely deteriorated performance as illustrated in Table \ref{talbe:motivation}.

\section{Improving Inconsistency Bias: DecentLaM}\label{sec:decentLaM}
To improve large-batch DmSGD's accuracy, we have to reduce the influence of momentum on inconsistency bias.

\vspace{1mm}
\noindent \textbf{DSGD interprets as standard SGD.} Without loss of generality, we assume each node $i$ can access the accurate gradient $\nabla f_i(x)$ in DSGD recursions \eqref{dsgd-1}--\eqref{dsgd-2}. In this scenario, DSGD can be rewritten into a compact form:
\begin{align}\label{dsgd-compact}
\vx^{(k+1)} = W\big(\vx^{(k)} - \gamma \nabla f(\vx^{(k)})\big).
\end{align}
To simplify the derivation, we assume the weight matrix $W$, in addition to satisfying Assumption A.3, is positive-definite. With this assumption, $W$ can be eigen-decomposed into $W = U\Lambda U^T$ where $U$ is an orthogonal matrix and $\Lambda$ is a positive definite matrix. We define $W^{\frac{1}{2}} = U \Lambda^{\frac{1}{2}}U^T$ so that $W^{\frac{1}{2}}$ is also positive-definite and $W = W^{\frac{1}{2}}\times W^{\frac{1}{2}}$. We next introduce $\vx = W^{\frac{1}{2}} \bs$ and hence $\bs = W^{-\frac{1}{2}} \vx$. Left-multiplying $W^{-\frac{1}{2}}$ to both sides of \eqref{dsgd-compact}, 
\begin{align}
\bs^{(k+1)} 
&= W\bs^{(k)} - \gamma W^{\frac{1}{2}} \nabla f(W^{\frac{1}{2}} \bs^{(k)}) \nonumber \\
&= W\bs^{(k)} - \gamma \nabla_{\bs} f(W^{\frac{1}{2}} \bs^{(k)}) \nonumber \\
&= \bs^{(k)} - \gamma \big( \nabla_{\bs} f(W^{\frac{1}{2}} \bs^{(k)}) + \frac{1}{\gamma}(I-W)\bs^{(k)} \big) \label{dsgd-s-recursion}
\end{align}
In other words, DSGD \eqref{dsgd-compact} (with constant $\gamma$) can be interpreted as a standard SGD \eqref{dsgd-s-recursion} (in terms of $\bs$) to solve:
\begin{align}\label{s-prob}
\min_{\bs} \quad f(W^{\frac{1}{2}}\bs) + \frac{1}{2\gamma} \|\bs\|^2_{I-W},
\end{align}
and $\vx^{(k)}$ can be achieved by $\vx^{(k)} = W^{\frac{1}{2}}\bs^{(k)}$ per iteration. When recursion \eqref{dsgd-s-recursion} approaches the solution $\bs^o$ to problem \eqref{s-prob}, DSGD \eqref{dsgd-compact} will achieve the fixed point $\vx^o = W^{\frac{1}{2}}\bs^{o}$. The inconsistency bias of DSGD is essentially the distance between $\vx^o$ and $\vx^\star = [x^\star,\cdots, x^\star]\in \RR^{n\times d}$ where $x^\star$ is the global solution to problem \eqref{dist-opt}. Since  $\vx^o\neq \vx^\star$, such inconsistency bias generally exists in DSGD. 

\begin{figure}[t!]
    \centering
    \includegraphics[width=0.35\textwidth]{./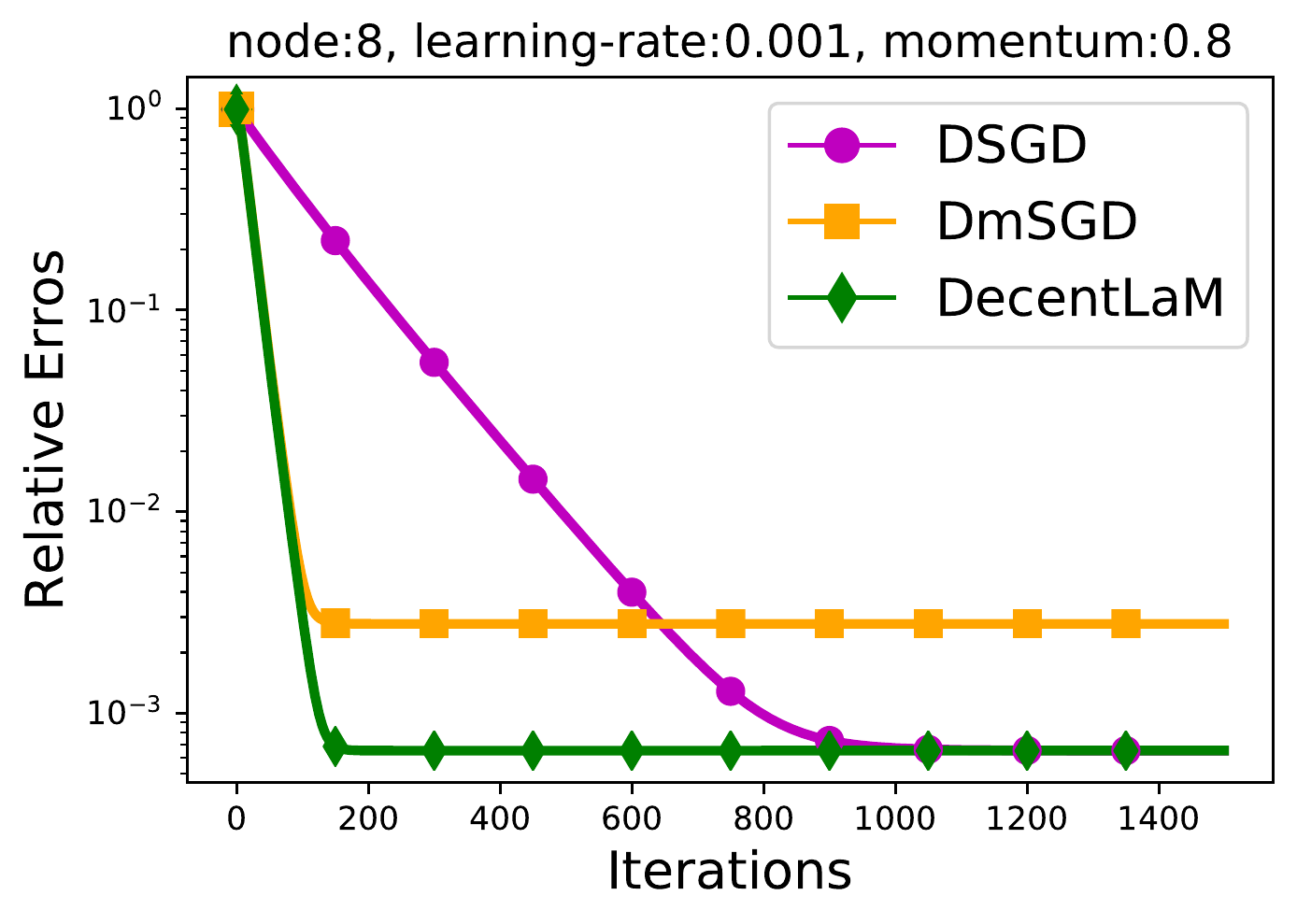}
    \caption{Convergence comparison between DSGD, DmSGD and decentLaM for a full-batch linear regression problem.}
    \label{fig:ls-dsgd-dmsgd-decentLaM} \vspace{-2mm}
\end{figure}

\begin{remark}
The interpretation of DSGD \eqref{dsgd-compact}, which is  usually named as the adaptation-then-combination (ATC) version of DSGD, as standard SGD \eqref{dsgd-s-recursion} is novel. Existing literatures \cite{yuan2016convergence,balu2020decentralized} can bridge the adaptation-with-combination (AWC) version of DSGD, i.e., $\vx^{(k+1)} = W\vx^{(k)} - \gamma \nabla f(\vx^{(k)})$, with standard SGD, but not for ATC-DSGD. ATC-DSGD can enable larger learning rates and has a better convergence performance than AWC-DSGD \cite[Secs.~10.6 and 11.5]{sayed2014adaptation}, \cite{yuan2018exact2,li2017decentralized}, and is hence more widely used in deep training  \cite{lian2018asynchronous,assran2019stochastic,tang2018d,yu2019linear,gao2020periodic,singh2020squarm,lin2021quasi,koloskova2019decentralized}.
\end{remark}

\vspace{1mm}
\noindent \textbf{DecentLaM development.} 
It is established in Proposition \ref{prop-dmsdg-bias} that the momentum term in DmSGD can significantly amplify the inconsistency bias especially when $\beta \to 1$. 
This section proposes DecentLaM to remove the negative influence of momentum in DmSGD.  

Since DSGD can be interpreted as a standard SGD, we can easily integrate it to the existing momentum acceleration techniques  \cite{loizou2020momentum,yuan2016influence,gitman2019understanding,liu2020improved,yu2019linear}. By letting $\vg_s^{(k)}$ be the standard gradient of problem \eqref{s-prob} at iteration $k$, the standard momentum SGD approach to solving problem \eqref{s-prob} is 
\begin{align}
\vg_s^{(k)} &=  \nabla_{\bs} f(W^{\frac{1}{2}} \bs^{(k)}) + \frac{1}{\gamma}(I-W)\bs^{(k)} \label{decentLaM-s-1}\\
\vm^{(k+1)}_{s} &= \beta \vm^{(k)}_{s} + \vg_s^{(k)} \label{decentLaM-s-2}\\
\bs^{(k+1)} &= \bs^{(k)} - \gamma \vm^{(k+1)}_{s} \label{decentLaM-s-3}
\end{align}
After achieving $\bs^{(k+1)}$, we get $\vx^{(k+1)} = W^{\frac{1}{2}} \bs^{(k+1)}$. Recursions \eqref{decentLaM-s-1}--\eqref{decentLaM-s-3} are  named as DecentLaM, and we simulate their convergence behaviour under the same setting as in Fig.~\ref{fig:ls-dsgd-dmsgd}. It is observed in  Fig.~\ref{fig:ls-dsgd-dmsgd-decentLaM} that the proposed algorithm converges as fast as DmSGD but to a more accurate solution. Since DecentLaM is with an improved inconsistency bias, it better fits into  the large-batch scenario.

\vspace{1mm}
\noindent \textbf{Efficient implementation.} To apply DecentLaM \eqref{decentLaM-s-1}-\eqref{decentLaM-s-3} to train deep neural networks,  we need two additional modifications: (i) the full-batch gradient $\nabla f_i(x)$ needs to be replaced with the stochastic gradient $\nabla F(x;\xi_i)$; (ii) the matrix $W^{\frac{1}{2}}$ should be removed from the algorithm update. To this end, by letting  $\vg=W^{\frac{1}{2}}\vg_{s}$, $\vm = W^{\frac{1}{2}}\vm_{s}$, and $\tilde{\vg}$ be the stochastic version of $\vg$, \eqref{decentLaM-s-1}-\eqref{decentLaM-s-3} can be rewritten as 
\begin{align}
\tilde{\vg}^{(k)} &= W \nabla F(\vx^{(k)};\bxi^{(k)}) + \frac{1}{\gamma}(I-W)\vx^{(k)} \label{decentLaM-x-1}\\
\vm^{(k+1)} &= \beta \vm^{(k)} + \tilde{\vg}^{(k)} \label{decentLaM-x-2}\\
\vx^{(k+1)} &= \vx^{(k)} - \gamma \vm^{(k+1)} \label{decentLaM-x-3}
\end{align}
where we have replaced $\nabla f(\vx^{(k)})$ with $\nabla F(\vx^{(k)};\bxi^{(k)})$ in \eqref{decentLaM-x-1} to enable the usage of stochastic gradient. The above  algorithm is almost the same as the vanilla momentum SGD within a single computing node, with the exception to construct $\vg^{(k)}$ as in \eqref{decentLaM-x-1}. The decentralized implementation of \eqref{decentLaM-x-1}-\eqref{decentLaM-x-3} is listed in Algorithm \ref{Algorithm: DecentLaM}, where 
\begin{align}\label{DecentLaM-gi}
\tilde{g}_i^{(k)} \hspace{-0.8mm}=\hspace{-0.8mm} \frac{1}{\gamma} x_i^{(k)} \hspace{-0.8mm}-\hspace{-0.8mm} \frac{1}{\gamma} \sum_{j\in\mathcal{N}_i}w_{ij}\big( x_j^{(k)} \hspace{-0.8mm}-\hspace{-0.8mm} \gamma \nabla F(x_j^{(k)}; \xi_j^{(k)})\big).
\end{align}
The momentum update and local model update in Algorithm \ref{Algorithm: DecentLaM} can be conducted via the mSGD optimizer provided by PyTorch or TensorFlow. As shown in Fig.~\ref{fig:workflow-decentLaM}, the dashed part, which is different from the DmSGD workflow, enables an efficient wait-free backpropagation (WFBP) implementation that can overlap communication and computation.

\begin{algorithm}[t!]
 	\DontPrintSemicolon
 	\KwRequire{Initialize $\gamma$, $x^{(0)}_{i}$; let  $m^{(0)}_{i}\hspace{-0.3mm}=\hspace{-0.3mm}0, \beta\hspace{-0.3mm}\in\hspace{-0.3mm}(0,1)$}

    \vspace{1mm}
 	\For{$k=0, 1,2,...,T-1$, every node $i$}{
 		Sample $\xi^{(k)}_{i}$ and update $\tilde{g}_i^{(k)}$ according to \eqref{DecentLaM-gi} \;
 		
 		$m^{(k+1)}_i = \beta m^{(k)}_{i} + \tilde{g}_i^{(k)} \hspace{0.6cm} \triangleright \mbox{\footnotesize{momentum update}}$ \;
 		
 		$x^{(k+1)}_i = x^{(k)}_{i} - \gamma m^{(k+1)}_i \hspace{0.35 cm} \triangleright \mbox{\footnotesize{local model update}}$ \;
 	}
 	\caption{DecentLaM}
 	\label{Algorithm: DecentLaM}
 \end{algorithm}



\begin{figure}[b!]
    \centering
    \includegraphics[width=0.5\textwidth]{./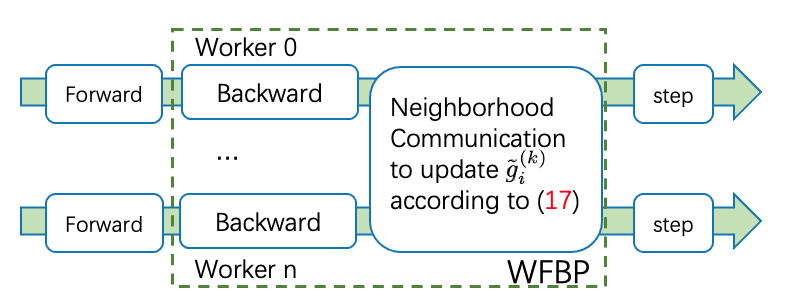}
    \caption{The workflow of DecentLaM in a training iteration.}
    \label{fig:workflow-decentLaM} \vspace{-2mm}
\end{figure}
 
\vspace{1mm}
\noindent \textbf{DecentLaM's inconsistency bias.} The following proposition quantitatively evaluates the magnitude of DecentLaM's inconsistency bias (Proof is in Appendix \ref{app-inconsist-bias-decentLaM}):
\begin{proposition}\label{prop-decentlam-bias}
Under the same assumptions as Proposition \ref{prop-dmsdg-bias}, DecentLaM has an inconsistency bias as follows:
\begin{align}\label{DecentLaM-hb}
\lim_{k\to \infty} \sum_{i=1}^n \|x_i^{(k)} - x^\star\|^2 = O\Big(\frac{\gamma^2 b^2}{(1-\rho)^2}\Big),
\end{align}
\end{proposition}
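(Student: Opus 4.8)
The plan is to reduce the statement to the limiting (fixed-point) behaviour of DecentLaM and to observe that, with the full-batch gradient, DecentLaM and DSGD converge to the \emph{same} fixed point, so they share the same inconsistency bias --- which Appendix~\ref{app-limit-bias-dsgd} already shows is $O(\gamma^2 b^2/(1-\rho)^2)$.

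First I would set up convergence. As in the derivation of DecentLaM, take $W$ positive definite. In the lifted variable $\bs = W^{-\frac12}\vx$, recursions \eqref{decentLaM-x-1}--\eqref{decentLaM-x-3} with $\nabla F$ replaced by the full-batch $\nabla f$ are exactly the deterministic momentum SGD \eqref{decentLaM-s-1}--\eqref{decentLaM-s-3} with step $\gamma$ and coefficient $\beta$ applied to $h(\bs):=f(W^{\frac12}\bs)+\frac{1}{2\gamma}\|\bs\|_{I-W}^2$ from \eqref{s-prob}. Under A.1, A.3 and strong convexity of each $f_i$, $h$ is $\mu'$-strongly convex and $L'$-smooth with $\mu'=\mu\,\lambda_n(W)>0$ and $L'=\lambda_1(W)L+\tfrac1\gamma(1-\lambda_n(W))$, so $\gamma L'\le\gamma L+1$; hence for $\gamma$ small and $\beta\in[0,1)$ the heavy-ball iteration converges linearly. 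In fact mere convergence of $\{\bs^{(k)}\}$ to some limit already suffices: then $\vm_s^{(k)}\to 0$ from \eqref{decentLaM-s-3}, then $\vg_s^{(k)}=\nabla h(\bs^{(k)})\to 0$ from \eqref{decentLaM-s-2}, so by strong convexity the limit is the unique stationary point $\bs^o$ of $h$, and $\vx^{(k)}=W^{\frac12}\bs^{(k)}\to\vx^o:=W^{\frac12}\bs^o$.

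Next I would identify that limit in $\vx$-coordinates. Letting $k\to\infty$ in \eqref{decentLaM-x-2}--\eqref{decentLaM-x-3}: \eqref{decentLaM-x-3} forces $\vm^o=0$, then \eqref{decentLaM-x-2} forces $\tilde{\vg}^{o}=0$, i.e.\ by \eqref{decentLaM-x-1} the limit satisfies $(I-W)\vx^o=-\gamma W\nabla f(\vx^o)$, equivalently $\vx^o=W(\vx^o-\gamma\nabla f(\vx^o))$. This is exactly the fixed-point equation of DSGD \eqref{dsgd-compact}. Contrast with DmSGD, where recursion \eqref{xsdhs7} retains a momentum term $\beta(\vx^{(k)}-W\vx^{(k-1)})$ whose influence does not vanish at equilibrium (Proposition~\ref{prop-dmsdg-bias}), whereas in DecentLaM the momentum vanishes at the fixed point. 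Consequently $\lim_k\sum_i\|x_i^{(k)}-x^\star\|^2=\|\vx^o-\vx^\star\|_F^2$ is precisely DSGD's inconsistency bias, which is $O(\gamma^2 b^2/(1-\rho)^2)$ by Appendix~\ref{app-limit-bias-dsgd}; one may re-derive this by splitting $\vx^o$ into its consensus part and disagreement, bounding the disagreement via invertibility of $I-W$ on $\mathds{1}^\perp$ (smallest singular value $1-\rho$), bounding the consensus part via $\mu$-strong convexity together with $\sum_i\nabla f_i(x_i^o)=0$ and $\nabla f(x^\star)=0$, and closing the loop with $L$-smoothness and $\sum_i\|\nabla f_i(x^\star)\|^2=nb^2$ for $\gamma$ small.

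The main obstacle is the convergence step, not the bias estimate: standard heavy-ball results are frequently stated only for quadratic objectives, so one must either invoke a general strongly-convex/smooth version (legitimate here since the effective ratio $\gamma L'$ stays bounded) or lean on the observation above that convergence to \emph{some} limit is all that is needed, the limit then being pinned down for free by the fixed-point equations. Everything after that --- the chain of $L$-smoothness and $\mu$-strong-convexity inequalities yielding the $O(\gamma^2 b^2/(1-\rho)^2)$ estimate --- is the same routine computation that underlies the DSGD bias in Appendix~\ref{app-limit-bias-dsgd}.
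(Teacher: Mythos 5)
Your proposal is correct and follows essentially the same route as the paper: identify the fixed point of the deterministic DecentLaM recursion, observe that it satisfies $(I-W)\vx^o=-\gamma W\nabla f(\vx^o)$ with no $(1-\beta)$ factor (i.e.\ the DSGD fixed-point equation), and then run the same smoothness/strong-convexity chain used for Proposition~\ref{prop-dmsdg-bias} to get $O(\gamma^2 b^2/(1-\rho)^2)$. Your additional discussion of why the iterates actually converge (via the lifted heavy-ball interpretation) is a point the paper's proof simply assumes, so it strengthens rather than departs from the argument.
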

\begin{remark}\label{remark-decentLam-1}
Comparing with DmSGD's inconsistency bias \eqref{DmSGD-hb}, it is observed that DecentLaM completely removes the negative effects of momentum; it improves DmSGD's inconsistency bias by a margin of $1/(1-\beta)^2$. When $b^2$ is large or $\beta$ is close to $1$, such improvement is remarkable. 
\end{remark}
\begin{remark}\label{remark-decentLam-2}
Comparing with DSGD's inconsistency bias $O(\gamma^2 b^2/(1-\rho)^2)$, it is observed that DecentLaM has exactly the same inconsistency bias as DSGD. However, the momentum term in DecentLaM will significantly speed up its convergence rate. The simulation results in Fig.~\ref{fig:ls-dsgd-dmsgd-decentLaM} are consistent with the conclusions discussed in Remarks \ref{remark-decentLam-1}-\ref{remark-decentLam-2}.
\end{remark}

\section{Convergence analysis of DecentLaM}
\label{sec:DecentLaM_convergence}
Sec.~\ref{sec:decentLaM} proposes a novel algorithm DecentLaM and explains why it better fits into the large-batch scenario. This section establishes the formal convergence analysis in the non-convex and strongly-convex scenarios, respectively. 

\subsection{Non-convex scenario}
We introduce a standard data inconsistency assumption for the non-convex scenario:

\vspace{1mm}
\noindent \textbf{Assumption A.4} There exits a constant $\hat{b} > 0$ such that $\frac{1}{n}\sum_{i=1}^n\|\nabla f_i(x) - \nabla f(x)\|^2 \le \hat{b}^2$ for any $x$, where $f(x)$ is the global cost function defined in \eqref{dist-opt}.

\vspace{1mm}
\noindent Assumption A.4 is much more relaxed than the commonly used assumption that each $\nabla f_i(x)$ is upper bounded (i.e., $\|\nabla f_i(x)\| \le M$) \cite{gao2020periodic,singh2020squarm,balu2020decentralized}. When  data distribution $D_i$ in each computing node $i$ is identical to each other, it holds that $f_i(x) = f_j(x)$ for any $i$ and $j$, which implies $\hat{b}=0$. 

\begin{theorem}\label{thm-decentLaM--nc}
Under Assumptions A.1--A.4, if learning rate $\gamma =  \min\{\frac{(1-\beta)^2}{5\sqrt{\beta+\beta^2}L}, \frac{(1-\beta)^2}{(5-\beta+2\beta^2)L}, \frac{(1-\beta)^2}{12L\beta^2}, \frac{1-\rho}{20\sqrt{\rho}L}\}$, $W$ is positive-definite, and $\beta + \frac{16\beta^2}{(1-\beta)(1-\rho)^2}  \le \frac{3+\rho}{4}$,  the DecentLaM algorithm in Algorithm \ref{Algorithm: DecentLaM} will converge as follows  (Proof is in Appendix \ref{app-section-non-convex}, and $\bar{x}^{(k)} = \frac{1}{n}\sum_{i=1}^n x_i^{(k)}$):
\begin{align}\label{decentLaM-conv-nc}
& \frac{1}{T}\sum_{k=0}^{T-1}\hspace{-0.5mm}\mathbb{E}\|\frac{1}{n}\sum_{i=1}^n \nabla f_i(\bar{x}^{(k)})\|^2  \\
=& O\Big( \underbrace{\frac{1-\beta}{\gamma T}}_{\rm convg.\ rate} + \underbrace{\frac{\gamma \sigma^2}{n(1-\beta)} + \frac{ \gamma^2\sigma^2}{1-\rho}}_{\rm sto.\ bias} + \underbrace{\frac{ \gamma^2 \hat{b}^2}{(1-\rho)^2}}_{\rm inconsist. bias}\Big) \nonumber
\end{align}
\end{theorem}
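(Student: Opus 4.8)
The plan is to analyze DecentLaM through the $\bs$-coordinate reformulation \eqref{decentLaM-s-1}--\eqref{decentLaM-s-3}, which exhibits the algorithm as plain momentum SGD applied to the penalized objective $h(\bs) := f(W^{\frac{1}{2}}\bs) + \frac{1}{2\gamma}\|\bs\|_{I-W}^2$. Since $W^{\frac{1}{2}}\bs^{(k)} = \vx^{(k)}$, convergence statements about $\bar{x}^{(k)}$ will be recovered by tracking the average model $\bar{x}^{(k)} = \frac{1}{n}\mathds{1}^T \vx^{(k)}$ and the consensus error $\sum_i\|x_i^{(k)} - \bar{x}^{(k)}\|^2$ separately. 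The first step is to write the two standard descent recursions: (i) a \emph{descent-lemma-type} inequality for $f(\bar{x}^{(k)})$ using $L$-smoothness of $f$, the unbiasedness and bounded-variance Assumption A.2, and the fact that $\frac{1}{n}\mathds{1}^T$ annihilates $(I-W)$ so the penalty gradient contributes nothing to the average dynamics; and (ii) a \emph{consensus-contraction} inequality bounding $\mathbb{E}\|\vx^{(k+1)} - \mathds{1}\bar{x}^{(k+1)}\|^2$ in terms of its previous value, the momentum norm, the gradient-dissimilarity bound $\hat b^2$ from Assumption A.4, and $\sigma^2$. The spectral gap enters through $\rho = \max\{|\lambda_2(W)|,|\lambda_n(W)|\}$ acting on the orthogonal complement of $\mathds{1}$.

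Next I would handle the momentum term. The cleanest device is to introduce the auxiliary "momentum-corrected" iterate $\vz^{(k)} := \vx^{(k)} - \frac{\gamma\beta}{1-\beta}\vm^{(k)}$ (equivalently the running average form), for which $\vz^{(k+1)} - \vz^{(k)} = -\frac{\gamma}{1-\beta}\tilde{\vg}^{(k)}$, turning the momentum recursion into a vanilla SGD-type update with effective stepsize $\gamma/(1-\beta)$. Then I would bound $\mathbb{E}\|\vx^{(k)} - \vz^{(k)}\|^2 = \frac{\gamma^2\beta^2}{(1-\beta)^2}\mathbb{E}\|\vm^{(k)}\|^2$ and, in turn, control $\mathbb{E}\|\vm^{(k)}\|^2$ by unrolling the momentum recursion: $\mathbb{E}\|\vm^{(k)}\|^2 \lesssim \frac{1}{1-\beta}\sum_{j} \beta^{\,k-j}\,\mathbb{E}\|\tilde{\vg}^{(j)}\|^2$, and $\mathbb{E}\|\tilde{\vg}^{(j)}\|^2$ is bounded by gradient magnitudes (hence by $\|\nabla f(\bar{x}^{(j)})\|^2$, $\hat b^2$, the consensus error, and $\sigma^2$ via $L$-smoothness). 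Summing these unrolled bounds over $k$ and swapping the order of summation converts the geometric-weight factors into the clean $1/(1-\beta)$ prefactors that appear in \eqref{decentLaM-conv-nc}.

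The core of the argument is then to combine the three coupled recursions — descent in $f(\bar x^{(k)})$, the consensus error, and the momentum norm — into a single Lyapunov function of the form $\mathcal{L}^{(k)} = f(\bar{x}^{(k)}) - f^\star + c_1 \sum_i\|x_i^{(k)}-\bar x^{(k)}\|^2 + c_2 \gamma^2\|\vm^{(k)}\|^2$ (plus possibly a cross term tying $\vx$ to $\vz$), with constants $c_1, c_2$ chosen so that the off-diagonal (coupling) terms are absorbed. This is exactly where the two stated restrictions on the parameters are used: the stepsize cap $\gamma \le \min\{\cdots, \frac{1-\rho}{20\sqrt{\rho}L}\}$ guarantees the consensus recursion is a genuine contraction with rate bounded away from $1$ after absorbing the $O(\gamma L)$ perturbations from the local-update and penalty terms, and the structural condition $\beta + \frac{16\beta^2}{(1-\beta)(1-\rho)^2} \le \frac{3+\rho}{4}$ is precisely what keeps the momentum-induced feedback into the consensus error from destroying that contraction (it ensures the effective contraction factor stays $\le \frac{3+\rho}{4} < 1$). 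Telescoping $\mathcal{L}^{(k)}$ from $0$ to $T-1$, dividing by $T$, and using $\gamma = O((1-\beta)^2/L)$ to control the higher-order-in-$\gamma$ residuals then yields \eqref{decentLaM-conv-nc}, with the $\frac{1-\beta}{\gamma T}$ term coming from $\mathcal{L}^{(0)}/T$ and the effective stepsize $\gamma/(1-\beta)$.

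The main obstacle will be the bookkeeping in the Lyapunov combination: the momentum norm $\|\vm^{(k)}\|^2$ couples to the past gradient history (not just the current iterate), so one must either carry $\|\vm^{(k)}\|^2$ as an explicit Lyapunov coordinate with a carefully tuned weight or absorb it through a summation-by-parts argument, and in both cases verifying that every cross term is dominated requires the precise interplay between $\gamma$, $\beta$, and $\rho$ encoded in the two hypotheses — getting the constants to close simultaneously for the descent inequality and the consensus contraction is the delicate part. A secondary subtlety is that $W^{\frac12}$ is only well-defined because $W$ is assumed positive-definite; I would note that this assumption is used solely to make the $\bs$-reformulation rigorous, while the final Algorithm \ref{Algorithm: DecentLaM} and the bound \eqref{decentLaM-conv-nc} are stated directly in the $\vx$-coordinates and the constants depend on $W$ only through $\rho$.
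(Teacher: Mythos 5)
Your proposal is correct and follows essentially the same route as the paper's proof: the $\bs$-coordinate reformulation, the momentum-corrected auxiliary iterate (your $\vz^{(k)}$ is the paper's $\bar{t}^{(k)}$ sequence), the coupled consensus/momentum recursions whose combined contraction factor $\tfrac{3+\rho}{4}$ is guaranteed by the stated condition on $\beta$, and a Lyapunov function that telescopes to give the rate. The only organizational difference is that the paper carries the momentum history as a weighted sum $\sum_i c_i\|\bar{s}^{(k-i)}-\bar{s}^{(k-i-1)}\|^2$ with an infinite family of coefficients rather than a single $\|\vm^{(k)}\|^2$ coordinate, a bookkeeping choice you already anticipate in your final paragraph.
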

Expression \eqref{decentLaM-conv-nc} indicates that DecentLaM will converge sublinearly to a limiting bias. The limiting bias (which can be achieved by letting $T\to \infty$) can be separated into the stochastic bias and the inconsistency bias, which is consistent with our discussion in Sec.~\ref{sec:DmSG-bias}. It is also observed that the inconsistency bias term in \eqref{decentLaM-conv-nc} is unrelated to the momentum coefficient $\beta$, which implies, as discussed in Sec.~\ref{sec:decentLaM}, that DecentLaM has removed the negative affects of momentum. 
Theorem~\ref{thm-decentLaM--nc} has restrictions on the positive-definiteness of $W$ and the momentum parameter $\beta$ so that the analysis can be simplified. However, they are not needed in the real practice and not used in any of our experiments in Sec.~\ref{sec:experiment}. We find the concurrent work \cite{lin2021quasi} also imposes a restriction on momentum parameter to simply the analysis. 

\begin{corollary}\label{coro-decentLaM-nc}
Under the same assumptions as in  Theorem~\ref{thm-decentLaM--nc}, if learning rate $\gamma = O((1-\beta)/\sqrt{T/n})$, DecentLaM will converge at rate $O(1/\sqrt{n T})$ (Proof is in Appendix \ref{app-section-non-convex}), which achieves the same linear speedup as PmSGD.
\end{corollary}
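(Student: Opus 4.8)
The plan is to substitute the prescribed learning rate into the bound of Theorem~\ref{thm-decentLaM--nc} and track how each of its four terms scales. Fix a constant $c$ and set $\gamma = c(1-\beta)\sqrt{n/T}$, which is of the stated order $O((1-\beta)/\sqrt{T/n})$. First I would verify admissibility: since $\gamma \to 0$ as $T\to\infty$ while the four quantities in the $\min\{\cdot\}$ defining the step-size ceiling of Theorem~\ref{thm-decentLaM--nc} are constants depending only on $\beta,\rho,L$, there is a threshold $T_0 = T_0(\beta,\rho,L,n)$ beyond which $\gamma$ falls below all of them; so for $T \ge T_0$ the hypotheses of Theorem~\ref{thm-decentLaM--nc} are met and \eqref{decentLaM-conv-nc} applies.

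Next I would plug $\gamma = c(1-\beta)\sqrt{n/T}$ into \eqref{decentLaM-conv-nc} term by term. The convergence-rate term becomes $\frac{1-\beta}{\gamma T} = \frac{1}{c\sqrt{nT}}$; the first stochastic-bias term becomes $\frac{\gamma\sigma^2}{n(1-\beta)} = \frac{c\sigma^2}{\sqrt{nT}}$; both are $O(1/\sqrt{nT})$. The remaining two terms, $\frac{\gamma^2\sigma^2}{1-\rho}$ and $\frac{\gamma^2\hat{b}^2}{(1-\rho)^2}$, are each $O(\gamma^2) = O\big((1-\beta)^2 n/T\big) = O(n/T)$. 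I would then observe that $n/T = O(1/\sqrt{nT})$ precisely when $T \ge n^3$ (indeed $n/T \le 1/\sqrt{nT} \iff n^3 \le T$). Hence for $T$ beyond $\max\{T_0,\, n^3\}$ — the usual transient-time regime for decentralized SGD — the $O(n/T)$ terms are dominated, and the whole right-hand side of \eqref{decentLaM-conv-nc} is $O(1/\sqrt{nT})$. Since vanilla PmSGD attains exactly the rate $O(1/\sqrt{nT})$, this establishes that DecentLaM enjoys the same linear speedup in the number of nodes $n$.

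I do not expect a genuine obstacle here: the corollary is a direct asymptotic simplification of Theorem~\ref{thm-decentLaM--nc}. The only points requiring a little care are bookkeeping ones — confirming the chosen $\gamma$ respects the step-size ceiling, which forces the \emph{``$T$ large enough''} caveat, and making the transient time $T = \Omega(n^3)$ explicit as the point beyond which the $\gamma^2$-order terms stop dominating; if a fully non-asymptotic statement were wanted, one would also carry the dependence of this transient time on $\rho$ (through $1/(1-\rho)$ and $1/(1-\rho)^2$) and on $\beta$.
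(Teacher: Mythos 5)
Your proposal is correct and follows essentially the same route as the paper: the paper's proof likewise sets $\gamma = \sqrt{n}(1-\beta)/(\sqrt{T+1}\,\sigma)$, checks that the step-size ceiling is respected once $T+1 \ge n/\big(\sigma^2\min\{\tfrac{1-\beta}{12L},\tfrac{1-\rho}{20\sqrt{\rho}L}\}^2\big)$, and substitutes term by term to obtain $O(1/\sqrt{nT}) + O(n/T)$. If anything, you are more careful than the paper, which absorbs the $O(n/T)$ terms into $O(1/\sqrt{nT})$ without stating the implicit transient requirement $T \gtrsim n^3$ that you make explicit.
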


\subsection{Strongly-convex scenario}

\noindent \textbf{Assumption A.5} Each $f_i(x)$ is $\mu$-strongly convex, i.e., $\langle \nabla f_i(x) - \nabla f_i(y), x- y \rangle \ge \mu \|x-y\|^2$ for any $x,y\in \mathbb{R}^d$.

\vspace{1mm}
Under Assumption A.5, the global solution to problem \eqref{dist-opt} is unique. Furthermore, we do need to make the data heterogeneity assumption A.4 in the strongly-convex scenario. Instead, we will use $b^2 = \frac{1}{n}\sum_{i=1}^n \| \nabla f_i(x^\star)\|^2$ to gauge the data heterogeneity

\begin{theorem}\label{thm-decentLaM--sc}
Under Assumption A.1--A.3 and A.5, if $\gamma =  \min\{\frac{(1-\beta)^2}{27L},\frac{(1-\rho)(1-\beta)(\kappa+1)}{5L},\frac{1-\rho}{\sqrt{1728(\kappa+1)}}\}$, $W$ is  positive definite, and $\beta + \frac{16\beta^2}{(1-\beta)(1-\rho)^2}  \le \frac{3+\rho}{4}$,  DecentLaM in Algorithm \ref{Algorithm: DecentLaM} will converge as (Proof is in Appendix \ref{app-thm-decentLaM--sc}):
\begin{align}\label{decentLaM-conv-sc}
&\ \frac{1}{n H_T}\sum_{k=0}^{T} \sum_{i=1}^n h_k(\mathbb{E} f_i(\bar{x}^{(k)}) - f^*)\nonumber \\
=& O\Big(\underbrace{\frac{(1-\beta)}{\gamma}(1-\frac{\gamma}{(1-\beta)})^{T}}_{\rm convg.\ rate} \nonumber \\
&\quad +\underbrace{\frac{\gamma \sigma^2}{n(1-\beta)} + \frac{\gamma^2 \sigma^2}{1-\rho}}_{\rm sto.\ bias} +\underbrace{\frac{\gamma^2 {b}^2}{(1-\rho)^2}}_{\rm inconsist.bias}\Big)
\end{align}
where $h_k$ is some positive weight (see Appendix \ref{app-thm-decentLaM--sc}) and $H_T = \sum_{k=0}^T h_k$. Quantity $f^\star = \min_{x} f(x)$.
\end{theorem}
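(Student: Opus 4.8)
The plan is to exploit the key structural insight already developed in Sec.~\ref{sec:decentLaM}: in the deterministic case DecentLaM is exactly standard momentum SGD (in the transformed variable $\bs$) applied to the penalized problem \eqref{s-prob}, namely $\min_{\bs} \Phi(\bs):= f(W^{\frac{1}{2}}\bs) + \frac{1}{2\gamma}\|\bs\|^2_{I-W}$. Since each $f_i$ is $\mu$-strongly convex and $L$-smooth (A.1, A.5) and $W$ is positive-definite, the function $\Phi$ is $\tilde\mu$-strongly convex and $\tilde L$-smooth with $\tilde\mu = \Theta(\mu)$ and $\tilde L = \Theta(L + (1-\rho)/\gamma)$ (using $\lambda_{\min}(W)\ge 1-\rho$ from Appendix~\ref{app-pre}). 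So the first step is to invoke a standard strongly-convex convergence result for stochastic heavy-ball/momentum SGD on $\Phi$ — e.g. via the augmented state $(\bs^{(k+1)},\bs^{(k)})$ and a Lyapunov function of the form $V_k = \mathbb{E}\|\bs^{(k)}-\bs^o\|^2 + c\,\mathbb{E}\|\bs^{(k)}-\bs^{(k-1)}\|^2$ for a suitable constant $c$ — to get a one-step contraction $V_{k+1} \le (1-\tfrac{\gamma}{1-\beta})V_k + (\text{noise terms})$, which after unrolling yields the linear rate $(1-\tfrac{\gamma}{1-\beta})^T$ plus a stochastic floor proportional to $\gamma\sigma^2$.

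**Translating back and isolating the three terms.** The second step is to undo the change of variables $\vx = W^{\frac{1}{2}}\bs$ and relate $\mathbb{E} f_i(\bar x^{(k)}) - f^\star$ to the $\bs$-distance. Here I would split the error as usual: $f_i(\bar x^{(k)}) - f^\star \le (f_i(\bar x^{(k)}) - f_i(x^\star)) $, bound consensus error $\sum_i \|x_i^{(k)}-\bar x^{(k)}\|^2$ separately (this is where the factor $(1-\rho)^2$ in the denominator of the inconsistency term originates, through the spectral gap of $W$ acting on the deviation subspace), and bound $\|\bar x^{(k)} - x^\star\|^2$ via the average dynamics. The noise terms coming out of the $\bs$-recursion must be carefully separated into the two pieces shown: $\gamma\sigma^2/(n(1-\beta))$ comes from averaging the mean $W^{\frac12}\nabla F$ gradient noise across $n$ nodes (the $1/n$ variance reduction) under A.2, while $\gamma^2\sigma^2/(1-\rho)$ comes from how gradient noise feeds into the consensus/deviation error, scaled by the spectral gap. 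The penalty term $\frac{1}{2\gamma}\|\bs\|^2_{I-W}$ is precisely what produces the $\gamma^2 b^2/(1-\rho)^2$ inconsistency bias with $b^2 = \frac1n\sum_i\|\nabla f_i(x^\star)\|^2$: at the fixed point $\bs^o$ of \eqref{s-prob} the balance between $\nabla_\bs f$ and $\frac1\gamma(I-W)\bs$ forces $\|\vx^o - \vx^\star\| = \Theta(\gamma b/(1-\rho))$, exactly as in the DSGD analysis cited in Appendix~\ref{app-limit-bias-dsgd}, and crucially with no $1/(1-\beta)$ amplification, matching Proposition~\ref{prop-decentlam-bias}.

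**Weighted averaging and the final bound.** The third step is the standard device for converting a per-step contraction-plus-noise inequality into the weighted-average convergence statement \eqref{decentLaM-conv-sc}: choose weights $h_k = (1-\tfrac{\gamma}{1-\beta})^{-(k+1)}$ (or a closely related geometric sequence), so that $\frac{1}{H_T}\sum_k h_k \cdot(\text{per-step error})$ telescopes and the transient $V_0$ term contributes the $\frac{1-\beta}{\gamma}(1-\tfrac{\gamma}{1-\beta})^T$ factor while the noise floor passes through unchanged. The step-size conditions $\gamma \le \min\{\tfrac{(1-\beta)^2}{27L}, \ldots\}$ and the restriction $\beta + \tfrac{16\beta^2}{(1-\beta)(1-\rho)^2} \le \tfrac{3+\rho}{4}$ are exactly what is needed to make the Lyapunov coefficient $c$ and the contraction factor work out, and to ensure the deviation recursion (which mixes a $\rho$-contraction from $W$ with a $\beta$-term from momentum) is itself a genuine contraction.

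**Main obstacle.** The hard part will be controlling the coupled recursion between the consensus/deviation error $\sum_i\|x_i^{(k)}-\bar x^{(k)}\|^2$, the momentum deviation, and the mean squared error $\|\bar x^{(k)}-x^\star\|^2$ simultaneously — the momentum term couples all three, and naively one loses the $1/(1-\beta)$ improvement. Getting the Lyapunov function and the algebra to show that momentum accelerates the $\bar x$-dynamics (giving the $\gamma/(1-\beta)$-rate) while \emph{not} amplifying the deviation floor (keeping the inconsistency bias $\beta$-free) is the technical crux, and it is precisely what the condition on $\beta$ in the hypothesis buys; verifying that condition is compatible with the stated step-sizes and that all cross-terms can be absorbed is the bulk of the work in Appendix~\ref{app-thm-decentLaM--sc}.
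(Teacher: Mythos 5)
Your overall architecture (averaged dynamics plus a separately-controlled consensus error, geometric weights $h_k$ to telescope the contraction, and the fixed-point balance of the penalized problem \eqref{s-prob} producing the $\beta$-free $\gamma^2 b^2/(1-\rho)^2$ floor) matches the paper's proof in spirit. However, your first step contains a genuine gap: you propose to invoke a standard strongly-convex momentum-SGD result for the penalized objective $\Phi(\bs)=f(W^{1/2}\bs)+\frac{1}{2\gamma}\|\bs\|_{I-W}^2$ as a black box, with Lyapunov function $V_k=\mathbb{E}\|\bs^{(k)}-\bs^o\|^2+c\,\mathbb{E}\|\bs^{(k)}-\bs^{(k-1)}\|^2$. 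As you yourself note, $\Phi$ has smoothness constant $\tilde L=\Theta(L+\|I-W\|_2/\gamma)$; any standard heavy-ball analysis yielding the contraction $1-\gamma/(1-\beta)$ requires a step-size condition of the form $\gamma\tilde L\lesssim(1-\beta)^2$, which here reads $\gamma L+(1-\lambda_{\min}(W))\lesssim(1-\beta)^2$. Positive-definiteness of $W$ only gives $\lambda_{\min}(W)>0$, so $1-\lambda_{\min}(W)$ can be close to $1$ while $(1-\beta)^2$ is tiny; the reduction therefore fails under the theorem's stated hypotheses. Relatedly, a black-box bound on the full $\bs$-iterate sees the full gradient noise and cannot produce the $1/n$ variance reduction in the $\gamma\sigma^2/(n(1-\beta))$ term.

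The fix, which is what the paper actually does, is to apply the momentum-SGD machinery only to the network-averaged component: since $\mathds{1}^T(I-W)=0$, the $\frac{1}{\gamma}(I-W)\bs$ penalty vanishes identically from the dynamics of $\bar s^{(k)}=\frac{1}{n}\mathds{1}^T\bs^{(k)}$, which then evolves as momentum SGD on $f$ with smoothness $L$ alone. The paper handles the momentum there via the extrapolated sequence $\bar t^{(k)}=\frac{1}{1-\beta}\bar s^{(k)}-\frac{\beta}{1-\beta}\bar s^{(k-1)}$ (which satisfies a plain SGD-like recursion) and a Lyapunov function in \emph{function values}, $\cL^k=f(\bar t^{(k)})-f^\star+\sum_i c_i\|\bar s^{(k-i)}-\bar s^{(k-i-1)}\|^2$, combined with strong convexity in the form $\|\nabla f(\bar s^{(k)})\|^2\ge 2\mu(f(\bar s^{(k)})-f^\star)$. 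The orthogonal-complement part, $\mathbb{E}\|\bs^{(k)}-\bar\bs^{(k)}\|^2$, is controlled by a separate coupled linear recursion with $\mathbb{E}\|\vm_s^{(k)}-\bar\vm_s^{(k)}\|^2$, and this is precisely where the hypothesis $\beta+\frac{16\beta^2}{(1-\beta)(1-\rho)^2}\le\frac{3+\rho}{4}$ is consumed; in the strongly-convex case the consensus bound is expressed in terms of the weighted sum of $\mathbb{E}f(\bar s^{(\ell)})-f^\star$ so that it can be absorbed back into the descent inequality. Your "main obstacle" paragraph correctly identifies this coupling as the crux, but the plan needs to be restructured so that the mean/deviation split happens \emph{before}, not after, the momentum analysis.
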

\noindent 
The inconsistency bias in \eqref{decentLaM-conv-sc} is independent of the momentum $\beta$, which is consistent with Proposition \ref{prop-decentlam-bias} . It implies that DecentLaM achieves the same  $O(\gamma^2b^2/(1-\rho)^2)$ inconsistency bias as DSGD (see \eqref{dsgd-convergence}).

Theorem~\ref{thm-decentLaM--sc} establishes the convergence of DecentLaM with a constant learning rate. When learning rate is decaying, DecentLaM converges  exactly to the global solution:
\begin{corollary}\label{coro-sc}
Under the same assumptions as Theorem~\ref{thm-decentLaM--sc}, if learning rate $\gamma = {O}(\frac{(1-\beta)\ln(nT^2)}{T})$, DecentLaM will converge as follows (Proof is in Appendix \ref{app-thm-decentLaM--sc}):
\begin{align}\label{decentLaM-conv-sc-2}
\frac{1}{n H_T}\sum_{k=0}^{T} \sum_{i=1}^n h_k(\mathbb{E} f_i(\bar{x}^{(k)}) - f^*) = \tilde{O}\Big( \frac{1}{nT} \Big)
\end{align}
where $\tilde{O}(\cdot)$ hides all logarithm factors. 
\end{corollary}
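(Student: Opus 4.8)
The plan is to derive Corollary~\ref{coro-sc} from Theorem~\ref{thm-decentLaM--sc} by a standard learning-rate tuning argument, treating \eqref{decentLaM-conv-sc} as a black box. First I would recall from Appendix~\ref{app-thm-decentLaM--sc} the precise form of the weights $h_k$ and $H_T$; based on the structure of \eqref{decentLaM-conv-sc} these should be geometric-type weights $h_k \propto (1-\gamma/(1-\beta))^{-k}$ (or a similar choice), chosen precisely so that the weighted average telescopes against the linear contraction factor. The key point is that with constant $\gamma$ the right-hand side of \eqref{decentLaM-conv-sc} reads $O\big(\tfrac{1-\beta}{\gamma}(1-\tfrac{\gamma}{1-\beta})^T + \tfrac{\gamma\sigma^2}{n(1-\beta)} + \tfrac{\gamma^2\sigma^2}{1-\rho} + \tfrac{\gamma^2 b^2}{(1-\rho)^2}\big)$, and I need to pick $\gamma$ so that all four terms are $\tilde O(1/(nT))$.

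Next I would substitute $\gamma = \Theta\big(\tfrac{(1-\beta)\ln(nT^2)}{T}\big)$ and bound each term. For the convergence-rate term, $(1-\tfrac{\gamma}{1-\beta})^T \le \exp(-\tfrac{\gamma T}{1-\beta}) = \exp(-\ln(nT^2)) = \tfrac{1}{nT^2}$, so $\tfrac{1-\beta}{\gamma}(1-\tfrac{\gamma}{1-\beta})^T = O\big(\tfrac{T}{\ln(nT^2)}\cdot\tfrac{1}{nT^2}\big) = O\big(\tfrac{1}{nT\ln(nT^2)}\big)$, which is $\tilde O(1/(nT))$. For the first stochastic-bias term, $\tfrac{\gamma\sigma^2}{n(1-\beta)} = O\big(\tfrac{\sigma^2\ln(nT^2)}{nT}\big) = \tilde O(1/(nT))$. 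For the remaining two terms, $\tfrac{\gamma^2\sigma^2}{1-\rho}$ and $\tfrac{\gamma^2 b^2}{(1-\rho)^2}$ are both $O\big(\tfrac{(1-\beta)^2\ln^2(nT^2)}{T^2}\big) = \tilde O(1/T^2)$, which is dominated by $\tilde O(1/(nT))$ for $n \le T$ (and in any case absorbed into the $\tilde O$). I would also check that this choice of $\gamma$ eventually satisfies the upper-bound constraint $\gamma \le \min\{\cdots\}$ of Theorem~\ref{thm-decentLaM--sc} for $T$ large enough, since $\gamma \to 0$; for small $T$ one can simply inflate the constant hidden in $\tilde O(\cdot)$.

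The main obstacle I anticipate is bookkeeping with the weights $h_k$: if $\gamma$ is time-varying one would need a genuinely new telescoping argument, but since the corollary uses a \emph{constant} $\gamma$ (just tuned as a function of the horizon $T$), Theorem~\ref{thm-decentLaM--sc} applies verbatim and the only work is the elementary term-by-term estimation above together with the inequality $1-u \le e^{-u}$. A minor subtlety is making sure $\gamma/(1-\beta) < 1$ so that $(1-\gamma/(1-\beta))$ is a valid contraction factor; this holds once $T > \ln(nT^2)$, i.e. for all sufficiently large $T$, and again small $T$ is handled by the constant. So overall this is a short epilogue-style proof: fix the weights, plug in $\gamma$, bound four terms, and collect everything into $\tilde O(1/(nT))$.
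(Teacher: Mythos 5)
Your proposal is correct and follows essentially the same route as the paper's proof: substitute the constant, horizon-dependent $\gamma=\Theta\big(\tfrac{(1-\beta)\ln(nT^2)}{T}\big)$ into \eqref{decentLaM-conv-sc}, use $1-u\le e^{-u}$ to turn the contraction term into $\tfrac{1}{nT^2}$, bound the $\gamma$-linear stochastic term by $\tilde O(1/(nT))$, and note the $\gamma^2$ terms are $\tilde O(1/T^2)$ (the paper likewise silently treats these as absorbed into $\tilde O(1/(nT))$, so your $n\le T$ caveat is if anything slightly more careful). Your identification of the geometric weights $h_k\propto(1-\gamma L\mu/(2(1-\beta)(L+\mu)))^{-k}$ and the large-$T$ check on the step-size constraint also match the paper.
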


\noindent Corollary~\ref{coro-sc} implies that DecentLaM achieves the same linear speedup as PmSGD in the strongly-convex scenario \cite{liu2020improved}. However, it saves more communication per iteration due to the partial averaging (see Sec.~\ref{sec:DmSGD}).

\subsection{Comparison with existing methods}
As we discussed in Sec.~\ref{sec:DmSG-bias}, the  inconsistency bias will dominate the convergence performance as batch-size gets large. For this reason, we list the inconsistency bias comparison between DecentLaM and existing DmSGD variants in Table \ref{table:algoriothm-comparison}. By removing the influence of momentum, DecentLaM has the smallest inconsistency bias. This implies DecentLaM can perform better in the large-batch scenario.   

\begin{table}[h]
\centering
\begin{tabular}{rcc}
\toprule
\multicolumn{1}{l}{} & \textbf{Strongly-convex} & \textbf{Non-convex}                           \\ \midrule
DmSGD \cite{gao2020periodic}                & N.A.                                                               & $O\big(\frac{\gamma^2 M^2}{(1-\beta)^2}\big)$ \vspace{0.5mm}\\
DmSGD \cite{singh2020squarm}                & $O\big(\frac{\gamma^{5/2} M^2}{(1-\beta)^6}\big)$                                                               & $O\big(\frac{\gamma^2 M^2}{(1-\beta)^4}\big)$ \vspace{0.5mm}\\
DmSGD  (eq.\eqref{DmSGD-hb})              & $O\big(\frac{\gamma^2 b^2}{(1-\beta)^2}\big)$  & N.A                                           \\
DA-DmSGD \cite{yu2019linear}             & N.A.                                                               & $O\big(\frac{\gamma^2 \hat{b}^2}{(1-\beta)^2}\big)$ \vspace{0.5mm}\\
AWC-DmSGD \cite{balu2020decentralized}           & $O\big(\frac{\gamma^2 M^2}{(1-\beta)^2}\big)$                      & $O\big(\frac{\gamma^2 M^2}{(1-\beta)^4}\big)$ \vspace{0.5mm}\\
SlowMo \cite{wang2019slowmo}           & N.A                     & N.A   \vspace{1.5mm}\\
\textbf{DecentLaM (Ours)}            & $\boldsymbol{O(\gamma^2 b^2)}$                                                  & $\boldsymbol{O(\gamma^2 \hat{b}^2)}$                             \\ \bottomrule
\end{tabular}
\vspace{2mm}
\caption{Inconsistency bias comparison between various decentralized momentum algorithms. Constant $M$ is the gradient's upper bound, which is typically much larger than the data inconsistency $b$ or $\hat{b}$. SlowMo \cite{wang2019slowmo} only examined the data-homogeneous scenario. It did not clarify what the inconsistency bias is. }
\label{table:algoriothm-comparison}
\end{table}

After completing this work, we become aware of the concurrent work QG-DmSGD \cite{lin2021quasi} that can also achieve $O(\gamma^2 \hat{b}^2)$ for non-convex scenario. However, QG-DmSGD is based on a strategy to mimic the global momentum, which is different from the core idea to develop DecentLaM. Furthermore, QG-DmSGD is designed for highly data-heterogeneous scenario in EdgeAI applications while DecentLaM is for large-batch training within data-centers. As a result, our experimental tasks and algorithm settings are very different. In addition, our analysis is also different from QG-DmSGD. With a delicately designed Lyapunov function, we can cover the convergence analysis for both non-convex and strongly-convex scenarios. In contrast, the analysis in QG-DmSGD is for the non-convex scenario. Finally, we theoretically uncovered the reason why traditional DmSGD \cite{yu2019linear,gao2020periodic,singh2020squarm} has degraded performance in Sec.~\ref{sec:DmSG-bias} while QG-DmSGD only justifies it empirically. 

\begin{figure*}[t!]
\centering
\includegraphics[width=0.245\textwidth]{./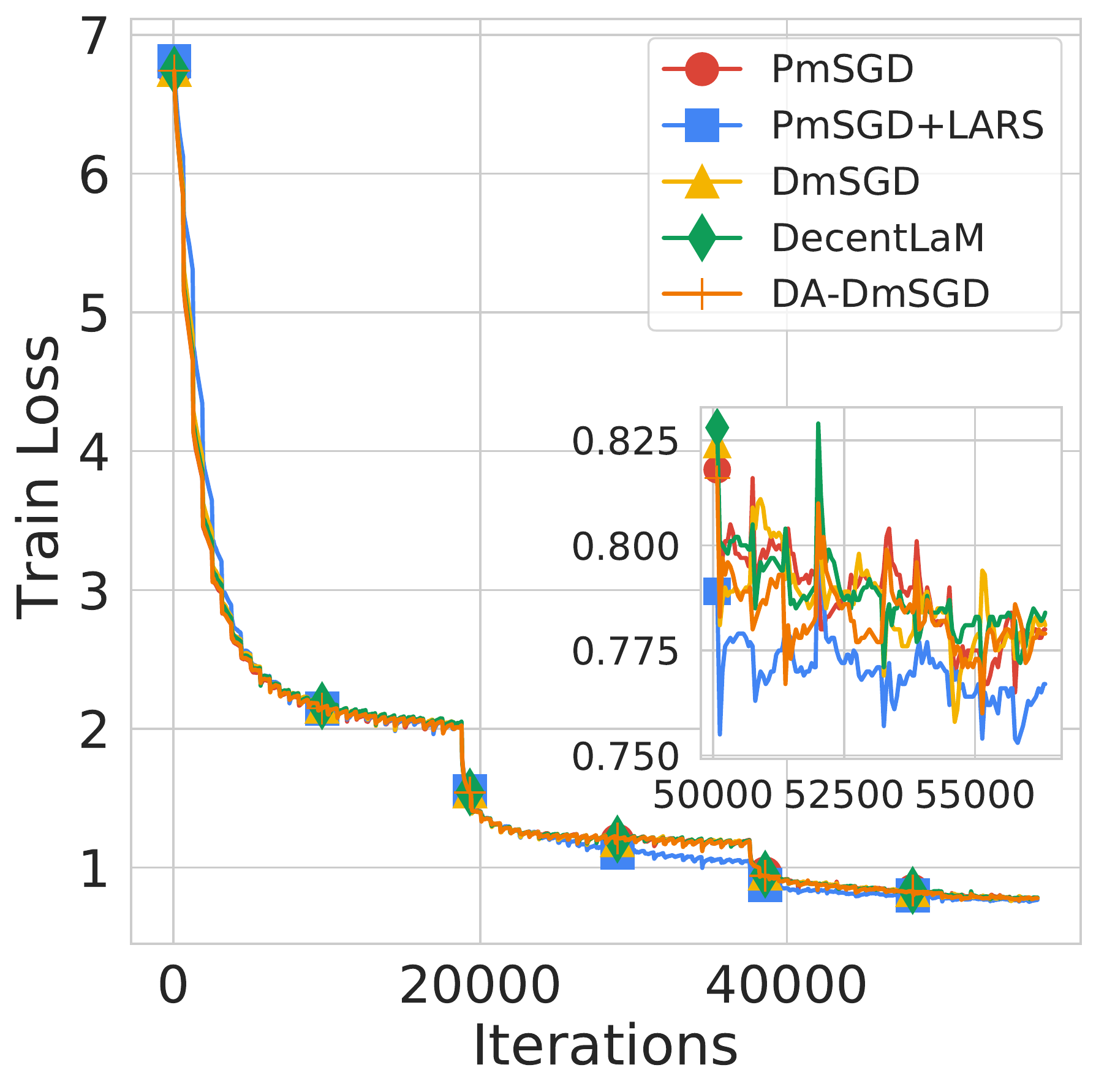} 
\includegraphics[width=0.245\textwidth]{./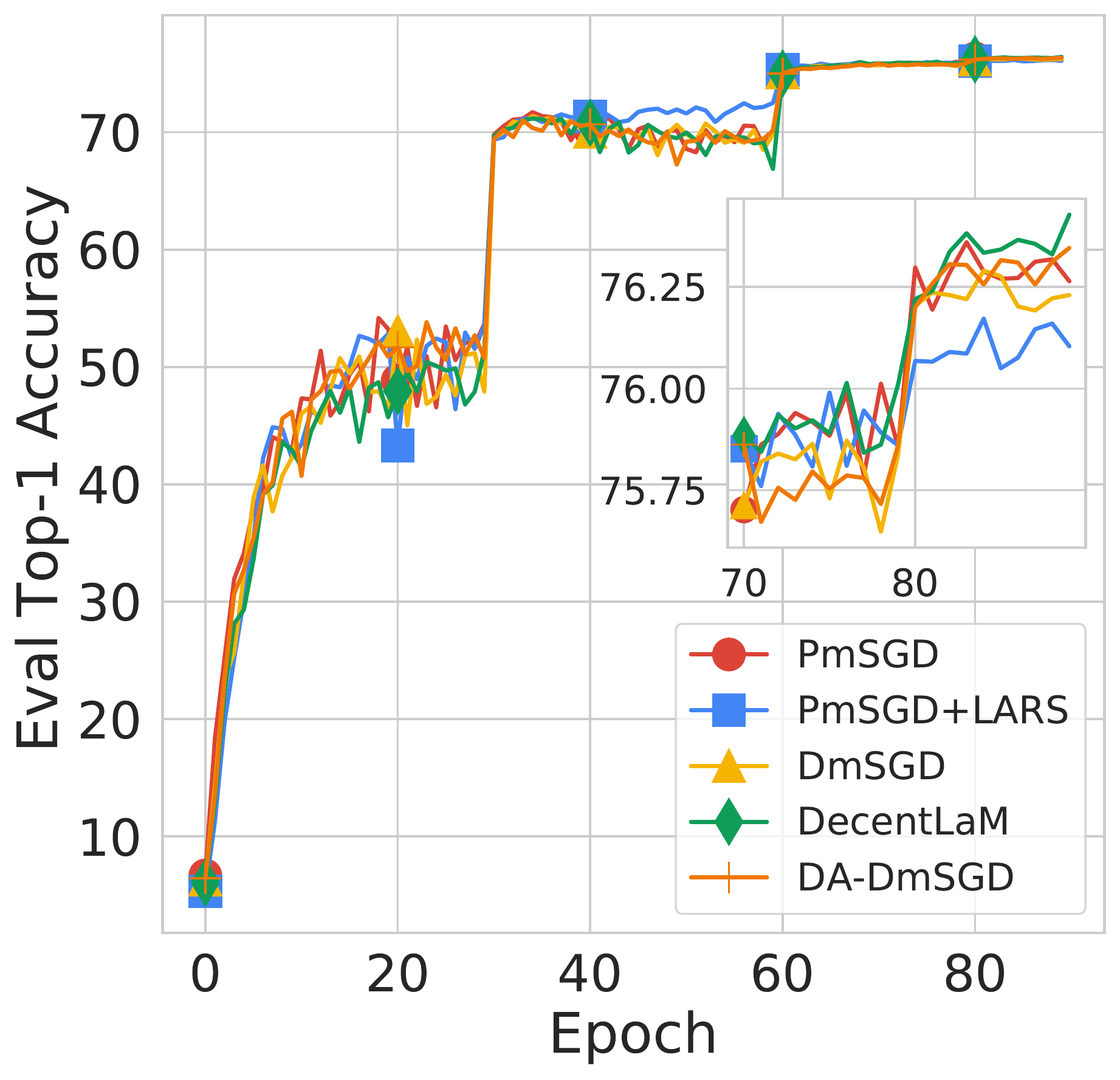} 
\includegraphics[width=0.245\textwidth]{./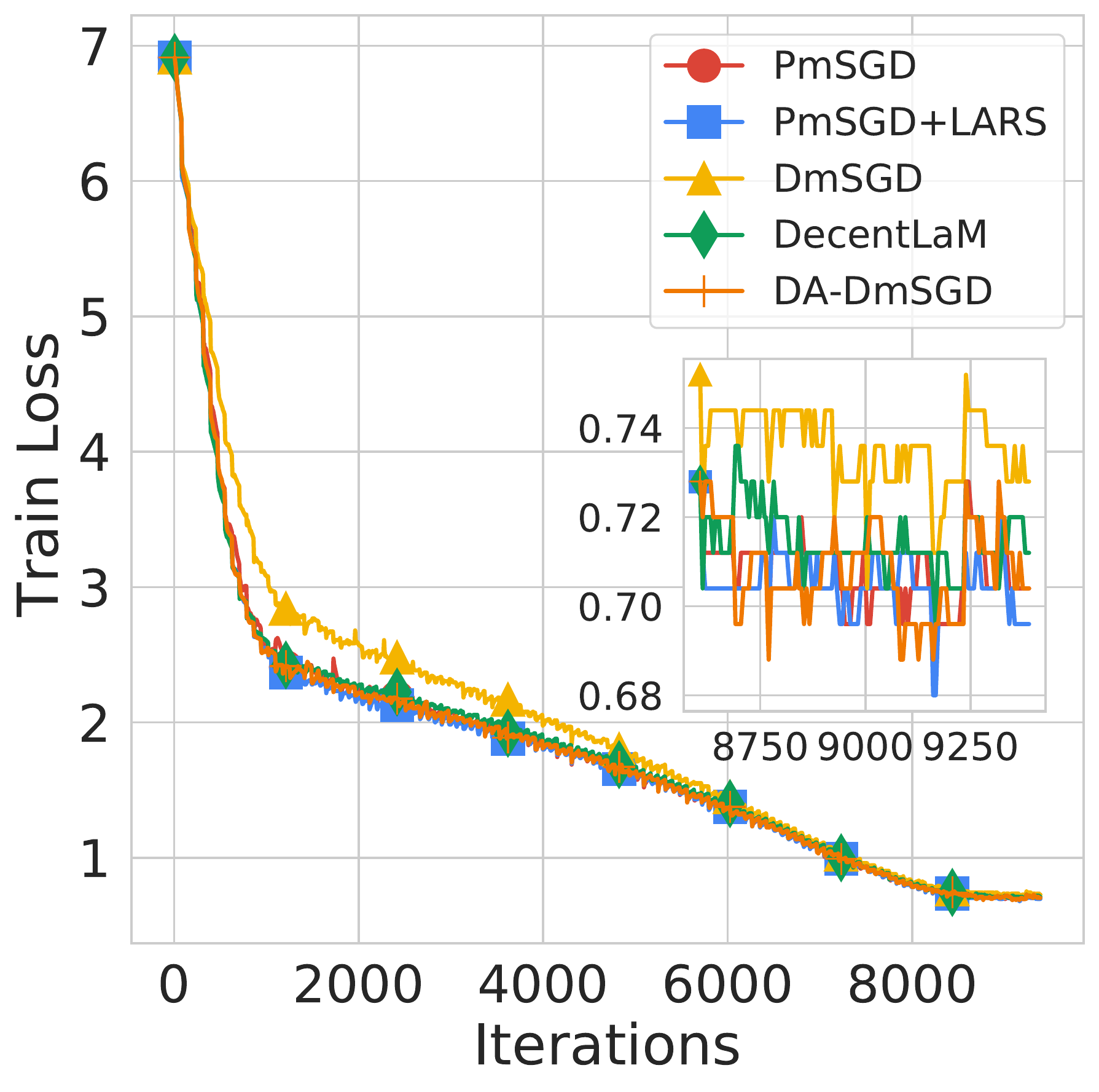} 
\includegraphics[width=0.245\textwidth]{./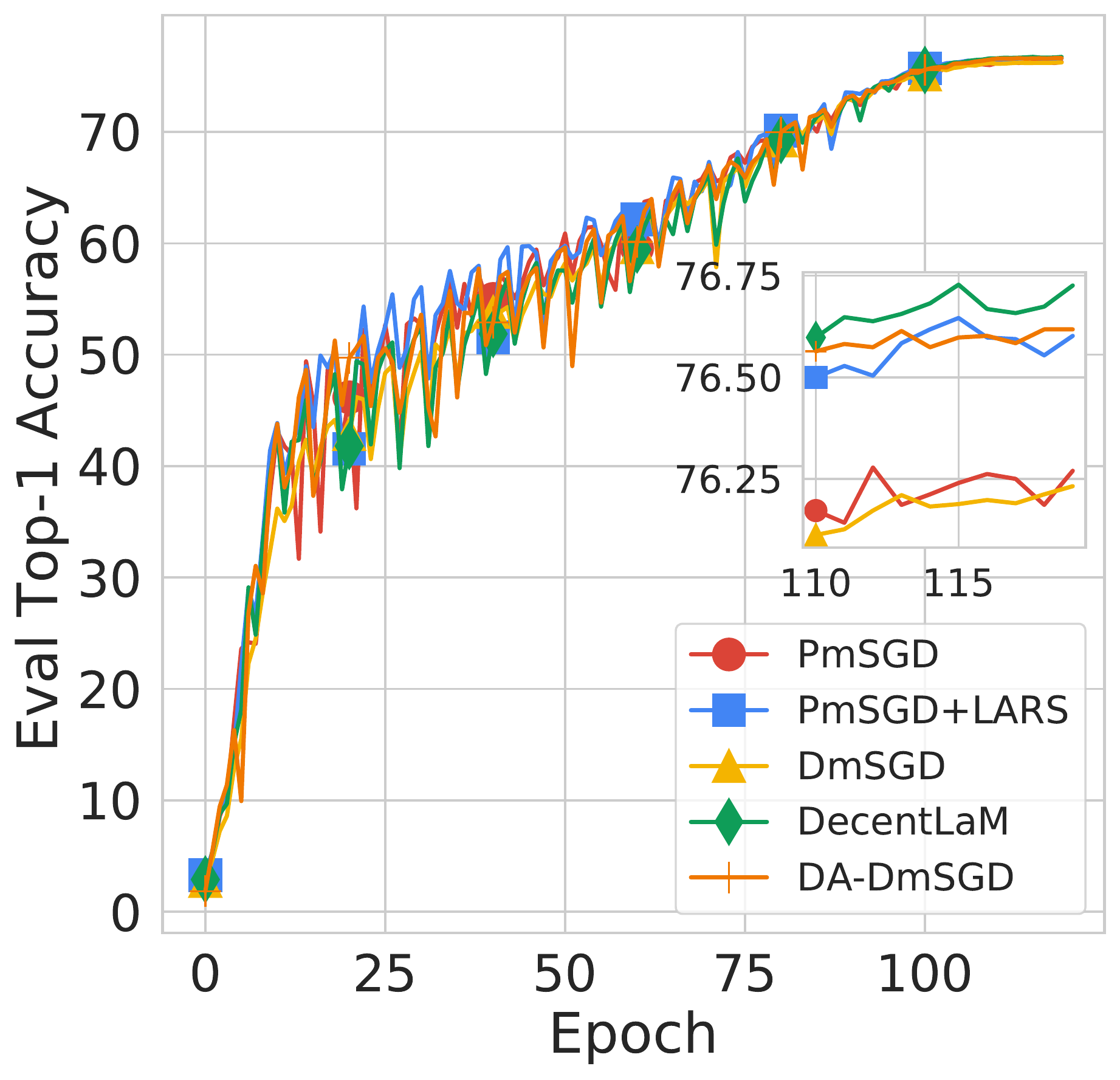}

\caption{\small Convergence results on the ImageNet with respect to training loss and validation top-1 accuracy. The batch size is 2k for left-side two figures and 16k for right-side two figures. For clarity, we do not depict the curves of other baselines.}
\label{Fig:Imagenet-16k}
\end{figure*}

\section{Experiments}
\label{sec:experiment}

In this section, we systematically compare the proposed method, DecentLaM, with well-known state-of-the-art methods on typical large-scale computer vision tasks: image classification and object detection. In particular, we compare DecentLaM with the following algorithms. \textbf{PmSGD} is the standard Parallel momentum SGD algorithm in which a global synchronization across all nodes are required per iteration. \textbf{PmSGD + LARS} exploits a layer-wise adaptive rate scaling strategy \cite{you2017large} to boost  performance with large batch-size. \textbf{DmSGD} \cite{assran2019stochastic,gao2020periodic,singh2020squarm} exploits partial averaging to save communications. Its recursions are listed in Algorithm \ref{Algorithm: DmSGD}. \textbf{DA-DmSGD} \cite{yu2019linear} incurs additional partial averaging over the momentum to increase stability; it has double partial averages per iteration. \textbf{AWC-DmSGD} \cite{balu2020decentralized} accelerates AWC-DSGD with momentum. \textbf{SlowMo} \cite{wang2019slowmo} periodically synchronize each node and adds an additional slow momentum update. \textbf{QG-DmSGD} \cite{lin2021quasi} is a concurrent work to DecentLaM that adds a quasi-global momentum to DSGD. We utilize the QG-DmSGD Heavy-ball variant. \textbf{D$^2$} \cite{tang2018d} is a primal-dual approach that can completely remove the inconsistency bias theoretically. The vanilla D$^2$ algorithm does not perform well in our experiments. We refer to another form of D$^2$ in \cite{yuan2020influence} and add momentum acceleration to the local update step. We name this algorithm as \textbf{D$^2$-DmSGD}. All baseline algorithms are tested with the recommended hyper-parameters in their papers. Note that DecentLaM requires {\em symmetric} (but not necessarily positive definite) weight matrix to guarantee empirical convergence. 




We implement all the aforementioned algorithms with PyTorch \cite{paszke2019pytorch} 1.6.0 using NCCL 2.8.3 (CUDA 10.2) as the communication backend. For PmSGD, we used PyTorch's native Distributed Data Parallel (DDP) module. For the implementation of decentralized methods, we utilize \textbf{BlueFog} \cite{bluefog}, which is a high-performance decentralized deep training framework, to facilitate the topology organization, weight matrix generation, and efficient partial averaging. We also follow DDP's design to enable computation and communication overlap. Each server contains 8 V100 GPUs in our cluster and is treated as one node. The inter-node network fabrics are 25 Gbps TCP as default, which is a common distributed training platform setting. To eliminate the effect of topology with different size in decentralized algorithms, we used 8 nodes (i.e. $8\times 8=64$ GPUs) in all decentralized training and changed the batch size of every single GPU respectively.

\subsection{Image Classification} 

\begin{table}[t!]
\vskip 0.15in
\begin{center}
\begin{small}
\begin{sc}
\begin{tabular}{ccccc}
\toprule
    \multirow{2}{*}{method} & \multicolumn{4}{c}{Batch Size}  \\
     & 2k    & 8k        & 16k & 32k    \\ 
\midrule
PmSGD & 76.32 & 76.08 & 76.27 & 75.27   \\ 
PmSGD+LARS\cite{you2017large}  & 76.16 & 75.95 & 76.65 & 75.63 \\
DmSGD\cite{assran2019stochastic,gao2020periodic,singh2020squarm} & 76.27 & 76.01 & 76.23 & 74.97 \\
DA-DmSGD\cite{yu2019linear} & 76.35 & $\mathbf{76.19}$ & 76.62 & 75.51 \\
AWC-DmSGD\cite{balu2020decentralized} & 76.29 & 75.96 & 76.31 & 75.37 \\
SlowMo\cite{wang2019slowmo} & 76.30 & 75.47 & 75.53 & 75.33 \\
QG-DmSGD\cite{lin2021quasi} & 76.23 & 75.96 & 76.60 & 75.86 \\
D$^2$-DmSGD\cite{tang2018d} & 75.44 & 75.30 & 76.16 & 75.44 \\
DecentLaM (Ours) & $\mathbf{76.43}$ & $\mathbf{76.19}$ & $\mathbf{76.73}$ & $\mathbf{76.22}$ \\
\bottomrule
\end{tabular}
\end{sc}
\end{small}
\end{center}
\vskip -0.1in
\caption{Top-1 validation accuracy of aforementioned methods when training ResNet-50 model with different batch sizes.}
\vskip -0.2in
\label{table-imagenet-resnet50-acc}
\end{table}

\noindent \textbf{Implementation.} We conduct a series of image classification experiments with the ImageNet-1K \cite{deng2009imagenet} dataset, which consists of 1,281,167 training images and 50,000 validation images in 1000 classes. We train classification models 
with different batch sizes to verify our theoretical findings. As suggested in \cite{goyal2017accurate}, we treat batch size equal or less than 8k as small-batch setting and the training protocol in \cite{goyal2017accurate} is used. In details, we train total 90 epochs. The learning rate is warmed up in the first 5 epochs and is decayed by a factor of 10 at 30, 60 and 80 epochs. For large-batch setting (batch size larger than 8K), we train total 120 epochs. The learning rate is warmed up in the first 20 epochs and is decayed in a cosine annealing scheduler as suggested in \cite{you2019large}. The choice of hyper-parameter setting is not cherry-picked and we try to keep our PmSGD's baseline around 76\% while \cite{goyal2017accurate}'s setting has severe performance drop even for large-batch PmSGD. The momentum SGD optimizer is  with linear scaling by default. 
Experiments are trained in the mixed precision using Pytorch native amp module. 

\begin{table*}[t]
\begin{center}
\begin{small}
\begin{sc}
\setlength{\tabcolsep}{1.8mm}{
\begin{tabular}{cccccccccccccccc}
\toprule
    model & \multicolumn{3}{c}{ResNet-18} & \multicolumn{3}{c}{ResNet-34} & \multicolumn{3}{c}{ResNet-50} & \multicolumn{3}{c}{MobileNet-v2} & \multicolumn{3}{c}{EfficientNet} \\
    Batch Size & 2k & 8k & 16k & 2k & 8k & 16k & 2k & 8k & 16k & 2k & 8k & 16k & 2k & 8k & 16k \\
    \midrule
    PmSGD & 70.0 & 69.5 & 68.3 & $\mathbf{73.8}$ & 72.8 & 72.9  & 76.3 & 76.1 & 76.3 & 70.6 & 71.1 & 69.5 & 77.6 & 77.0 & 78.1 \\ 
    PmSGD+LARS & 70.2 & 69.9 & $\mathbf{70.6}$ & 73.6 & $\mathbf{73.2}$ & 73.3 & 76.2 & 76.0 & $\mathbf{76.7}$ & 70.7 & 71.3 & $\mathbf{72.3}$ & 77.7 & 77.2 & 78.2 \\
    DmSGD & 69.9 & 69.3 & 68.7 & 73.1 & 72.7 & 72.4 & 76.3 & 76.0 & 76.2 & 70.5 & 71.0 & 72.1 & 77.5 & 76.9 & 77.5 \\
    DA-DmSGD & 70.2 & 70.0 & 70.2 & 73.3 & 73.1 & 73.1 & $\mathbf{76.4}$ & $\mathbf{76.2}$ & 76.6 & 70.4 & $\mathbf{71.4}$ & 72.1 & 77.6 & $\mathbf{77.4}$ & 77.8 \\
    AWC-DmSGD & 69.5 & 69.4 & 70.4 & 73.2 & 72.7 & 73.2 & 76.3 & 76.0 & 76.3 & 70.6 & 70.8 & 71.9 & 77.4 & 76.9 & 77.7 \\
    SlowMo & 70.5 & 70.0 & 69.9 & 73.5 & 73.0 & 73.0 & 76.3 & 75.5 & 75.5 & 68.7 & 69.1 & 67.8 & 77.4 & 77.2 & 77.1 \\
    QG-DmSGD & 70.2 & $\mathbf{70.5}$ & ${70.3}$ & $\mathbf{73.8}$ & 73.1 & 73.3 & 76.2 & 76.0 & 76.6 & $\mathbf{70.8}$ & 70.6 & 72.0 & 76.8 & 76.5 & 76.7 \\
    D$^2$-DmSGD & 69.1 & 68.9 & 70.1 & 72.7 & 72.3 & 73.1 & 75.4 & 75.3 & 76.2 & 70.2 & 70.6 & 71.8 & 76.6 & 76.4 & 76.5 \\
    DecentLaM & $\mathbf{70.3}$ & 69.9 & 70.5 & 73.4 & 73.1 & $\mathbf{73.4}$ & $\mathbf{76.4}$ & $\mathbf{76.2}$ & $\mathbf{76.7}$ & 70.3 & $\mathbf{71.4}$ & 72.2 & $\mathbf{77.8}$ & 77.2 & $\mathbf{78.3}$ \\

\bottomrule
\end{tabular}}
\end{sc}
\end{small}
\end{center}
\vskip -0.1in
\caption{Top-1 validation accuracy comparison with different models and batch sizes on ImageNet dataset.}
\label{table-imagenet-backbone-acc}
\end{table*}

\vspace{1mm}
\noindent \textbf{Performance with different batch-sizes}.
We first compare the top-1 accuracy of the aforementioned methods when training ResNet-50 \cite{he2016deep} model ($\sim$25.5M parameters)  with different batch-sizes on ImageNet. The network topology is set as the symmetric exponential topology (see  \ref{app-topo}). The results are listed in Table  \ref{table-imagenet-resnet50-acc}. It is observed that:
\begin{itemize}
    \item DecentLaM always has better accuracy than other decentralized algorithms, and its superiority gets evident as batch size grows. DecentLaM outperforms  other baselines by a large margin in the 32K batch size scenario due to its improved inconsistency bias derived in Proposition \ref{prop-decentlam-bias}, Theorems~\ref{thm-decentLaM--sc} and \ref{thm-decentLaM--nc}. 

    \item DecentLaM even outperforms PmSGD and PmSGD + LARS for each batch-size. One conjecture is that its model inconsistency between nodes caused by the partial averaging helps the algorithm escape from shallow local minimums. As a result, DecentLaM can be better as well as faster than DmSGD (with LARS).

    \item DmSGD, DA/AWC-DmSGD, and SlowMo have a severe degraded performance in the 32K batch size scenario. It is because the momentum have amplified their inconsistency bias, see Proposition \ref{prop-dmsdg-bias} and the results listed in Table \ref{table:algoriothm-comparison}. D$^2$-DmSGD's performance also drops significantly while it can  theoretical remove all inconsistency bias. QG-DmSGD has relatively small performance degradation, but it still performs worse than DecentLaM. 
     
\end{itemize}

We also depict the training loss and top-1 validation accuracy curves on ResNet-50 with 2K and 16K batch sizes in Fig.~\ref{Fig:Imagenet-16k}. When batch size is 2K, it is observed that DecentLaM achieves roughly the same training loss as DmSGD. However, as batch size grows to 16K, DecentLaM achieves visibly smaller training loss than DmSGD. This observation illustrates that DecentLaM improves  inconsistency bias which can 
boost its convergence when batch-size is large.




\begin{table}[t!]
\vskip 0.15in
\begin{center}
\begin{small}
\begin{sc}
\begin{tabular}{ccc}
\toprule
    \multirow{2}{*}{topology} & \multicolumn{2}{c}{Batch Size}  \\
     & 16k & 32k \\
    \midrule
    Ring & 76.65 & 76.34 \\
    Mesh & 76.54 & 76.47 \\
    Symmetric Exponential & 76.73 & 76.22 \\
    bipartite random match & 76.53 & 76.11 \\
\bottomrule
\end{tabular}
\end{sc}
\end{small}
\end{center}
\vskip -0.05in
\caption{DecentLaM has consistent performance with different network topologies on ImageNet (ResNet-50).}
\label{table-imagenet-resnet50-topo}
\vskip -0.2in
\end{table}

\vspace{1mm}
\noindent \textbf{Performance with different models.} We now validate whether DecentLaM is effective to different neural network architectures. Table  \ref{table-imagenet-backbone-acc} compares the top-1 accuracy with  backbones widely-used in image classification tasks including ResNet \cite{he2016deep}, MobileNetv2 \cite{sandler2018mobilenetv2} and EfficientNet \cite{tan2019efficientnet}. The accuracy of PmSGD is slightly different from those reported in \cite{sandler2018mobilenetv2, tan2019efficientnet} because we do not utilize tricks such as AutoAugment \cite{cubuk2018autoaugment} that are orthogonal to our methods. It is observed in Table \ref{table-imagenet-backbone-acc} that either PmSGD + LARS or DecentLaM achieves best performance with large batch-size (16K) across different models. However, DecentLaM has runtime speedup due to its efficient neighborhood-communication. For small-batch settings, DA-DmSGD and QG-DmSGD can also achieve the best accuracy for some models.

\vspace{1mm}
\noindent \textbf{Performance with different topologies.} We now examine how DecentLaM is robust to different topologies. To this end, we first organize all computing nodes into ring, mesh, symmetric exponential, or the bipartite random match topology, see Appendix \ref{app-topo} for details of these topologies. Next we test the performance of DecentLaM on ResNet-50 with these topologies and the results are in Table \ref{table-imagenet-resnet50-topo}. It is observed that DecentLaM has a consistent performance with different topologies. The ring topology is sparser than symmetric exponential topology. Interestly, it is observed to have a better accuracy in the 32K batch-size setting. It is conjectured that the ring topology can help escape from shallow local minimums when batch-size is large. We leave the justification as the future work. 

\vspace{1mm}
\noindent \textbf{Training time comparison}. The end-to-end training speed varies across different models and network bandwidth conditions. For the sake of brevity, we compare the runtime of PmSGD, DmSGD and DecentLaM in training ResNet-50 (ImageNet) with different batch sizes and network bandwidths. The speed-up of DmSGD/DecentLaM over PmSGD is consistent for other backbones and tasks. 
In Fig.~\ref{Fig:Imagenet-speed}, DecentLaM and DmSGD have equivalent runtime because they are based on the same partial averaging operation. However, they can achieve 1.2$\sim$1.9$\times$speed-up compared to PmSGD, which is consistent with \cite{assran2019stochastic}.

\begin{figure}[t!]
\vskip 0.2in
\centering
\includegraphics[width=0.235\textwidth]{./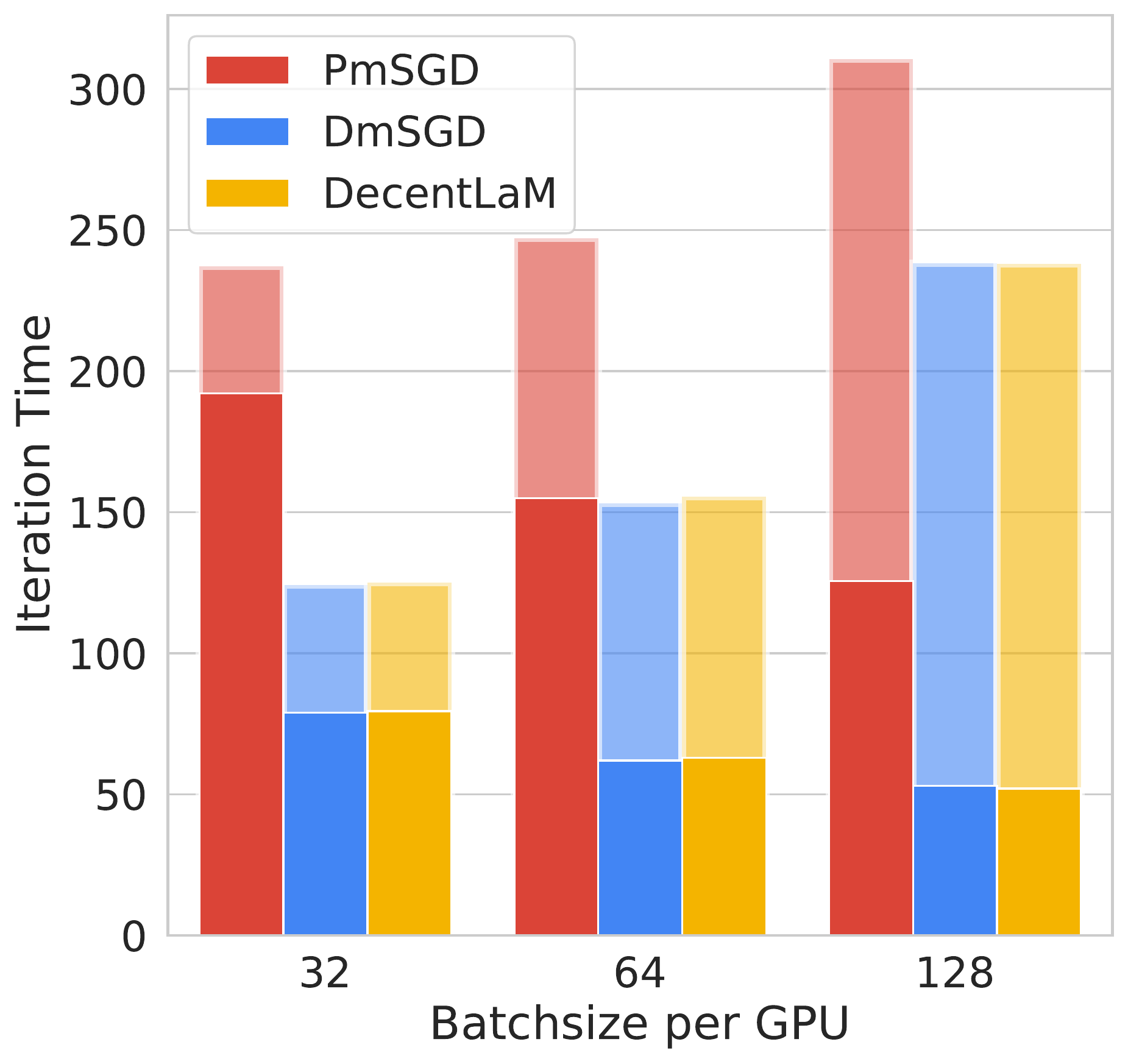} 
\includegraphics[width=0.235\textwidth]{./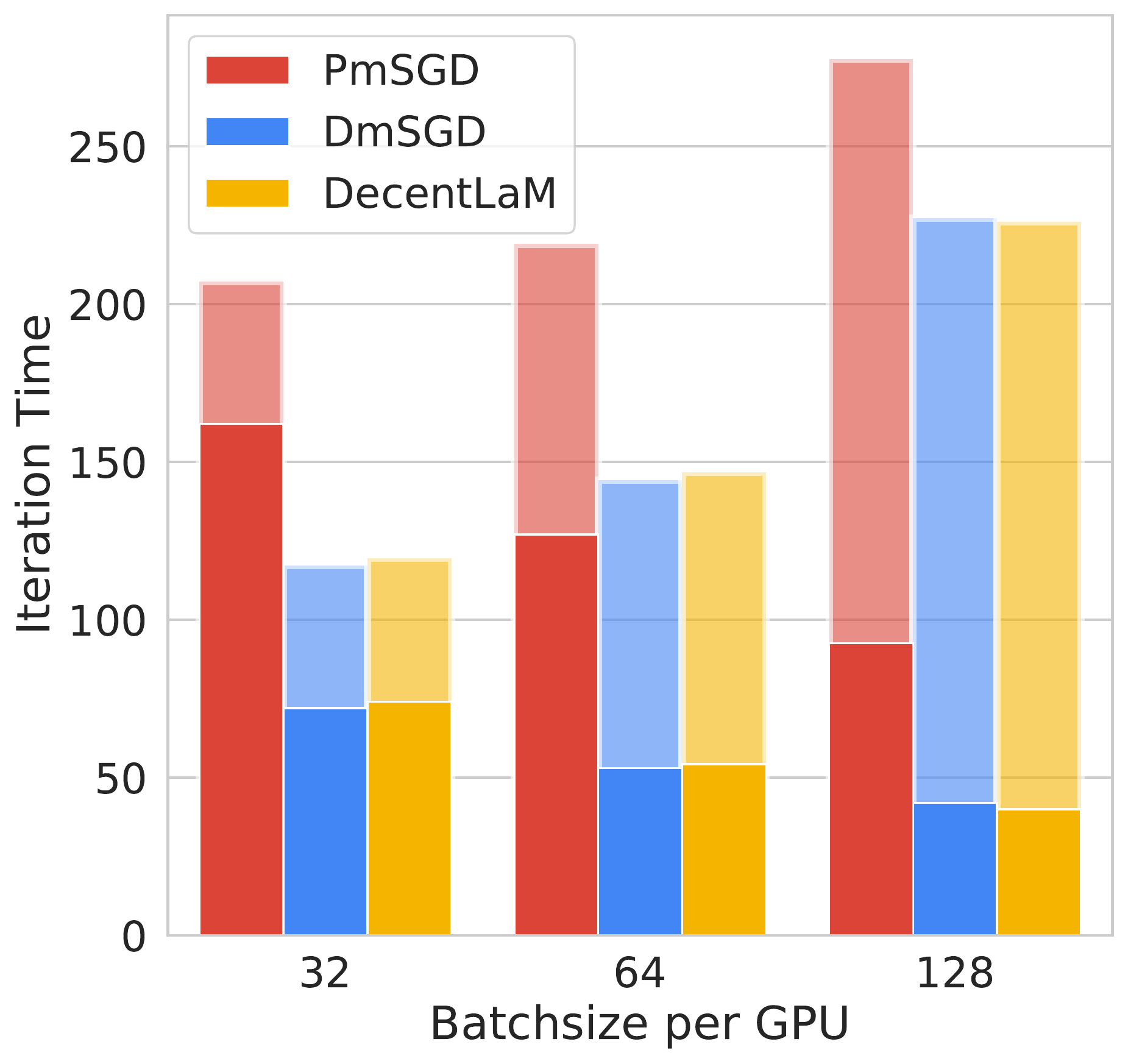} 

\caption{\small Runtime comparison on ResNet-50 with different batch sizes and network bandwidth (Left: 10Gbps; Right: 25Gbps). Each column indicates the averaged iteration runtime of 500 iterations. The thick part highlights the communication overhead.}
\label{Fig:Imagenet-speed}
\vskip -0.2in
\end{figure}

\subsection{Object Detection} 

We compared the aforementioned methods with well-known detection models, e.g. Faster-RCNN \cite{ren2016faster} and RetinaNet \cite{lin2017focal} on popular PASCAL VOC \cite{everingham2010pascal} and COCO \cite{lin2014microsoft} datasets. We adopt the MMDetection  \cite{chen2019mmdetection} framework as the building blocks and utilize ResNet-50 with FPN \cite{lin2017feature} as the  backbone network. We choose mean Average Precision (mAP) as the evaluation metric for both datesets. We used 8 GPUs (which are connected by the symmetric exponential topology) and set the total batch size as 256 in all detection experiments. Table \ref{table-detection-map} shows the performance of each algorithm. Compared with classification tasks, the total batch size in object detection cannot be set too large (typical object detection task sets batch size as $2$ within each GPU). For this reason, our proposed method just reaches a slightly higher mAP than other methods.

\begin{table}[t!]
\vskip 0.15in
\begin{center}
\begin{small}
\begin{sc}
\begin{tabular}{ccccc}
\toprule
    Dataset & \multicolumn{2}{c}{PASCAL VOC} & \multicolumn{2}{c}{COCO} \\
    Model & R-Net & F-RCNN & R-Net & F-RCNN \\

\midrule
PmSGD & 79.0 & 80.3 & 36.2 & 36.5   \\ 
PmSGD+LARS & 78.5 & 79.8 & 35.7 & 36.2 \\
DmSGD & 79.1 & 80.5 & 36.1 & 36.4  \\
DA-DmSGD & 79.0 & 80.5 & 36.4 & 37.0 \\
DecentLaM  & $\mathbf{79.3}$ & $\mathbf{80.7}$ & $\mathbf{36.6}$ & $\mathbf{37.1}$ \\

\bottomrule
\end{tabular}
\end{sc}
\end{small}
\end{center}
\vskip -0.1in
\caption{Comparision of aforementioned methods with different models on PASCAL VOC and COCO datasets. R-Net and F-RCNN refer to RetinaNet and Faster-RCNN respectively.}
\label{table-detection-map}
\vskip -0.2in
\end{table}

\section{Conclusion} 
\label{sec:conclusion}
We investigated the performance degradation in large-batch DmSGD and proposed DecentLaM to remove the momentum-incurred bias. Theoretically, we demonstrate the convergence improvement in smooth convex and non-convex scenarios. Empirically, experimental results of various models and tasks validate our theoretical findings.

{\small
\bibliographystyle{ieee_fullname}
\bibliography{references}
}

\newpage
\onecolumn
\appendix
\allowdisplaybreaks
\section{Preliminary}
\label{app-pre}
\noindent \textbf{Notation.} We first introduce necessary notations as follows.
\begin{itemize}
	\item $\vx^{(k)} = [(x_1^{(k)})^T; (x_2^{(k)})^T; \cdots; (x_n^{(k)})^T]\in \mathbb{R}^{n\times d}$
	\item $\nabla F(\vx^{(k)};\bxi^{(k)}) = [\nabla F_1(x_1^{(k)};\xi_1^{(k)})^T; \cdots; \nabla F_n(x_n^{(k)};\xi_n^{(k)})^T]\in \mathbb{R}^{n\times d}$
	\item $\nabla f(\vx^{(k)}) = [\nabla f_1(x_1^{(k)})^T; \nabla f_2(x_2^{(k)})^T; \cdots; \nabla f_n(x_n^{(k)})^T]\in \mathbb{R}^{n\times d}$ 
	\item $f(\vx^{(k)}) = \sum_{i=1}^n f(x_i^{(k)})$ and $f(x^{(k)}) = \sum_{i=1}^n f(x^{(k)})$
	\item $\bar{\vx}^{(k)} = [(\bar{x}^{(k)})^T; (\bar{x}^{(k)})^T; \cdots; (\bar{x}^{(k)})^T]\in \mathbb{R}^{n\times d}$ where $\bar{x}^{(k)} = \frac{1}{n}\sum_{i=1}^n \x_i^{(k)}$
	\item ${\vx}^{\star} = [({x}^{\star})^T; ({x}^{\star})^T; \cdots; ({x}^{\star})^T]\in \mathbb{R}^{n\times d}$ where ${x}^{\star}$ is the global solution to problem \eqref{dist-opt}.
	\item $W=[w_{ij}]\in \mathbb{R}^{n\times n}$ is the weight matrix.
	\item $\mathds{1}_n = \mathrm{col}\{1,1,\cdots, 1\} \in \RR^n$.
	\item Given two matrices $\vx, \vy \in \RR^{n\times d}$, we define inner product $\langle \vx, \vy \rangle = \mathrm{tr}(\vx^T \vy)$, the Frobenius norm $\|\vx\|^2 = \langle \vx, \vx \rangle$, and the $\|\vx\|_2$ as $\vx$'s $\ell_2$ norm. Furthermore, for a positive semi-definite matrix $A\in \RR^{n\times n}$, we define $\langle \vx, \vy \rangle_A = \mathrm{tr}(\vx^T A\vy)$ and $\|\vx\|^2_A=\langle \vx,\vx\rangle_A$ for simplicity.
	\item Given $W\in \RR^{n\times n}$, we let $\|W\|_2 = \sigma_{\max}(W)$ where   $\sigma_{\max}(\cdot)$ denote the maximum sigular value.
\end{itemize}

\vspace{2mm}
\noindent \textbf{GmSGD in matrix notation.} For ease of analysis, we rewrite the recursion of GmSGD in Algorithm \ref{Algorithm: DmSGD} with matrix notation: 
\begin{align}
\vm^{(k+1)} &= \beta \m^{(k)} + \nabla F(\vx^{(k)};\bxi^{(k)})  \label{dmssgd-matrix-1} \\
\vx^{(k+1)} &= W(\vx^{(k)} - \gamma \vm^{(k+1)}) \label{dmssgd-matrix-2}
\end{align}

\vspace{2mm}
\noindent \textbf{DecentLaM in matrix notation.} We can also rewrite DecentLaM in Algorithm \ref{Algorithm: DecentLaM} with  matrix notation: 
\begin{align}
\tilde{\vg}^{(k)} &=\frac{1}{\gamma}\vx^{(k)}-\frac{1}{\gamma}W(\vx^{(k)}-\gamma \nabla F(\vx^{(k)},\bxi^{(k)})) \label{eqn:decentlam-00} \\
\vm^{(k+1)}&=\beta\vm^{(k)}+\tilde{\vg}^{(k)}\label{eqn:decentlam-11}\\
\vx^{(k+1)}&=\vx^{(k)}-\gamma \vm^{(k+1)}\label{eqn:decentlam-22}
\end{align}
Moreover, we define $\vg^{(k)} = \mathbb{E}[\tilde{\vg}^{(k)}] = \frac{1}{\gamma}\vx^{(k)}-\frac{1}{\gamma}W(\vx^{(k)}-\gamma \nabla f(\vx^{(k)}))$

\vspace{2mm}
\noindent \textbf{Smoothness.} Since each $f_i(x)$ is assumed to be $L$-smooth in Assumption A.1, it holds that $f(x) = \frac{1}{n}\sum_{i=1}^n f_i(x)$ is also $L$-smooth. As a result, the following inequality holds for any $x, y \in \mathbb{R}^d$:
\begin{align}
f(x) - f(y) - \frac{L}{2}\|x - y\|^2 &\le  \langle \nabla f(y), x- y \rangle \label{sdu-2}
\end{align}


\noindent \textbf{Network weight matrix.} Suppose a symmetric matrix $W\in \RR^{n\times n}$ satisfies Assumption A.3, and $\lambda_j$ denotes its $j$-th largest eigenvalue. It holds that $1 = \lambda_1 > \lambda_2 \ge \cdots \ge \lambda_n > -1$. As a result, it holds that 
\begin{align}\label{network-inequaliy}
\|W\|_2 = 1, \quad \mbox{and} \quad \rho = \|W - \frac{1}{n}\mathds{1}\mathds{1}^T\|_2 = \max\{|\lambda_2|, |\lambda_n|\} \in (0,1)
\end{align}
If $W$ satisfying Assumption A.3 is further assumed to be positive-definite, it holds that $1 = \lambda_1 > \lambda_2 \ge \cdots \ge \lambda_n > 0$. 

\vspace{1mm}
\noindent \textbf{Submultiplicativity of the Frobenius norm.} Given matrices $W\in \RR^{n\times n}$ and $\vy\in \RR^{n\times d}$, it holds that 
\begin{align}\label{submulti}
\|W\vy\| \le \|W\|_2 \|\vy\|.
\end{align}
To verify it, by letting $y_j$ be the $j$-th column of $\vy$, we have $\|W\vy\|^2 = \sum_{j=1}^d \|Wy_j\|_2^2 \le \sum_{j=1}^d \|W\|_2^2 \|y_j\|_2^2=\|W\|_2^2\|\vy\|^2$.

\section{Reformulation of DmSGD and DecentLaM}

\subsection{Reformulation of DmSGD}
\label{app-reform-dsgd}
In this section we show how DmSGD algorithm \ref{Algorithm: DmSGD} can be rewritten as \eqref{xsdhs7}. To this end, we rewrite \eqref{dmssgd-matrix-2} as 
\begin{align}\label{xcnxcn8}
\beta \vx^{(k)} = W( \beta \vx^{(k-1)} - \gamma \beta \vm^{(k)}).
\end{align}
Subtracting \eqref{xcnxcn8} from \eqref{dmssgd-matrix-2}, we have
\begin{align}
\vx^{(k+1)} - \beta \vx^{(k)} = W\big( \vx^{(k)} - \beta \vx^{(k-1)} - \gamma (\vm^{(k+1)} - \beta \vm^{(k)}) \big) \overset{\eqref{dmssgd-matrix-1}}{=} W \big( \vx^{(k)} - \beta \vx^{(k-1)} - \gamma \nabla F(\vx^{(k)};\bxi^{(k)})\big)
\end{align}
which is equivalent to 
\begin{align}\label{xcnxcnxc7}
\vx^{(k+1)} = \underbrace{W \big(\vx^{(k)} - \gamma \nabla F(\vx^{(k)};\bxi^{(k)})\big)}_{\rm DSGD} + \underbrace{\beta (\vx^{(k)}  - W \vx^{(k-1)})}_{\rm momentum}.
\end{align}
When a full-batch gradient is used, the above recursion becomes 
\begin{align}\label{xcnxcnxc7-no-noise}
\vx^{(k+1)} = {W \big(\vx^{(k)} - \gamma \nabla f(\vx^{(k)})\big)} + \beta (\vx^{(k)}  - W \vx^{(k-1)})
\end{align}
which is essentially recursion \eqref{xsdhs7} in the matrix notation. 

\subsection{Reformulation of DecentLaM}
\label{app-reform-decentLaM}

In this section we show another formulation of DecentLaM Algorithm \ref{Algorithm: DecentLaM}. To this end, we rewrite \eqref{eqn:decentlam-22} as
\begin{align}\label{xn23we76sd6}
\beta \vx^{(k)} = \beta \vx^{(k-1)} - \gamma \beta \vm^{(k)}.
\end{align}
Subtracting \eqref{xn23we76sd6} from \eqref{eqn:decentlam-22}, we have 
\begin{align}
\vx^{(k+1)} - \beta \vx^{(k)} &= \vx^{(k)} - \beta \vx^{(k-1)} - \gamma (\vm^{(k+1)} - \beta \vm^{(k)}) \nonumber \\ & \overset{\eqref{eqn:decentlam-11}}{=} \vx^{(k)} - \beta \vx^{(k-1)} - \gamma \tilde{\g}^{(k)}  \nonumber \\
& \overset{\eqref{eqn:decentlam-00}}{=} W(\vx^{(k)}-\gamma \nabla F(\vx^{(k)},\bxi^{(k)})) - \beta \vx^{(k-1)}
\end{align}
which is  equivalent to 
\begin{align}\label{2465sbv}
\vx^{(k+1)} = \underbrace{W(\vx^{(k)}-\gamma \nabla F(\vx^{(k)},\bxi^{(k)}))}_{\rm DSGD} + \underbrace{\beta (\vx^{(k)} - \vx^{(k-1)})}_{\rm momentum}
\end{align}
When a full-batch gradient is used, the above recursion becomes 
\begin{align}\label{2465sbv-no-noise}
\vx^{(k+1)} = {W \big(\vx^{(k)} - \gamma \nabla f(\vx^{(k)})\big)} + \beta (\vx^{(k)}  - \vx^{(k-1)}).
\end{align}

\section{Limiting Bias of Decentralized Algorithms}

\subsection{Limiting bias of DSGD}
\label{app-limit-bias-dsgd}
In this section we illustrate the stochastic bias and inconsistency bias in the DSGD algorithm. It is established in \cite{yuan2020influence} that DSGD in the strongly-convex scenario will converge as follows:
\begin{align}\label{dsgd-convergence}
\frac{1}{n}\sum_{i=1}^n \mathbb{E}\|x_i^{(k)} - x^\star\|^2 = O\Big( \underbrace{(1-\gamma \mu)^k}_{\rm convg.\ rate} + \underbrace{\frac{\gamma \sigma^2}{n} + \frac{\gamma^2 \sigma^2}{1-\rho}}_{\rm sto.\ bias} + \underbrace{\frac{\gamma^2 b^2}{(1-\rho)^2}}_{\rm inconsis.\ bias} \Big).
\end{align}
where $\sigma^2$ is the variance of gradient noise, and $b^2$ is the data inconsistency (see the definition in Proposition \ref{prop-dmsdg-bias}). When learning rate is constant and iteration $k$ goes to infinity, DSGD will converge with limiting bias. i.e., 
\begin{align}
\mbox{Limiting bias} = \limsup_{k\to \infty} \frac{1}{n}\sum_{i=1}^n \mathbb{E}\|x_i^{(k)} - x^\star\|^2 = O\Big( \underbrace{\frac{\gamma \sigma^2}{n} + \frac{\gamma^2 \sigma^2}{1-\rho}}_{\rm sto.\ bias} + \underbrace{\frac{\gamma^2 b^2}{(1-\rho)^2}}_{\rm inconsis.\ bias} \Big)
\end{align}
As we discussed in Sec.~\ref{sec:DmSG-bias}, the limiting bias can be divided into two categories: stochastic bias and inconsistency bias. The stochastic bias is caused by the gradient noise. In the large-batch scenario in which the gradient noise $\sigma^2$ gets  significantly reduced, the inconsistency bias will dominate the magnitude of DSGD's limiting bias. 

\subsection{Inconsistency bias of DmSGD (Proof of Proposition \ref{prop-dmsdg-bias})}
\label{app-inconsist-bias-dsgd}

In this section we will prove Proposition \ref{prop-dmsdg-bias}. To achieve the inconsistency bias, we let $\vx_{\rm m}$ be the fixed point of $\vx^{(k)}$, i.e., $\vx^{(k)} \to \vx_{\rm m}$. From recursion \eqref{xcnxcnxc7-no-noise}, it is derived that $\vx_{\rm m}$ satisfies 
\begin{align}\label{dmsgd-bais-fixed-point-recursion}
(1-\beta)(I-W)\vx_{\rm m} = -\gamma W\nabla f(\vx_{\rm m}).
\end{align}

\vspace{1mm}
\noindent \textbf{Bound of $\|{\vx}_{\rm m} - \bar{\vx}_{\rm m}\| $.} Letting $\bar{\vx}_{\rm m} = \frac{1}{n}\mathds{1}\mathds{1}^T \vx_{\rm m}$, it holds that $(I-W)\bar{\vx}_{\rm m} = 0$ because $W\mathds{1} = \mathds{1}$ (see Assumption A.3). Substituting $(I-W)\bar{\vx}_{\rm m} = 0$ into \eqref{dmsgd-bais-fixed-point-recursion}, we have 
\begin{align}\label{dmsgd-bais-fixed-point-recursion-xbar}
(1-\beta)(I-W)(\vx_{\rm m} - \bar{\vx}_{\rm m}) = -\gamma W\nabla f(\vx_{\rm m}).
\end{align}
Since $W$ is symmetric and satisfies $W\mathds{1}=\mathds{1}$ (see Assumption A.3), we can eigen-decompose it as 
\begin{align}\label{xnsd65}
W = \underbrace{[\frac{1}{\sqrt{n}}\mathds{1} \quad U_1]}_{U}
\underbrace{\left[
\begin{array}{cc}
1 & 0 \\
0 & \Lambda_1
\end{array}
\right]}_{\Lambda}
\underbrace{\left[
\begin{array}{c}
\frac{1}{\sqrt{n}}\mathds{1}^T \\
U_1^T
\end{array}
\right]}_{U^T} 
\end{align}
where $U$ is the orthonormal matrix, and $\Lambda_1 = \mathrm{diag}\{\lambda_2,\cdots, \lambda_n\}$ is a diagonal matrix. With \eqref{xnsd65}, we have
\begin{align}\label{xcnwhzi}
\|(I-W)(\vx_{\rm m} - \bar{\vx}_{\rm m})\|^2 &= \|U(I-\Lambda)U^T(\vx_{\rm m} - \bar{\vx}_{\rm m})\|^2 \nonumber \\
& \overset{(a)}{=} \|(I-\Lambda)U^T(\vx_{\rm m} - \bar{\vx}_{\rm m})\|^2 \nonumber \\
& \overset{(b)}{=} \|(I-\Lambda_1)U_1^T(\vx_{\rm m} - \bar{\vx}_{\rm m})\|^2 \nonumber \\
&\ge (1-\lambda_2)^2 \|U_1^T(\vx_{\rm m} - \bar{\vx}_{\rm m})\|^2 \nonumber \\
&\overset{(c)}{=} (1-\lambda_2)^2 \|U^T(\vx_{\rm m} - \bar{\vx}_{\rm m})\|^2 \nonumber \\
&\overset{(d)}{=} (1-\lambda_2)^2 \|\vx_{\rm m} - \bar{\vx}_{\rm m}\|^2
\end{align}
where (a) and (d) hold because $U$ is orthonormal, and (b) and (c) hold because $\|U^T(\vx_{\rm m} - \bar{\vx}_{\rm m})\|^2 \overset{\eqref{xnsd65}}{=} \|\frac{1}{\sqrt{n}}\mathds{1}^T(\vx_{\rm m} - \bar{\vx}_{\rm m})\|^2 + \|U_1^T(\vx_{\rm m} - \bar{\vx}_{\rm m})\|^2 = \|U_1^T(\vx_{\rm m} - \bar{\vx}_{\rm m})\|^2$. With \eqref{dmsgd-bais-fixed-point-recursion-xbar} and \eqref{xcnwhzi}, we have
\begin{align}\label{xcn23e6sdgf}
(1-\beta)(1-\lambda_2) \|\vx_{\rm m} - \bar{\vx}_{\rm m}\| &\le (1-\beta)\|(I-W)(\vx_{\rm m} - \bar{\vx}_{\rm m})\| \nonumber \\
&= \gamma\|W\nabla f(\vx_{\rm m})\| \nonumber \\
&\overset{\eqref{network-inequaliy}}{\le} \gamma \|\nabla f(\vx_{\rm m})\| \nonumber \\
&\le \gamma \|\nabla f(\vx_{\rm m}) - \nabla f(\bar{\vx}_{\rm m})\| + \gamma\|\nabla f(\bar{\vx}_{\rm m}) - \nabla f({\vx}^\star)\| + \gamma\|\nabla f({\vx}^\star)\| \nonumber \\
&\le \gamma L\|\vx_{\rm m} - \bar{\vx}_{\rm m}\| + \sqrt{n} \gamma L \|\bar{x}_m - x^\star\| + \sqrt{n}\gamma b
\end{align}
where $x^\star$ is the global solution to problem \eqref{dist-opt}, $\bar{x}_{\rm m} = \frac{1}{n}\mathds{1}^T \vx_{\rm m}$, and $b^2 = \frac{1}{n}\sum_{i=1}^n\|\nabla f_i(x^\star)\|^2$. 

\vspace{1mm}
\noindent \textbf{Bound of $\|\bar{x}_m - x^\star\|.$} Left-multiplying $\frac{1}{n}\mathds{1}^T$ to both sides of \eqref{dmsgd-bais-fixed-point-recursion}, we achieve $\frac{1}{n}\mathds{1}^T \nabla f(\vx_{\rm m}) = 0$. With this fact we have
\begin{align}
\|\bar{x}_{\rm m} - x^\star\| &= \|\bar{x}_{\rm m} - x^\star - \gamma \big(\frac{1}{n}\mathds{1}^T \nabla F(\vx_{\rm m}) - \frac{1}{n}\mathds{1}^T\nabla F(\vx^\star)\big)\| \nonumber \\
&= \|\bar{x}_{\rm m} - x^\star - \gamma \big(\frac{1}{n}\mathds{1}^T \nabla F(\bar{\vx}_{\rm m}) - \frac{1}{n}\mathds{1}^T\nabla F(\vx^\star)\big)\| + \gamma \|\frac{1}{n}\mathds{1}^T\nabla F(\vx_{\rm m}) - \frac{1}{n}\mathds{1}^T\nabla F(\bar{\vx}_{\rm m})\| \nonumber \\
&\overset{(a)}{\le} (1-\frac{\gamma \mu}{2})\|\bar{x}_{\rm m} - x^\star\| + \frac{\gamma L}{\sqrt{n}} \|\vx_{\rm m} - \bar{\vx}_{\rm m}\|
\end{align}
where (a) holds because $\frac{1}{n}\sum_{i=1}^n f_i(x)$ is $L$-smooth and $\mu$-strongly convex (see Assumptions A.1 and A.5). We thus have
\begin{align}\label{xn237asdgb}
\sqrt{n}\|\bar{x}_{\rm m} - x^\star\| \le \frac{2L}{\mu}\|\vx_{\rm m} - \bar{\vx}_{\rm m}\|.
\end{align}

\vspace{1mm}
\noindent \textbf{Proof of Proposition \ref{prop-dmsdg-bias}.} Substituting \eqref{xn237asdgb} to \eqref{xcn23e6sdgf}, we achieve
\begin{align}
(1-\beta)(1-\lambda_2) \|\vx_{\rm m} - \bar{\vx}_{\rm m}\| &\le  \Big(\gamma L + \frac{2\gamma L^2}{\mu} \Big)\|\vx_{\rm m} - \bar{\vx}_{\rm m}\| + \sqrt{n} \gamma b \nonumber \\
&\le \frac{3\gamma L^2}{\mu} \|\vx_{\rm m} - \bar{\vx}_{\rm m}\| + \sqrt{n} \gamma b
\end{align}
If $\gamma \le \frac{\mu(1-\beta)(1-\lambda)}{6L^2}$, the above inequality becomes 
\begin{align}\label{xnwe8sdx}
\|\vx_{\rm m} - \bar{\vx}_{\rm m}\| \le \frac{2\sqrt{n}\gamma b}{(1-\beta)(1-\lambda_2)}.
\end{align}
With \eqref{xn237asdgb} and \eqref{xnwe8sdx}, we have 
\begin{align}\label{c676}
\|\vx_{\rm m} - \vx^\star\| \le \|\vx_{\rm m} - \bar{\vx}_{\rm m}\| + \sqrt{n}\|\bar{x}_{\rm m} - x^\star\| \le (1 + \frac{2L}{\mu}) \frac{2\sqrt{n}\gamma b}{(1-\beta)(1-\lambda_2)} \le (1 + \frac{2L}{\mu}) \frac{2\sqrt{n}\gamma b}{(1-\beta)(1-\rho)},
\end{align}
where the last inequality holds because $\rho = \max\{ |\lambda_2|, |\lambda_n|\}$. Inequality \eqref{c676} leads to 
\begin{align}\label{asdfasdsfdaaa09}
\lim_{k\to \infty} \frac{1}{n}\sum_{i=1}^n \|x_i^{(k)} - x^\star\|^2 = \frac{1}{n}\|\vx_{\rm m} - \vx^\star\|^2 \overset{\eqref{c676}}{=}  O\Big( \frac{\gamma^2 b^2}{(1-\rho)^2(1-\beta)^2} \Big).
\end{align}
This concludes the proof of Proposition \ref{prop-dmsdg-bias}. 

\subsection{Inconsistency bias of DecentLaM (Proof of Proposition \ref{prop-decentlam-bias})}
\label{app-inconsist-bias-decentLaM}

We let $\vx_{\rm L}$ be the fixed point of the DecentLaM iterate  $\vx^{(k)}$, i.e., $\vx^{(k)} \to \vx_{\rm L}$. From DecentLaM recursion \eqref{2465sbv-no-noise}, we have 
\begin{align}\label{xcnweu2376zz0}
    (I-W)\vx_{\rm L} = - \gamma W \nabla f(\vx_{\rm L}).
\end{align}
By following the arguments in  \eqref{dmsgd-bais-fixed-point-recursion-xbar}--\eqref{asdfasdsfdaaa09}, we can prove Proposition \ref{prop-decentlam-bias}. 

\section{Fundamental Supporting Lemmas}
\label{app-support-lemmas}

In this section, we establish the key lemmas to facilitate the  convergence analysis in Appendices \ref{app-section-non-convex} and \ref{app-thm-decentLaM--sc}. This section assumes the weight matrix $W$ to be positive-definite to simplify the derivations.

It is shown in Sec.~\ref{sec:decentLaM} that DecentLaM can be interpreted as a standard momentum SGD algorithm to solve problem \eqref{s-prob}. This paper will conduct all analysis based on the recursions \eqref{decentLaM-s-1}-\eqref{decentLaM-s-3}. To this end, we define 
\begin{equation} \label{eqn:x-s}
\nabla_\bs \cF(\bs;\bxi)=W^\frac{1}{2}\nabla F(W^\frac{1}{2}\bs;\bxi) + \frac{1}{\gamma}(I-W)\bs, \quad \mbox{and} \quad \mathbb{E}[\nabla_\bs \cF(\bs;\bxi)] = W^\frac{1}{2}\nabla f(W^\frac{1}{2}\bs) + \frac{1}{\gamma}(I-W)\bs.
\end{equation}
Furthermore, the relations between $\bs$ and $\vx$ are $\vx = W^{\frac{1}{2}}\bs$ and $\bs = W^{-\frac{1}{2}} \vx$. For ease of analysis, we transform  \eqref{decentLaM-s-1}-\eqref{decentLaM-s-3} into the following recursions
\begin{align}
&\vm_{\bs}^{(k+1)}=\beta \vm_{\bs}^{(k)}+(1-\beta )\nabla_{s} \cF ( \bs^{(k)};\bxi^{(k)})\label{eqn:decentlam-7}\\
    &\bs^{(k+1)}=\bs^{(k)}-\frac{\gamma}{1-\beta}\vm_{\bs}^{(k+1)}\label{eqn:decentlam-8}
\end{align}
Comparing \eqref{decentLaM-s-1}--\eqref{decentLaM-s-3} and \eqref{eqn:decentlam-7}--\eqref{eqn:decentlam-8}, one can verify the sequence $\{\bs^{(k)}\}$ generated by two set of recursions are exactly the same when $\vm_{\bs}^{(0)} = 0$. Quantity $\vm_s$ in \eqref{eqn:decentlam-7}--\eqref{eqn:decentlam-8} is essentially the scaled version (with coefficient $1-\beta$) of that in \eqref{decentLaM-s-1}--\eqref{decentLaM-s-3}. 
Left-multiplying $\frac{1}{n}\mathds{1}^T$ to both sides in \eqref{eqn:decentlam-7} and \eqref{eqn:decentlam-8}, and defining $\bar{s}=\frac{1}{n}\mathds{1}^T\bs$ and $\bar{m}_s=\frac{1}{n}\mathds{1}^T\vm_s$, we achieve 
\begin{align}
&\bar{m}_{s}^{(k+1)}=\beta \bar{m}_{s}^{(k)}+(1-\beta )\frac{1}{n} \mathds{1}^T \nabla F(W^{\frac{1}{2}}{\bs}^{(k)}; \bxi^{(k)})\label{eqn:decentlam-7-0}\\
&\bar{s}^{(k+1)}=\bar{s}^{(k)}-\frac{\gamma}{1-\beta}\bar{m}_{s}^{(k+1)}\label{eqn:decentlam-8-0}
\end{align}
We introduce $\bar{\bs}^{(k)} = [(\bar{s}^{(k)})^T; \cdots; (\bar{s}^{(k)})^T]\in \mathbb{R}^{n\times d}$ and $\bar{\vm}_s^{(k)} = [(\bar{m}_s^{(k)})^T; \cdots; (\bar{m}_s^{(k)})^T]\in \mathbb{R}^{n\times d}$. Since $\bar{\vx} = \frac{1}{n}\mathds{1}\mathds{1}^T \vx = \frac{1}{n}\mathds{1}\mathds{1}^T W^{\frac{1}{2}} \bs = \frac{1}{n}\mathds{1}\mathds{1}^T \bs = \bar{\bs}$, we conclude that $\bar{x} = \bar{s}$. 

\subsection{Supporting lemmas}
In the following lemma, we bound the average of momentum $\bar{m}_s^{(k)} = \frac{1}{n}\mathds{1}^T\vm_{\bs}^{(k)}$. 
\begin{lemma}[\sc Bound of $\mathbb{E}\|\bar{m}_s^{(k)}\|^2$] \label{lm-bound-ms}
Under Assumption A.2, it holds for any random variable $\vy\in \mathbb{R}^{n\times d}$ that 
\begin{align}
\mathbb{E}\|\bar{m}_s^{(k)}\|^2 &\le 4(1-\beta^k)^2 \mathbb{E}\|\frac{1}{n}\mathds{1}^T\nabla f(\vy)\|^2   + \frac{2(1-\beta)(1-\beta^{2(k+1)})\sigma^2}{n} \nonumber \\
&\quad + 4\mathbb{E}\|(1-\beta)\sum_{i=0}^{k-1}\beta^{k-1-i} [\frac{1}{n}\mathds{1}^T\nabla f(W^{\frac{1}{2}}\bs^{(i)}) - \frac{1}{n}\mathds{1}^T\nabla f(\vy)]\|^2
\end{align}
\end{lemma}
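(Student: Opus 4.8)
The plan is to unroll the momentum recursion \eqref{eqn:decentlam-7-0} explicitly and then bound the resulting telescoped sum using Jensen's inequality and Assumption A.2. First I would note that with $\bar m_s^{(0)} = 0$, iterating \eqref{eqn:decentlam-7-0} gives
\begin{align}
\bar m_s^{(k)} = (1-\beta)\sum_{i=0}^{k-1}\beta^{k-1-i}\,\frac{1}{n}\mathds{1}^T\nabla F(W^{\frac{1}{2}}\bs^{(i)};\bxi^{(i)}). \nonumber
\end{align}
The idea is then to split each stochastic gradient as $\nabla F(W^{\frac{1}{2}}\bs^{(i)};\bxi^{(i)}) = \nabla f(W^{\frac{1}{2}}\bs^{(i)}) + \big(\nabla F(W^{\frac{1}{2}}\bs^{(i)};\bxi^{(i)}) - \nabla f(W^{\frac{1}{2}}\bs^{(i)})\big)$, so that $\bar m_s^{(k)}$ decomposes into a ``mean'' part and a ``noise'' part. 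I would bound $\mathbb{E}\|\bar m_s^{(k)}\|^2 \le 2\,\mathbb{E}\|\text{mean part}\|^2 + 2\,\mathbb{E}\|\text{noise part}\|^2$.

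For the noise part, I would use Assumption A.2: the samples $\xi_i^{(k)}$ are independent across $i$ and $k$, each coordinate of the noise is mean-zero with variance at most $\sigma^2$, and averaging over $n$ nodes contributes a $1/n$ factor. Hence the cross terms in the squared norm vanish in expectation, and one is left with $\mathbb{E}\|\text{noise part}\|^2 = (1-\beta)^2\sum_{i=0}^{k-1}\beta^{2(k-1-i)}\cdot\frac{\sigma^2}{n} = \frac{(1-\beta)^2(1-\beta^{2k})}{1-\beta^2}\cdot\frac{\sigma^2}{n} = \frac{(1-\beta)(1-\beta^{2k})\sigma^2}{(1+\beta)n}$, which is at most $\frac{(1-\beta)(1-\beta^{2(k+1)})\sigma^2}{n}$; multiplied by the factor $2$ this gives the stated middle term.

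For the mean part, I would further insert the reference point $\frac{1}{n}\mathds{1}^T\nabla f(\vy)$ by writing $\nabla f(W^{\frac{1}{2}}\bs^{(i)}) = \nabla f(\vy) + \big(\nabla f(W^{\frac{1}{2}}\bs^{(i)}) - \nabla f(\vy)\big)$ inside the weighted sum. Since the weights $(1-\beta)\beta^{k-1-i}$ sum to $1-\beta^k$, the $\nabla f(\vy)$ piece contributes $(1-\beta^k)\,\frac{1}{n}\mathds{1}^T\nabla f(\vy)$, and a further $\mathbb{E}\|a+b\|^2 \le 2\mathbb{E}\|a\|^2 + 2\mathbb{E}\|b\|^2$ split turns $2\,\mathbb{E}\|\text{mean part}\|^2$ into $4(1-\beta^k)^2\,\mathbb{E}\|\frac{1}{n}\mathds{1}^T\nabla f(\vy)\|^2$ plus $4\,\mathbb{E}\big\|(1-\beta)\sum_{i=0}^{k-1}\beta^{k-1-i}[\frac{1}{n}\mathds{1}^T\nabla f(W^{\frac{1}{2}}\bs^{(i)}) - \frac{1}{n}\mathds{1}^T\nabla f(\vy)]\big\|^2$, matching the remaining two terms of the claim. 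Collecting the three contributions yields the lemma.

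The main obstacle, and the only place requiring care, is the noise-part bound: one must justify that conditioning on the past, $\mathbb{E}[\nabla F(W^{\frac{1}{2}}\bs^{(i)};\bxi^{(i)}) - \nabla f(W^{\frac{1}{2}}\bs^{(i)})] = 0$ with the iterate $\bs^{(i)}$ measurable with respect to $\bxi^{(0)},\dots,\bxi^{(i-1)}$, so that the martingale-difference structure makes all cross terms vanish and the variances simply add with the $\beta^{2(k-1-i)}$ weights; everything else is a routine double application of the elementary inequality $\|a+b\|^2\le 2\|a\|^2+2\|b\|^2$ together with the geometric-series bookkeeping.
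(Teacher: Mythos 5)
Your proposal is correct and follows essentially the same route as the paper's proof: unroll the momentum recursion from $\bar m_s^{(0)}=0$, split into a mean part and a martingale-difference noise part whose cross terms vanish under Assumption A.2 (giving the $\sigma^2/n$ geometric sum), and then insert the reference point $\frac{1}{n}\mathds{1}^T\nabla f(\vy)$ into the mean part via two applications of $\|a+b\|^2\le 2\|a\|^2+2\|b\|^2$, using that the weights sum to $1-\beta^k$. The only cosmetic difference is that the paper states the noise bound for the sum indexed up to $k$ and then specializes, while you compute it directly for the sum up to $k-1$; both yield the stated $\frac{2(1-\beta)(1-\beta^{2(k+1)})\sigma^2}{n}$ term.
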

\begin{proof}
Since $\vm^{(0)}=0$, we have $\vm_s^{(0)}=0$ and $\bar{m}_s^{(0)} = 0$. Keep iterating recursion \eqref{eqn:decentlam-7-0}, we have 
$$\bar{m}_s^{(k+1)} = (1-\beta)\sum_{i=0}^k\beta^{k-i} \frac{1}{n}\mathds{1}^T\nabla F(W^{\frac{1}{2}}\bs^{(i)};\bxi^{(i)}) = (1-\beta)\sum_{i=0}^k\beta^{k-i} \frac{1}{n}\mathds{1}^T[\nabla f(W^{\frac{1}{2}}\bs^{(i)}) +  \nabla F(W^{\frac{1}{2}}\bs^{(i)};\bxi^{(i)}) - \nabla f(W^{\frac{1}{2}}\bs^{(i)})]$$
Note that 
\begin{align}\label{xb23zz}
\mathbb{E}\|(1-\beta)\sum_{i=0}^k\beta^{k-i} \frac{1}{n}\mathds{1}^T[\nabla F(W^{\frac{1}{2}}\bs^{(i)};\bxi^{(i)}) - \nabla f(W^{\frac{1}{2}}\bs^{(i)})]\|^2 \overset{(a)}{\le} \frac{(1-\beta)^2\sigma^2}{n}\sum_{i=0}^k \beta^{2(k-i)} \overset{(b)}{\le} \frac{(1-\beta)(1-\beta^{2(k+1)})\sigma^2}{n} 
\end{align}
where (a) holds because of Assumption A.2 and (b) holds since $\sum_{i=0}^k \beta^{2(k-i)} = \frac{1 - \beta^{2(k+1)}}{1-\beta^2} = \frac{1 - \beta^{2(k+1)}}{(1-\beta)(1+\beta)} \le \frac{1-\beta^{2(k+1)}}{1-\beta}$. Therefore, 
\begin{align}\label{p2378-0}
\mathbb{E}\|\bar{m}_s^{(k)}\|^2 &\le 2 \mathbb{E}\|(1\hspace{-0.5mm}-\hspace{-0.5mm}\beta)\hspace{-1mm} \sum_{i=0}^{k-1}\beta^{k-1-i} \frac{1}{n}\mathds{1}^T\nabla f(W^{\frac{1}{2}}\bs^{(i)})\|^2 \hspace{-0.5mm}+\hspace{-0.5mm} 2\mathbb{E}\|(1\hspace{-0.5mm}-\hspace{-0.5mm}\beta)\hspace{-1mm} \sum_{i=0}^{k-1}\beta^{k-1-i} \frac{1}{n}\mathds{1}^T[\nabla F(W^{\frac{1}{2}}\bs^{(i)};\bxi^{(i)}) \hspace{-0.5mm}-\hspace{-0.5mm} \nabla f(W^{\frac{1}{2}}\bs^{(i)})]\|^2 \nonumber \\
&\overset{\eqref{xb23zz}}{\le} 2 \mathbb{E}\|(1-\beta)\sum_{i=0}^{k-1}\beta^{k-1-i} \frac{1}{n}\mathds{1}^T\nabla f(W^{\frac{1}{2}}\bs^{(i)})\|^2 + \frac{2(1-\beta)(1-\beta^{2(k+1)})\sigma^2}{n} 
\end{align}
On the other hand, we can verify for any random variable $\vy$ that 
\begin{align}\label{p23782-0}
&\hspace{-1cm} \mathbb{E}\|\frac{1-\beta}{1-\beta^k}\sum_{i=0}^{k-1}\beta^{k-1-i} \frac{1}{n}\mathds{1}^T\nabla f(W^{\frac{1}{2}}\bs^{(i)})\|^2  \nonumber \\
&\le  2 \mathbb{E}\|\frac{1}{n}\mathds{1}^T\nabla f(\vy)\|^2  + 2 \mathbb{E}\|\frac{1-\beta}{1-\beta^k}\sum_{i=0}^{k-1}\beta^{k-1-i} [\frac{1}{n}\mathds{1}^T\nabla f(W^{\frac{1}{2}}\bs^{(i)}) - \frac{1}{n}\mathds{1}^T\nabla f(\vy)]\|^2.
\end{align}
Combining inequalities \eqref{p2378-0} and \eqref{p23782-0}, we achieve
\begin{align}
\mathbb{E}\|\bar{m}_s^{(k)}\|^2 &\overset{\eqref{p2378-0}}{\le} 2(1-\beta^k)^2 \mathbb{E}\|\frac{1-\beta}{1-\beta^k}\sum_{i=0}^{k-1}\beta^{k-1-i} \frac{1}{n}\mathds{1}^T\nabla f(W^{\frac{1}{2}}\bs^{(i)})\|^2 + \frac{2(1-\beta)(1-\beta^{2(k+1)})\sigma^2}{n} \nonumber \\
&\overset{\eqref{p23782-0}}{\le} 4 (1-\beta^k)^2 \mathbb{E}\|\frac{1}{n}\mathds{1}^T\nabla f(\vy)\|^2   + \frac{2(1-\beta)(1-\beta^{2(k+1)})\sigma^2}{n} \nonumber \\ 
&\quad + 4  \mathbb{E}\|(1-\beta)\sum_{i=0}^{k-1}\beta^{k-1-i} [\frac{1}{n}\mathds{1}^T\nabla f(W^{\frac{1}{2}}\bs^{(i)}) - \frac{1}{n}\mathds{1}^T\nabla f(\vy)]\|^2 
\end{align}
\end{proof}
\noindent Recalling from \eqref{eqn:decentlam-8-0} that $\bar{s}^{(k+1)} - \bar{s}^{(k)} = - \frac{\gamma}{1-\beta} \bar{m}_s^{(k+1)}$, we can  derive from Lemma \ref{lm-bound-ms} (by setting $\vy = W^{\frac{1}{2}}\bs^{(k-1)}$) that
\begin{align}\label{bound-bar-s-diff}
\mathbb{E}\|\bar{s}^{(k+1)} - \bar{s}^{(k)}\|^2 &\le \frac{4\gamma^2}{(1-\beta)^2} \mathbb{E}\|\frac{1}{n}\mathds{1}^T\nabla f(W^{\frac{1}{2}}\bs^{(k)})\|^2   \hspace{-0.8mm}+\hspace{-0.8mm} \frac{2\gamma^2(1-\beta^{2(k+1)})\sigma^2}{(1-\beta)n} \nonumber \\
&\quad \hspace{-0.8mm}+\hspace{-0.8mm} \frac{4\gamma^2}{(1-\beta)^2}  \mathbb{E}\|(1-\beta)\sum_{i=0}^{k}\beta^{k-i} [\frac{1}{n}\mathds{1}^T\nabla f(W^{\frac{1}{2}}\bs^{(i)}) \hspace{-0.8mm}-\hspace{-0.8mm} \frac{1}{n}\mathds{1}^T\nabla f(W^{\frac{1}{2}}\bs^{(k)})]\|^2 \nonumber \\
&\le \frac{4\gamma^2}{(1-\beta)^2} \mathbb{E}\|\frac{1}{n}\mathds{1}^T\nabla f(W^{\frac{1}{2}}\bs^{(k)})\|^2   \hspace{-0.8mm}+\hspace{-0.8mm} \frac{2\gamma^2\sigma^2}{(1-\beta)n} \nonumber \\
&\quad \hspace{-0.8mm}+\hspace{-0.8mm} \frac{4\gamma^2\beta^2 }{(1-\beta)^2}  \mathbb{E}\|(1-\beta)\sum_{i=0}^{k-1}\beta^{k-1-i} [\frac{1}{n}\mathds{1}^T\nabla f(W^{\frac{1}{2}}\bs^{(i)}) \hspace{-0.8mm}-\hspace{-0.8mm} \frac{1}{n}\mathds{1}^T\nabla f(W^{\frac{1}{2}}\bs^{(k)})]\|^2
\end{align}

Next we introduce an auxiliary sequence  $\bar{t}^{(k)}$ defined as
\begin{equation}\label{eqn:t-def-2-0}
\bar{t}^{(k)}=\begin{cases}
\bar{s}^{(0)}&k=0,\\
\frac{1}{1-\beta}\bar{s}^{(k)}-\frac{\beta}{1-\beta}\bar{s}^{(k-1)}&k\geq 1.
\end{cases}
\end{equation}
The introduction of $\bar{t}^{(k)}$ is inspired from \cite{yu2019linear}. It is delicately designed to enjoy the following property:
\begin{lemma}\label{lem:ttt-0}
	$\bar{t}^{(k)}$ defined in \eqref{eqn:t-def-2-0} satisfies 
	\begin{align}
	\bar{t}^{(k+1)}&=\bar{t}^{(k)}-\frac{\gamma}{n(1-\beta)} \mathds{1}^T \nabla F(W^\frac{1}{2}\bs^{(k)};\bxi^{(k)}) \label{t-recursion-0}\\
	\bar{t}^{(k)} - \bar{s}^{(k)} &= - \frac{\gamma \beta}{(1-\beta)^2}\bar{m}_s^{(k)} \label{t-recursion-1}
	\end{align}
\end{lemma}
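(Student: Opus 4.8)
Both claims are algebraic identities, and the plan is to verify them directly from the definition \eqref{eqn:t-def-2-0} together with the scalar recursions \eqref{eqn:decentlam-7-0}--\eqref{eqn:decentlam-8-0}, splitting into the base case $k=0$ (where the piecewise definition of $\bar t^{(k)}$ begins) and the generic case $k\ge 1$. Throughout I would use the initialization $\vm^{(0)}=0$, hence $\bar m_s^{(0)}=0$, as already recorded above.

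\textbf{Identity \eqref{t-recursion-1}.} For $k\ge 1$, substituting $\bar t^{(k)}=\tfrac{1}{1-\beta}\bar s^{(k)}-\tfrac{\beta}{1-\beta}\bar s^{(k-1)}$ gives
\begin{equation}
\bar t^{(k)}-\bar s^{(k)}=\frac{\beta}{1-\beta}\big(\bar s^{(k)}-\bar s^{(k-1)}\big),
\end{equation}
and then \eqref{eqn:decentlam-8-0}, i.e. $\bar s^{(k)}-\bar s^{(k-1)}=-\tfrac{\gamma}{1-\beta}\bar m_s^{(k)}$, turns the right-hand side into $-\tfrac{\gamma\beta}{(1-\beta)^2}\bar m_s^{(k)}$. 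For $k=0$ the claim is immediate since $\bar t^{(0)}=\bar s^{(0)}$ and $\bar m_s^{(0)}=0$.

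\textbf{Identity \eqref{t-recursion-0}.} For $k\ge 1$ I would take the difference and apply \eqref{eqn:t-def-2-0} at both indices:
\begin{equation}
\bar t^{(k+1)}-\bar t^{(k)}=\frac{1}{1-\beta}\big(\bar s^{(k+1)}-\bar s^{(k)}\big)-\frac{\beta}{1-\beta}\big(\bar s^{(k)}-\bar s^{(k-1)}\big).
\end{equation}
Substituting \eqref{eqn:decentlam-8-0} rewrites this as $-\tfrac{\gamma}{(1-\beta)^2}\big(\bar m_s^{(k+1)}-\beta\bar m_s^{(k)}\big)$, and \eqref{eqn:decentlam-7-0} collapses $\bar m_s^{(k+1)}-\beta\bar m_s^{(k)}$ to $(1-\beta)\tfrac{1}{n}\mathds{1}^T\nabla F(W^{\frac{1}{2}}\bs^{(k)};\bxi^{(k)})$, giving exactly \eqref{t-recursion-0}. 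For $k=0$, using $\bar m_s^{(0)}=0$ one has $\bar s^{(1)}=\bar s^{(0)}-\tfrac{\gamma}{n}\mathds{1}^T\nabla F(W^{\frac{1}{2}}\bs^{(0)};\bxi^{(0)})$; plugging this into the $k\ge 1$ branch of \eqref{eqn:t-def-2-0} for $\bar t^{(1)}$ and simplifying recovers $\bar t^{(0)}-\tfrac{\gamma}{n(1-\beta)}\mathds{1}^T\nabla F(W^{\frac{1}{2}}\bs^{(0)};\bxi^{(0)})$.

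\textbf{Main difficulty.} This lemma is essentially bookkeeping, so there is no genuine obstacle; the only points needing care are the consistency of the $k=0$ boundary with the piecewise definition of $\bar t^{(k)}$, and tracking the factor $1/(1-\beta)$ that rescales the momentum variable $\vm_s$ in \eqref{eqn:decentlam-7}--\eqref{eqn:decentlam-8} relative to the version appearing in \eqref{decentLaM-s-1}--\eqref{decentLaM-s-3}.
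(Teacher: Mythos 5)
Your proposal is correct and follows essentially the same route as the paper: both verify \eqref{t-recursion-1} by substituting the definition of $\bar t^{(k)}$ and \eqref{eqn:decentlam-8-0}, and both verify \eqref{t-recursion-0} by differencing, substituting \eqref{eqn:decentlam-8-0}, and collapsing $\bar m_s^{(k+1)}-\beta\bar m_s^{(k)}$ via \eqref{eqn:decentlam-7-0}, with the $k=0$ boundary handled through $\bar m_s^{(0)}=0$. No gaps.
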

\begin{proof}
	We prove the lemma by direct calculation. When $k=1$,
	$$
	\bar{t}^{(1)}-\bar{t}^{(0)}=\frac{1}{1-\beta} \bar{s}^{(1)}-\frac{\beta}{1-\beta} \bar{s}^{(0)}-\bar{s}^{(0)}=\frac{1}{1-\beta}\left(\bar{s}^{(1)}-\bar{s}^{(0)}\right)=-\frac{\gamma}{n(1-\beta)} \mathds{1}^T \nabla F(W^\frac{1}{2}\bs^{(0)};\bxi^{(0)}).
	$$
	where the last equality holds because of recursions \eqref{eqn:decentlam-7-0}-\eqref{eqn:decentlam-8-0}. 
	For $k \geq 1$, we have
	$$
	\begin{aligned}
	\bar{t}^{(k+1)}-\bar{t}^{(k)} &\overset{\eqref{eqn:t-def-2-0}}{=}\frac{1}{1-\beta}\left(\bar{s}^{(k+1)}-\bar{s}^{(k)}\right)-\frac{\beta}{1-\beta}\left(\bar{s}^{(k)}-\bar{s}^{(k-1)}\right) \\
	&\overset{\eqref{eqn:decentlam-8-0}}{=}\frac{1}{1-\beta}\left(-\frac{\gamma}{1-\beta}\bar{m}_{\bs}^{(k+1)}\right)-\frac{\beta}{1-\beta}\left(-\frac{\gamma}{1-\beta}\bar{m}_{\bs}^{(k)}\right) \\
	&=-\frac{\gamma}{1-\beta}\left(\frac{1}{1-\beta}\bar{m}_{\bs}^{(k+1)}-\frac{\beta}{1-\beta}\bar{m}_{\bs}^{(k)}\right) \\
	&\overset{\eqref{eqn:decentlam-7-0}}{=}-\frac{\gamma}{n(1-\beta)} \mathds{1}^T \nabla F(W^\frac{1}{2}\bs^{(k)};\bxi^{(k)}).
	\end{aligned}
	$$
	which completes the proof of \eqref{t-recursion-0}. Moreover, it holds for $k\ge 1$ that 
	\begin{align}
	\bar{t}^{(k)} - \bar{s}^{(k)} \overset{\eqref{eqn:t-def-2-0}}{=} \frac{\beta}{1-\beta}\big( \bar{s}^{(k)} - \bar{s}^{(k-1)} \big) \overset{\eqref{eqn:decentlam-8-0}}{=} - \frac{\gamma \beta}{(1-\beta)^2}\bar{m}_s^{(k)}.
	\end{align}
\end{proof}

\subsection{Descent Lemma}
The following lemma establishes how $\mathbb{E}[f(\bar{t}^{(k)})]$ evolves with iteration $k$.  
\begin{lemma}[\sc Descent Lemma]
Under Assumptions A.1-A.3, it holds that 
\begin{align}\label{x762bzz}
\mathbb{E}[f(\bar{t}^{(k+1)})] &\le \mathbb{E}[f(\bar{t}^{(k)})]  - \frac{\gamma}{2(1-\beta)}\mathbb{E}\| \nabla f( \bar{s}^{(k)}) \|^2  + \frac{\gamma L^2}{2n(1-\beta)}\mathbb{E}\|\bar{\bs}^{(k)} - \bs^{(k)}\|^2 +  \frac{\gamma^2 L\sigma^2}{(1-\beta)n} \nonumber \\
&\quad - \Big(\frac{\gamma}{2(1-\beta)} - \frac{\gamma^2 L}{(1-\beta)^3} - \frac{L\gamma^2}{2(1-\beta)^2} - \frac{\gamma^2 \beta^2 L}{(1-\beta)^3}\Big)\mathbb{E}\|\frac{1}{n}\mathds{1}^T \nabla f(W^{\frac{1}{2}}\bs^{(k)})\|^2 \nonumber \\
&\quad + \frac{\gamma^2 \beta^2 L}{(1-\beta)^3} \mathbb{E}\|(1-\beta)\sum_{i=0}^{k-1}\beta^{k-1-i} [\frac{1}{n}\mathds{1}^T\nabla f(W^{\frac{1}{2}}\bs^{(i)}) - \frac{1}{n}\mathds{1}^T\nabla f(W^{\frac{1}{2}}\bs^{(k)})]\|^2.
\end{align} \label{lm-descent-general}
\end{lemma}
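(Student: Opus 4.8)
The plan is to run the classical momentum-SGD descent estimate on the auxiliary sequence $\bar t^{(k)}$, which by Lemma~\ref{lem:ttt-0} obeys the unbiased stochastic-gradient recursion \eqref{t-recursion-0} with effective step size $\frac{\gamma}{1-\beta}$ and search direction $\frac1n\mathds{1}^T\nabla F(W^{\frac12}\bs^{(k)};\bxi^{(k)})$, whose conditional mean is $g^{(k)} := \frac1n\mathds{1}^T\nabla f(W^{\frac12}\bs^{(k)})$. First I would apply the $L$-smoothness quadratic bound on $f=\frac1n\sum_i f_i$ between $\bar t^{(k)}$ and $\bar t^{(k+1)}$, which yields
\[
f(\bar t^{(k+1)}) \le f(\bar t^{(k)}) - \frac{\gamma}{1-\beta}\,\langle \nabla f(\bar t^{(k)}),\, \tfrac1n\mathds{1}^T\nabla F(W^{\frac12}\bs^{(k)};\bxi^{(k)})\rangle + \frac{L\gamma^2}{2(1-\beta)^2}\,\big\|\tfrac1n\mathds{1}^T\nabla F(W^{\frac12}\bs^{(k)};\bxi^{(k)})\big\|^2 .
\]
Taking the conditional expectation given the iteration-$k$ history and invoking Assumption~A.2 together with the independence of $\{\xi_i^{(k)}\}_i$, the averaged-noise variance is at most $\sigma^2/n$, so the last term becomes $\frac{L\gamma^2}{2(1-\beta)^2}(\mathbb{E}\|g^{(k)}\|^2+\sigma^2/n)$ and the cross term becomes $-\frac{\gamma}{1-\beta}\mathbb{E}\langle\nabla f(\bar t^{(k)}),g^{(k)}\rangle$.

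The core step is re-expressing the cross term. I would write $\nabla f(\bar t^{(k)}) = \nabla f(\bar s^{(k)}) + (\nabla f(\bar t^{(k)})-\nabla f(\bar s^{(k)}))$. For the $\nabla f(\bar s^{(k)})$ part, the polarization identity $-\langle a,b\rangle = \frac12\|a-b\|^2 - \frac12\|a\|^2 - \frac12\|b\|^2$ with $a=\nabla f(\bar s^{(k)})$, $b=g^{(k)}$ produces the claimed $-\frac{\gamma}{2(1-\beta)}\mathbb{E}\|\nabla f(\bar s^{(k)})\|^2$ and $-\frac{\gamma}{2(1-\beta)}\mathbb{E}\|g^{(k)}\|^2$, while the leftover $\frac{\gamma}{2(1-\beta)}\mathbb{E}\|\nabla f(\bar s^{(k)})-g^{(k)}\|^2$ is the consensus term: since $\bar s^{(k)}=\bar x^{(k)}$, the difference $\nabla f(\bar s^{(k)})-g^{(k)}=\frac1n\sum_i(\nabla f_i(\bar x^{(k)})-\nabla f_i(x_i^{(k)}))$ is controlled by $L$-smoothness of each $f_i$ by $\frac{L^2}{n}\|\bar{\vx}^{(k)}-\vx^{(k)}\|^2$, and because $\vx^{(k)}-\bar{\vx}^{(k)}=W^{\frac12}(\bs^{(k)}-\bar{\bs}^{(k)})$ (using $W^{\frac12}\mathds{1}=\mathds{1}$ and $\bar{\vx}=\bar{\bs}$) with $\|W^{\frac12}\|_2=1$, \eqref{submulti} gives $\|\bar{\vx}^{(k)}-\vx^{(k)}\|\le\|\bar{\bs}^{(k)}-\bs^{(k)}\|$, hence exactly the $\frac{\gamma L^2}{2n(1-\beta)}\mathbb{E}\|\bar{\bs}^{(k)}-\bs^{(k)}\|^2$ term of the statement. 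For the remaining part I would use Cauchy--Schwarz, $\|\nabla f(\bar t^{(k)})-\nabla f(\bar s^{(k)})\|\le L\|\bar t^{(k)}-\bar s^{(k)}\|$, and Young's inequality with a weight tuned so that the $\|g^{(k)}\|^2$ share appears at the $\frac{\gamma^2 L}{(1-\beta)^3}$ scale, bounding this contribution by a multiple of $\|\bar t^{(k)}-\bar s^{(k)}\|^2+\|g^{(k)}\|^2$.

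To close, I would use \eqref{t-recursion-1}, i.e.\ $\bar t^{(k)}-\bar s^{(k)}=-\frac{\gamma\beta}{(1-\beta)^2}\bar m_s^{(k)}$, so that $\|\bar t^{(k)}-\bar s^{(k)}\|^2=\frac{\gamma^2\beta^2}{(1-\beta)^4}\|\bar m_s^{(k)}\|^2$, and then apply Lemma~\ref{lm-bound-ms} with $\vy=W^{\frac12}\bs^{(k)}$ (bounding $(1-\beta^k)^2\le1$ and $1-\beta^{2(k+1)}\le1$): this replaces $\mathbb{E}\|\bar m_s^{(k)}\|^2$ by a multiple of $\mathbb{E}\|g^{(k)}\|^2$, a $\frac{(1-\beta)\sigma^2}{n}$ contribution, and exactly the ``momentum-lag'' quantity $\mathbb{E}\|(1-\beta)\sum_{i=0}^{k-1}\beta^{k-1-i}[\frac1n\mathds{1}^T\nabla f(W^{\frac12}\bs^{(i)})-\frac1n\mathds{1}^T\nabla f(W^{\frac12}\bs^{(k)})]\|^2$ that appears in \eqref{x762bzz}. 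Substituting back, the $\mathbb{E}\|g^{(k)}\|^2$ pieces from the smoothness term, from the $\nabla f(\bar t^{(k)})-\nabla f(\bar s^{(k)})$ term, and from Lemma~\ref{lm-bound-ms} combine into the bracketed coefficient of the statement, the $\sigma^2$ contributions merge into $\frac{\gamma^2 L\sigma^2}{(1-\beta)n}$, and the asserted inequality follows. I expect the only real obstacle to be the constant bookkeeping --- propagating the powers of $(1-\beta)$ through the telescoped momentum and choosing the Young weights so that the coefficient of $\mathbb{E}\|\nabla f(\bar s^{(k)})\|^2$ lands exactly at $-\frac{\gamma}{2(1-\beta)}$ while the several $\mathbb{E}\|g^{(k)}\|^2$ terms assemble into the stated form; a minor subtlety is the $\vx$-to-$\bs$ translation of the consensus error via $W^{\frac12}\mathds{1}=\mathds{1}$ and $\|W^{\frac12}\|_2=1$.
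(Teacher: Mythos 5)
Your proposal follows essentially the same route as the paper's proof: the smoothness bound along the $\bar t^{(k)}$ recursion from Lemma~\ref{lem:ttt-0}, the split of the cross term into a $\nabla f(\bar s^{(k)})$ part handled by the polarization identity (yielding the consensus term via $L$-smoothness and $\|W^{\frac12}\|_2=1$) and a $\nabla f(\bar t^{(k)})-\nabla f(\bar s^{(k)})$ part handled by Young's inequality with the weight $\eta=\frac{(1-\beta)^2}{2L\gamma}$, followed by $\bar t^{(k)}-\bar s^{(k)}=-\frac{\gamma\beta}{(1-\beta)^2}\bar m_s^{(k)}$ and Lemma~\ref{lm-bound-ms} with $\vy=W^{\frac12}\bs^{(k)}$. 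The argument and the bookkeeping match the paper's; no substantive difference.
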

\begin{proof}
It holds from Lemma \ref{lem:ttt-0} that 
\begin{align}\label{oxhn}
\bar{t}^{(k+1)} = \bar{t}^{(k)} - \frac{\gamma}{1-\beta} \frac{1}{n}\mathds{1}^T \nabla F(W^{\frac{1}{2}}\bs^{(k)};\bxi^{(k)})
\end{align}
Since $f(x)$ is $L$-smooth, it holds that 
\begin{align}\label{eqn:16fdsf-new}
&\mathbb{E}_{\bxi^{(k)}}\left[f\left(\bar{t}^{(k+1)}\right)\right] \nonumber \\
& \leq f\left(\bar{t}^{(k)}\right) + \mathbb{E}_{\bxi^{(k)}}\left[\left\langle\nabla f\left( \bar{t}^{(k)}\right), \bar{t}^{(k+1)}-\bar{t}^{(k)}\right\rangle\right] + \frac{L}{2} \mathbb{E}_{\bxi^{(k)}}\left[\left\|\bar{t}^{(k+1)}-\bar{t}^{(k)}\right\|^{2}\right] \nonumber \\ 
&\overset{\eqref{oxhn}}{=} f\left(\bar{t}^{(k)}\right) -  \underbrace{\frac{\gamma}{1-\beta} \left[\left\langle\nabla f\left( \bar{t}^{(k)}\right), \frac{1}{n}\mathds{1}^T \nabla f(W^{\frac{1}{2}}\bs^{(k)})\right\rangle\right]}_{= A} + \underbrace{\frac{L\gamma^2}{2(1-\beta)^2} \mathbb{E}_{\bxi^{(k)}}\left[\left\|\frac{1}{n}\mathds{1}^T \nabla F(W^{\frac{1}{2}}\bs^{(k)};\bxi^{(k)})\right\|^{2}\right]}_{=B}
\end{align}
We first bound term $A$. Note that 
\begin{align}\label{eq-A}
A = \underbrace{- \frac{\gamma}{1-\beta} \left\langle\nabla f\left( \bar{t}^{(k)}\right) - \nabla f\left( \bar{s}^{(k)}\right), \frac{1}{n}\mathds{1}^T \nabla f(W^{\frac{1}{2}}\bs^{(k)})\right\rangle}_{=A_1} - \underbrace{\frac{\gamma}{1-\beta} \left\langle \nabla f\left( \bar{s}^{(k)}\right), \frac{1}{n}\mathds{1}^T \nabla f(W^{\frac{1}{2}}\bs^{(k)})\right\rangle}_{=A_2}
\end{align}
With Cauchy-Schwarz inequality, we can bound $A_1$ as 
\begin{align}\label{bound-A1}
A_1 &\le \frac{\gamma}{1-\beta}\Big( \frac{\eta}{2} \|\nabla f\left( \bar{t}^{(k)}\right) - \nabla f\left( \bar{s}^{(k)}\right)\|^2 + \frac{1}{2\eta}\|\frac{1}{n}\mathds{1}^T \nabla f(W^{\frac{1}{2}}\bs^{(k)})\|^2\Big)  \nonumber \\
&= \frac{(1-\beta)L}{4}\| \bar{t}^{(k)} - \bar{s}^{(k)} \|^2 + \frac{\gamma^2 L}{(1-\beta)^3}\|\frac{1}{n}\mathds{1}^T \nabla f(W^{\frac{1}{2}}\bs^{(k)})\|^2.
\end{align}
where the last equality holds by letting $\eta = \frac{(1-\beta)^2}{2L\gamma}$. 
The bound of $A_2$ is 
\begin{align}\label{bound-A2}
A_2 &= -\frac{\gamma}{2(1-\beta)}\big( \| \nabla f( \bar{s}^{(k)}) \|^2 + \|\frac{1}{n}\mathds{1}^T \nabla f(W^{\frac{1}{2}}\bs^{(k)})\|^2 - \|\nabla f( \bar{s}^{(k)}) -  \frac{1}{n}\mathds{1}^T \nabla f(W^{\frac{1}{2}}\bs^{(k)})\|^2\big) \nonumber \\
&\overset{(a)}{\le} -\frac{\gamma}{2(1-\beta)}\| \nabla f( \bar{s}^{(k)}) \|^2 - \frac{\gamma}{2(1-\beta)}\|\frac{1}{n}\mathds{1}^T \nabla f(W^{\frac{1}{2}}\bs^{(k)})\|^2 + \frac{\gamma L^2}{2n(1-\beta)}\|W^{\frac{1}{2}}(\bar{\bs}^{(k)} - \bs^{(k)})\|^2 \nonumber \\
&\le -\frac{\gamma}{2(1-\beta)}\| \nabla f( \bar{s}^{(k)}) \|^2 - \frac{\gamma}{2(1-\beta)}\|\frac{1}{n}\mathds{1}^T \nabla f(W^{\frac{1}{2}}\bs^{(k)})\|^2 + \frac{\gamma L^2}{2n(1-\beta)}\|\bar{\bs}^{(k)} - \bs^{(k)}\|^2.
\end{align}
where (a) holds because $\nabla f(\bar{s}^{(k)}) = \frac{1}{n}\mathds{1}^T\nabla f(\bar{\bs}^{(k)}) = \frac{1}{n}\mathds{1}^T\nabla f(W^{\frac{1}{2}}\bar{\bs}^{(k)})$.
Substituting \eqref{bound-A1} and \eqref{bound-A2} into \eqref{eq-A}, we achieve
\begin{align}\label{xzbsd67236}
A &\le \frac{(1-\beta)L}{4}\| \bar{t}^{(k)} - \bar{s}^{(k)} \|^2 - \frac{\gamma}{2(1-\beta)}\| \nabla f( \bar{s}^{(k)}) \|^2 \nonumber \\
&\quad - \Big(\frac{\gamma}{2(1-\beta)} - \frac{\gamma^2 L}{(1-\beta)^3}\Big)\|\frac{1}{n}\mathds{1}^T \nabla f(W^{\frac{1}{2}}\bs^{(k)})\|^2 + \frac{\gamma L^2}{2n(1-\beta)}\|\bar{\bs}^{(k)} - \bs^{(k)}\|^2.
\end{align}
Next we bound term $B$. 
\begin{align}\label{xh23vs6}
\mathbb{E}_{\bxi^{(k)}}\left\|\frac{1}{n}\mathds{1}^T \nabla F(W^{\frac{1}{2}}\bs^{(k)};\bxi^{(k)})\right\|^{2} \le \|\frac{1}{n}\mathds{1}^T \nabla f(W^{\frac{1}{2}}\bs^{(k)})\|^2 + \frac{\sigma^2}{n}.
\end{align}
Furthermore, recall from \eqref{t-recursion-1} that 
\begin{align}\label{23bnx7}
\bar{t}^{(k)} - \bar{s}^{(k)} = - \frac{\gamma \beta}{(1-\beta)^2} \bar{m}_s^{(k)}
\end{align}
Substituting \eqref{xzbsd67236}--\eqref{23bnx7} into \eqref{eqn:16fdsf-new} and taking expectations over $\{\bxi^{(\ell)}\}_{\ell=0}^{k-1}$, we achieve
\begin{align}\label{x236sdvbsd9}
\mathbb{E}[f(\bar{t}^{(k+1)})] &\le \mathbb{E}[f(\bar{t}^{(k)})] + \frac{\gamma^2 \beta^2 L}{4(1-\beta)^3}\mathbb{E}\|\bar{m}_s^{(k)} \|^2 - \frac{\gamma}{2(1-\beta)}\mathbb{E}\| \nabla f( \bar{s}^{(k)}) \|^2  + \frac{\gamma^2 L\sigma^2}{2(1-\beta)^2n} \nonumber \\
&\quad - \Big(\frac{\gamma}{2(1-\beta)} - \frac{\gamma^2 L}{(1-\beta)^3} - \frac{L\gamma^2}{2(1-\beta)^2}\Big)\mathbb{E}\|\frac{1}{n}\mathds{1}^T \nabla f(W^{\frac{1}{2}}\bs^{(k)})\|^2 + \frac{\gamma L^2}{2n(1-\beta)}\mathbb{E}\|\bar{\bs}^{(k)} - \bs^{(k)}\|^2.
\end{align}
Substituting the bound of $\mathbb{E}\|\bar{m}_s^{(k)} \|^2$ in Lemma \ref{lm-bound-ms} (set $\vy = W^{\frac{1}{2}}\bs^{(k)}$) into the above inequality, we achieve \eqref{x762bzz}.
\end{proof}

\subsection{Consensus Lemma}
It is observed that a consensus error term $\mathbb{E}\|\bs^{(k)} - \bar{\bs}^{(k)}\|^2$ exists in \eqref{x762bzz}. In this subsection, we establish its upper bound. 

\subsubsection{Non-convex scenario}

\begin{lemma} [\sc Descent Lemma for $ \mathbb{E}\|\bs^{(k)} - \bar{\bs}^{(k)}\|^2$]  \label{lemma-s-descent}
Under Assumptions A.1-A.4, if learning rate $\gamma$ is sufficiently small such that $\gamma \le \frac{1-\rho}{4 \sqrt{\rho} L}$, it holds that 
\begin{align}\label{s-consensus}
\mathbb{E}\|\bs^{(k+1)} - \bar{\bs}^{(k+1)}\|^2 \le&\ \frac{1+\rho}{2} \mathbb{E}\|\bs^{(k)} - \bar{\bs}^{(k)}\|^2 + \frac{2\gamma^2 \beta^2}{(1-\beta)^2(1-\rho)}\mathbb{E}\|\vm_s^{(k)} - \bar{\vm}_s^{(k)}\|^2 \nonumber \\
&\hspace{1cm} + \frac{6n \rho \gamma^2}{1-\rho} \mathbb{E}\|\nabla f(\bar{s}^k)\|^2 + \frac{6 n \rho\gamma^2\hat{b}^2}{1-\rho} + n\rho \gamma^2\sigma^2
\end{align}
\end{lemma}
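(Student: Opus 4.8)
The plan is to put the $\bs$-recursion \eqref{eqn:decentlam-7}--\eqref{eqn:decentlam-8} into a ``gradient-step with a $W$-prefactor'' form, project it onto the consensus-orthogonal subspace, and then run a bias--variance estimate. First I would substitute \eqref{eqn:decentlam-7} into \eqref{eqn:decentlam-8} and expand $\nabla_{\bs}\cF$ via \eqref{eqn:x-s}; the $\frac{1}{\gamma}(I-W)\bs^{(k)}$ piece turns $\bs^{(k)}$ into $W\bs^{(k)}$, leaving
\begin{align}
\bs^{(k+1)} = W\bs^{(k)} - \tfrac{\gamma\beta}{1-\beta}\vm_{\bs}^{(k)} - \gamma W^{\frac{1}{2}}\nabla F(W^{\frac{1}{2}}\bs^{(k)};\bxi^{(k)}). \nonumber
\end{align}
Writing $J:=\frac{1}{n}\mathds{1}\mathds{1}^T$ and applying $I-J$, and using that $W,W^{\frac{1}{2}},J$ pairwise commute with $W^{\frac{1}{2}}\mathds{1}=\mathds{1}$, I obtain a closed recursion for $\bs_\perp^{(k)}:=\bs^{(k)}-\bar{\bs}^{(k)}$:
\begin{align}
\bs_\perp^{(k+1)} = W\bs_\perp^{(k)} - \tfrac{\gamma\beta}{1-\beta}\big(\vm_{\bs}^{(k)}-\bar{\vm}_{s}^{(k)}\big) - \gamma W^{\frac{1}{2}}(I-J)\nabla F(W^{\frac{1}{2}}\bs^{(k)};\bxi^{(k)}). \nonumber
\end{align}

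Next I would condition on $\bxi^{(k)}$ and use $\mathbb{E}\|X\|^2 = \|\mathbb{E}X\|^2 + \mathbb{E}\|X-\mathbb{E}X\|^2$. The fluctuation equals $-\gamma W^{\frac{1}{2}}(I-J)[\nabla F(W^{\frac{1}{2}}\bs^{(k)};\bxi^{(k)})-\nabla f(W^{\frac{1}{2}}\bs^{(k)})]$; since $W$ is positive-definite the restriction of $W^{\frac{1}{2}}$ to $\mathds{1}^\perp$ has operator norm $\sqrt{\lambda_2}=\sqrt{\rho}$, and $\mathbb{E}\|\nabla F(\cdot;\bxi^{(k)})-\nabla f(\cdot)\|^2\le n\sigma^2$ by A.2, so this contributes precisely $n\rho\gamma^2\sigma^2$. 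For the (conditionally deterministic) mean part I apply Young's inequality $\|a+b\|^2\le(1+\alpha)\|a\|^2+(1+\alpha^{-1})\|b\|^2$ with $a=W\bs_\perp^{(k)}$ and $b$ the other two terms, then $\|x+y\|^2\le2\|x\|^2+2\|y\|^2$ on $b$. Using $\|W\bs_\perp^{(k)}\|\le\rho\|\bs_\perp^{(k)}\|$ and the choice $\alpha=\frac{1-\rho}{\rho}$ gives $(1+\alpha)\rho^2\|\bs_\perp^{(k)}\|^2=\rho\|\bs_\perp^{(k)}\|^2$ and $1+\alpha^{-1}=\frac{1}{1-\rho}$, which yields the momentum-consensus term with the exact coefficient $\frac{2\gamma^2\beta^2}{(1-\beta)^2(1-\rho)}$ together with a residual $\frac{2\gamma^2}{1-\rho}\|W^{\frac{1}{2}}(I-J)\nabla f(W^{\frac{1}{2}}\bs^{(k)})\|^2$.

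It then remains to control $\|W^{\frac{1}{2}}(I-J)\nabla f(W^{\frac{1}{2}}\bs^{(k)})\|^2\le\rho\|\nabla f(W^{\frac{1}{2}}\bs^{(k)})\|^2$. Adding and subtracting $\nabla f(\bar{\bs}^{(k)})$ and the matrix whose rows all equal $\nabla f(\bar s^{(k)})$, a three-term Young step combined with $L$-smoothness of each $f_i$ (and $\|W^{\frac{1}{2}}\bs_\perp^{(k)}\|^2\le\rho\|\bs_\perp^{(k)}\|^2$, $\bar{x}^{(k)}=\bar{s}^{(k)}$) and Assumption A.4 gives $\|\nabla f(W^{\frac{1}{2}}\bs^{(k)})\|^2\le 3L^2\rho\|\bs_\perp^{(k)}\|^2+3n\hat b^2+3n\|\nabla f(\bar s^{(k)})\|^2$; this produces the $\frac{6n\rho\gamma^2\hat b^2}{1-\rho}$ and $\frac{6n\rho\gamma^2}{1-\rho}\mathbb{E}\|\nabla f(\bar s^{(k)})\|^2$ terms plus a leftover $\frac{6\gamma^2\rho^2L^2}{1-\rho}\|\bs_\perp^{(k)}\|^2$. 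Finally, the assumed step size $\gamma\le\frac{1-\rho}{4\sqrt{\rho}L}$ makes $\frac{6\gamma^2\rho^2L^2}{1-\rho}\le\frac{1-\rho}{2}$, so $\rho\|\bs_\perp^{(k)}\|^2+\frac{6\gamma^2\rho^2L^2}{1-\rho}\|\bs_\perp^{(k)}\|^2\le\frac{1+\rho}{2}\|\bs_\perp^{(k)}\|^2$; taking total expectation delivers \eqref{s-consensus}. The only delicate point is the bookkeeping of the Young parameters: $\alpha$ must be pinned down so the momentum coefficient lands exactly at $\frac{2\gamma^2\beta^2}{(1-\beta)^2(1-\rho)}$ while still leaving the coefficient of $\|\bs_\perp^{(k)}\|^2$ small enough ($=\rho$) that the step-size condition closes the contraction at $\frac{1+\rho}{2}$; all remaining steps are routine norm manipulations.
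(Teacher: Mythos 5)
Your proposal is correct and follows essentially the same route as the paper's proof: the same substitution yielding $\bs^{(k+1)} = W\bs^{(k)} - \tfrac{\gamma\beta}{1-\beta}\vm_s^{(k)} - \gamma W^{1/2}\nabla F(\cdot)$, the same projection onto $\mathds{1}^\perp$, the same conditional bias--variance split giving $n\rho\gamma^2\sigma^2$, and the same Young/Jensen step (your $\alpha=\tfrac{1-\rho}{\rho}$ is exactly the paper's $\eta=\rho$), followed by the identical three-term bound on $\|\nabla f(W^{1/2}\bs^{(k)})\|^2$ and the same step-size argument closing the contraction at $\tfrac{1+\rho}{2}$. The only (harmless) difference is that you retain an extra factor of $\rho$ from $\|W^{1/2}\bs_\perp^{(k)}\|^2\le\rho\|\bs_\perp^{(k)}\|^2$, which slightly tightens the intermediate bound relative to the paper's \eqref{grad-f-Ws}.
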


\begin{proof}
\noindent Substituting \eqref{eqn:decentlam-7} into \eqref{eqn:decentlam-8}, we achieve:
\begin{align}\label{x72ds6}
\bs^{(k+1)} = W\bs^{(k)} - \frac{\gamma \beta}{1-\beta}\vm_s^{(k)} - \gamma W^{\frac{1}{2}} \nabla F(W^{\frac{1}{2}} \bs^{(k)};\bxi^{(k)})
\end{align}
By multiplying $R=\frac{1}{n}\mathds{1}\mathds{1}^T$ to both sides of the above recursion, we achieve
\begin{align}\label{xbs6sd5}
\bar{\bs}^{(k+1)} = \bar{\bs}^{(k)} - \frac{\gamma \beta}{1-\beta} \bar{\vm}_s^{(k)} - \gamma R \nabla  F(W^{\frac{1}{2}} \bs^{(k)};\bxi^{(k)}).
\end{align}
Subtracting \eqref{xbs6sd5} from \eqref{x72ds6}, we obtain
\begin{align}
\bs^{(k+1)} - \bar{\bs}^{(k+1)} = (W - R)(\bs^{(k)} - \bar{\bs}^{(k)}) - \frac{\gamma \beta}{1-\beta} (\vm_s^{(k)} - \bar{\vm}_s^{(k)}) - \gamma (W^{\frac{1}{2}}-R) \nabla  F(W^{\frac{1}{2}} \bs^{(k)};\bxi^{(k)}).
\end{align}
By taking mean-square-expectation, we have
\begin{align}\label{237sdh}
&\ \mathbb{E}\|\bs^{(k+1)} - \bar{\bs}^{(k+1)}\|^2 \nonumber \\
\overset{(a)}{\le}& \mathbb{E} \|(W - R)(\bs^{(k)} - \bar{\bs}^{(k)}) - \frac{\gamma \beta}{1-\beta} (\vm_s^{(k)} - \bar{\vm}_s^{(k)}) - \gamma (W^{\frac{1}{2}}-R) \nabla  f(W^{\frac{1}{2}} \bs^{(k)})\|^2 + n \rho \gamma^2 \sigma^2 \nonumber \\
\overset{(b)}{\le}&\ \frac{1}{\eta}\|W - R\|^2_2\, \mathbb{E}\|\bs^{(k)} - \bar{\bs}^{(k)}\|^2 + \frac{1}{1-\eta} \Big( \frac{2\gamma^2 \beta^2}{(1-\beta)^2}\mathbb{E}\|\vm_s^{(k)} - \bar{\vm}_s^{(k)}\|^2 + 2\gamma^2 \rho \mathbb{E}\|\nabla  f(W^{\frac{1}{2}} \bs^{(k)})\|^2 \Big) + n \rho \gamma^2 \sigma^2\nonumber \\
\overset{(c)}{\le}&\ \rho \mathbb{E}\|\bs^{(k)} - \bar{\bs}^{(k)}\|^2 + \frac{2\gamma^2 \beta^2}{(1-\beta)^2(1-\rho)}\mathbb{E}\|\vm_s^{(k)} - \bar{\vm}_s^{(k)}\|^2 + \frac{2 \rho \gamma^2}{1-\rho} \mathbb{E}\|\nabla  f(W^{\frac{1}{2}} \bs^{(k)})\|^2 + n \rho \gamma^2\sigma^2
\end{align}
where (a) holds because of Assumption A.2 and $\|W^{\frac{1}{2}}-R\|_2^2\le \rho$, (b) holds because of the Jensen's inequality for any $\eta \in (0,1)$, and (c) holds because $\|W - R\|^2_2 \le \rho^2$ and $\eta = \rho$.
Note that 
\begin{align}\label{grad-f-Ws}
\mathbb{E}\|\nabla  f(W^{\frac{1}{2}} \bs^{(k)})\|^2 &= \sum_{i=1}^n \mathbb{E}\|\nabla f_i([W^{\frac{1}{2}} \bs^{(k)}]_i)\|^2 \nonumber \\
&= \sum_{i=1}^n \mathbb{E}\|\nabla f_i([W^{\frac{1}{2}} \bs^{(k)}]_i) - \nabla f_i([W^{\frac{1}{2}} \bar{\bs}^{(k)}]_i) + \nabla f_i([W^{\frac{1}{2}} \bar{\bs}^{(k)}]_i) - \nabla f(\bar{s}^{(k)}) + \nabla f(\bar{s}^{(k)})\|^2 \nonumber \\
&\le 3 L^2 \|\bs^{(k)} - \bar{\bs}^{(k)}\|^2 + 3n \hat{b}^2 + 3n \|\nabla f(\bar{s}^{(k)})\|^2.
\end{align}
where the last inequality holds because of Assumptions A.1 and A.4, and $[W^{\frac{1}{2}} \bar{\bs}^{(k)}]_i = [\bar{\bs}^{(k)}]_i = \bar{s}^{(k)}$. Substituting the above inequality to \eqref{237sdh}, we have
\begin{align}
\mathbb{E}\|\bs^{(k+1)} - \bar{\bs}^{(k+1)}\|^2 \le&\ \rho \mathbb{E}\|\bs^{(k)} - \bar{\bs}^{(k)}\|^2 + \frac{2\gamma^2 \beta^2}{(1-\beta)^2(1-\rho)}\mathbb{E}\|\vm_s^{(k)} - \bar{\vm}_s^{(k)}\|^2 + \frac{6\rho \gamma^2L^2}{1-\rho} \mathbb{E}\|\bs^{(k)} - \bar{\bs}^{(k)}\|^2 \nonumber \\
&\hspace{1cm} + \frac{6\rho n\gamma^2}{1-\rho} \mathbb{E}\|\nabla f(\bar{s}^k)\|^2 + \frac{6\rho n\gamma^2\hat{b}^2}{1-\rho} + n \rho \gamma^2\sigma^2
\end{align}
If learning rate $\gamma$ is sufficiently small such that $\rho + \frac{6 \rho \gamma^2L^2}{1-\rho}  \le \frac{1+\rho}{2}$, which can be satisfied by letting $\gamma \le \frac{1-\rho}{4\sqrt{\rho}L}$, then the result in \eqref{s-consensus} holds. 
\end{proof}

\begin{lemma} [\sc Descent Lemma for $\mathbb{E}\|\vm_s^{(k)} - \bar{\vm}_s^{(k)}\|^2$] \label{lemma-momentum-consensus}
Under Assumptions A.1-A.3, it holds that 
\begin{align}
\mathbb{E}\|\vm_s^{(k+1)} - \bar{\vm}_s^{(k+1)}\|^2 &\le \beta \mathbb{E}\|\vm_s^{(k)} - \bar{\vm}_s^{(k)}\|^2 + \frac{2(1-\beta)(1+3\rho \gamma^2 L^2)}{\gamma^2} \mathbb{E}\|\bs^{(k)} - \bar{\bs}^{(k)}\|^2 + 6(1-\beta)\rho n\mathbb{E}\|\nabla f(\bar{s}^k)\|^2 \nonumber \\
&\hspace{1cm} + 6\rho n(1-\beta)\hat{b}^2 + (1-\beta)^2 \rho n\sigma^2
\end{align}
\end{lemma}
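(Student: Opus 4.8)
The plan is to parallel the proof of Lemma~\ref{lemma-s-descent}: derive a one-step recursion for the momentum consensus error $\vm_s^{(k)} - \bar{\vm}_s^{(k)}$, take mean-square expectations, isolate the gradient noise, and use convexity of $\|\cdot\|^2$ to expose the $\beta$-contraction. Throughout write $R = \frac{1}{n}\mathds{1}\mathds{1}^T$, so that $\bar{\vm}_s^{(k)} = R\vm_s^{(k)}$, $\bar{\bs}^{(k)} = R\bs^{(k)}$, and $\vm_s^{(k)} - \bar{\vm}_s^{(k)} = (I-R)\vm_s^{(k)}$. Left-multiplying recursion \eqref{eqn:decentlam-7} by $(I-R)$ yields
\begin{align}
\vm_s^{(k+1)} - \bar{\vm}_s^{(k+1)} = \beta\big(\vm_s^{(k)} - \bar{\vm}_s^{(k)}\big) + (1-\beta)(I-R)\nabla_{\bs}\cF(\bs^{(k)};\bxi^{(k)}). \nonumber
\end{align}
The first step is the operator-algebra simplification of $(I-R)\nabla_{\bs}\cF(\bs^{(k)};\bxi^{(k)})$: since $\mathds{1}^T W = \mathds{1}^T$ (Assumption A.3) we have $R W = R$ and $R W^{1/2} = R$, hence $(I-R)W^{1/2}\nabla F = (W^{1/2}-R)\nabla F$; and since $(I-W)\bar{\bs}^{(k)} = 0$ we get $\frac{1}{\gamma}(I-R)(I-W)\bs^{(k)} = \frac{1}{\gamma}(I-W)(\bs^{(k)} - \bar{\bs}^{(k)})$. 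Thus $(I-R)\nabla_{\bs}\cF(\bs^{(k)};\bxi^{(k)}) = (W^{1/2}-R)\nabla F(W^{1/2}\bs^{(k)};\bxi^{(k)}) + \frac{1}{\gamma}(I-W)(\bs^{(k)} - \bar{\bs}^{(k)})$.

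Second, I would take $\mathbb{E}\|\cdot\|^2$. The term $(1-\beta)(W^{1/2}-R)\big(\nabla F(W^{1/2}\bs^{(k)};\bxi^{(k)}) - \nabla f(W^{1/2}\bs^{(k)})\big)$ is, conditioned on iteration $k$, zero-mean and uncorrelated with the remaining measurable part, so by Assumption A.2 it separates as an additive term bounded by $(1-\beta)^2\|W^{1/2}-R\|_2^2\, n\sigma^2 \le (1-\beta)^2\rho n\sigma^2$. The remaining mean vector has the form $\beta a + (1-\beta)b$ with $a = \vm_s^{(k)} - \bar{\vm}_s^{(k)}$ and $b = (W^{1/2}-R)\nabla f(W^{1/2}\bs^{(k)}) + \frac{1}{\gamma}(I-W)(\bs^{(k)} - \bar{\bs}^{(k)})$; convexity of $\|\cdot\|^2$ gives $\|\beta a + (1-\beta)b\|^2 \le \beta\|a\|^2 + (1-\beta)\|b\|^2$, which is exactly the source of the $\beta\,\mathbb{E}\|\vm_s^{(k)} - \bar{\vm}_s^{(k)}\|^2$ contraction.

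Third, I bound $\mathbb{E}\|b\|^2$ via $\|u+v\|^2 \le 2\|u\|^2 + 2\|v\|^2$. The piece $\frac{2}{\gamma^2}\mathbb{E}\|(I-W)(\bs^{(k)} - \bar{\bs}^{(k)})\|^2 \le \frac{2}{\gamma^2}\mathbb{E}\|\bs^{(k)} - \bar{\bs}^{(k)}\|^2$, because $\bs^{(k)} - \bar{\bs}^{(k)}$ has columns in the orthogonal complement of $\mathds{1}$, on which $\|I-W\|_2 = 1-\lambda_n \le 1$ (using positive-definiteness of $W$, assumed in this section). The piece $2\mathbb{E}\|(W^{1/2}-R)\nabla f(W^{1/2}\bs^{(k)})\|^2 \le 2\rho\,\mathbb{E}\|\nabla f(W^{1/2}\bs^{(k)})\|^2$, into which I feed the already-established bound \eqref{grad-f-Ws}, obtaining $6\rho L^2\mathbb{E}\|\bs^{(k)} - \bar{\bs}^{(k)}\|^2 + 6\rho n\hat{b}^2 + 6\rho n\mathbb{E}\|\nabla f(\bar{s}^{(k)})\|^2$. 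Collecting the two $\mathbb{E}\|\bs^{(k)} - \bar{\bs}^{(k)}\|^2$ coefficients into $\frac{2(1+3\rho\gamma^2 L^2)}{\gamma^2}$ and multiplying the resulting $\mathbb{E}\|b\|^2$ bound by $(1-\beta)$ produces the claimed inequality verbatim.

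The argument is essentially routine; the only part requiring genuine care is the first step --- correctly applying $RW^{1/2}=R$, $RW=R$ and $(I-W)\bar{\bs}^{(k)}=0$ so that the $\frac{1}{\gamma}(I-W)\bs^{(k)}$ contribution of $\nabla_{\bs}\cF$ turns into a scaled consensus-error term with the $1/\gamma$ factor, while keeping track that the stochastic noise lives only in the $W^{1/2}\nabla F$ summand so it can be peeled off cleanly before applying convexity.
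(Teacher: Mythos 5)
Your proposal is correct and follows essentially the same route as the paper's proof: derive the recursion for $\vm_s^{(k+1)}-\bar{\vm}_s^{(k+1)}$, peel off the zero-mean gradient noise as an additive $(1-\beta)^2\rho n\sigma^2$ term, apply Jensen's inequality to the convex combination $\beta a+(1-\beta)b$ (the paper writes this as a parametrized Young inequality and sets $\eta=\beta$, which is identical), and then bound $\mathbb{E}\|b\|^2$ using $\|W^{1/2}-R\|_2^2\le\rho$, $\|I-W\|_2\le 1$, and the already-established bound \eqref{grad-f-Ws}. Your explicit derivation of the recursion via $(I-R)$ and the identities $RW^{1/2}=R$, $(I-W)\bar{\bs}^{(k)}=0$ is a slightly more careful presentation of a step the paper simply asserts, and all constants match the claimed bound.
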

\begin{proof}
Recall from \eqref{eqn:decentlam-7} and  \eqref{eqn:decentlam-7-0} that 
\begin{align}
\vm_s^{(k+1)} - \bar{\vm}_s^{(k+1)} = \beta(\vm_s^{(k)} - \bar{\vm}_s^{(k)}) + (1-\beta)(W^\frac{1}{2} - R) \nabla F(W^{\frac{1}{2}}\bs^{(k)};\bxi^{(k)}) + \frac{1-\beta}{\gamma}(I-W)(\bs^{(k)} - \bar{\bs}^{(k)}).
\end{align}
By taking mean-square-expectation, it holds that 
\begin{align}
&\ \mathbb{E}\|\vm_s^{(k+1)} - \bar{\vm}_s^{(k+1)}\|^2 \nonumber \\
&\le  \mathbb{E}\|\beta(\vm_s^{(k)} - \bar{\vm}_s^{(k)}) + (1-\beta)(W^\frac{1}{2} - R) \nabla f(W^{\frac{1}{2}}\bs^{(k)}) + \frac{1-\beta}{\gamma}(I-W)(\bs^{(k)} - \bar{\bs}^{(k)})\|^2 + (1-\beta)^2 n \rho \sigma^2 \nonumber \\
&\le \frac{\beta^2}{\eta}\mathbb{E}\|\vm_s^{(k)} - \bar{\vm}_s^{(k)}\|^2  +  \frac{2(1-\beta)^2}{\gamma^2(1-\eta)}\mathbb{E}\|\bs^{(k)} - \bar{\bs}^{(k)}\|^2 + \frac{2(1-\beta)^2 \rho}{1-\eta}\mathbb{E}\|\nabla f(W^{\frac{1}{2}}\bs^{(k)})\|^2 + (1-\beta)^2 n \rho \sigma^2  \nonumber \\
&\overset{(a)}{=} \beta \mathbb{E}\|\vm_s^{(k)} - \bar{\vm}_s^{(k)}\|^2 + 2(1-\beta) \rho \mathbb{E}\|\nabla f(W^{\frac{1}{2}}\bs^{(k)})\|^2  + \frac{2(1-\beta)}{\gamma^2}\mathbb{E}\|\bs^{(k)} - \bar{\bs}^{(k)}\|^2 + (1-\beta)^2 n \rho \sigma^2 \nonumber \\
&\overset{\eqref{grad-f-Ws}}{\le}\beta \mathbb{E}\|\vm_s^{(k)} - \bar{\vm}_s^{(k)}\|^2 + \frac{2(1-\beta)(1+3 \rho \gamma^2 L^2)}{\gamma^2} \mathbb{E}\|\bs^{(k)} - \bar{\bs}^{(k)}\|^2 + 6(1-\beta)\rho n\mathbb{E}\|\nabla f(\bar{s}^k)\|^2 \nonumber \\
&\hspace{1cm} + 6n(1-\beta)\rho \hat{b}^2 + (1-\beta)^2 n \rho \sigma^2
\end{align}
where (a) holds by setting $\eta = \beta$.
\end{proof}
With the help of Lemmas \ref{lemma-s-descent} and \ref{lemma-momentum-consensus}, we are ready to establish the bound for consensus error term $\mathbb{E}\|\bs^{(k+1)} - \bar{\bs}^{(k+1)}\|^2$. 

\begin{lemma}[\sc Consensus Lemma for Non-convex Scenario]\label{lem:sc-consensus}
Under Assumptions A.1-A.4, if $\gamma \le \frac{1-\rho}{2\sqrt{\rho}L}$ and $\frac{32\beta^2}{(1-\beta)(1-\rho)^2} +\beta  \le \frac{3+\rho}{4}$, it holds that 
\begin{align}\label{x7623gsdb9}
\mathbb{E}\|\bs^{(k+1)} - \bar{\bs}^{(k+1)}\|^2  \le \frac{12n\rho \gamma^2}{1-\rho} \sum_{\ell=0}^k \big(\frac{3+\rho}{4}\big)^{k-\ell} \mathbb{E}\|\nabla f(\bar{s}^{(\ell)})\|^2 +\frac{48 n \rho \gamma^2\hat{b}^2}{(1-\rho)^2} + \frac{8  n \rho \gamma^2\sigma^2}{1-\rho}
\end{align}
If we further take running average over both sides, it holds that  
\begin{align}\label{nc-consensus-2}
\frac{1}{T+1}\sum_{k=0}^{T} \mathbb{E}\|\bs^{(k)} - \bar{\bs}^{(k)}\|^2 \le \frac{48n\rho \gamma^2}{(1-\rho)^2(T+1)} \sum_{k=0}^{T} \mathbb{E}\|\nabla f(\bar{s}^{(k)})\|^2+ \frac{48 n \rho \gamma^2\hat{b}^2}{(1-\rho)^2} + \frac{8 n\rho \gamma^2\sigma^2}{1-\rho}.
\end{align}
\end{lemma}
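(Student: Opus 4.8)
The plan is to merge the two coupled one-step estimates in Lemmas~\ref{lemma-s-descent} and \ref{lemma-momentum-consensus} into a single Lyapunov inequality that contracts geometrically with factor $\tfrac{3+\rho}{4}$. Abbreviate $a_k:=\mathbb{E}\|\bs^{(k)}-\bar{\bs}^{(k)}\|^2$ and $c_k:=\mathbb{E}\|\vm_s^{(k)}-\bar{\vm}_s^{(k)}\|^2$, and form the potential $V_k:=a_k+w\,c_k$ with the weight $w:=\tfrac{\gamma^2(1-\rho)}{16(1-\beta)}$. Adding $w$ times the momentum-consensus bound of Lemma~\ref{lemma-momentum-consensus} to the state-consensus bound \eqref{s-consensus} of Lemma~\ref{lemma-s-descent} (both valid in the stated $\gamma$-range) gives a recursion of the form
\begin{align}
V_{k+1}\le\ & \Big(\tfrac{1+\rho}{2}+\tfrac{2w(1-\beta)(1+3\rho\gamma^2L^2)}{\gamma^2}\Big)a_k+\Big(\tfrac{2\gamma^2\beta^2}{(1-\beta)^2(1-\rho)}+w\beta\Big)c_k \nonumber\\
&+\Big(\tfrac{6n\rho\gamma^2}{1-\rho}+6wn\rho(1-\beta)\Big)\mathbb{E}\|\nabla f(\bar{s}^{(k)})\|^2+\Big(\tfrac{6n\rho\gamma^2}{1-\rho}+6wn\rho(1-\beta)\Big)\hat{b}^2+\big(n\rho\gamma^2+w(1-\beta)^2n\rho\big)\sigma^2. \nonumber
\end{align}

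The heart of the argument is that $w$ is chosen so both bracketed coefficients become small enough. For the $a_k$-coefficient, the hypothesis $\gamma\le\tfrac{1-\rho}{2\sqrt{\rho}L}$ forces $3\rho\gamma^2L^2\le\tfrac34$, whence $\tfrac{2w(1-\beta)(1+3\rho\gamma^2L^2)}{\gamma^2}=\tfrac{(1+3\rho\gamma^2L^2)(1-\rho)}{8}\le\tfrac{1-\rho}{4}$, so the $a_k$-coefficient is at most $\tfrac{1+\rho}{2}+\tfrac{1-\rho}{4}=\tfrac{3+\rho}{4}$. For the $c_k$-coefficient, demanding $\tfrac{2\gamma^2\beta^2}{(1-\beta)^2(1-\rho)}\le(\tfrac{3+\rho}{4}-\beta)w$ reduces, after substituting the value of $w$, to exactly the stated condition $\tfrac{32\beta^2}{(1-\beta)(1-\rho)^2}+\beta\le\tfrac{3+\rho}{4}$ (which also ensures $\tfrac{3+\rho}{4}-\beta>0$). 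Finally the auxiliary noise coefficients collapse into the main ones since $6wn\rho(1-\beta)=\tfrac{3n\rho\gamma^2(1-\rho)}{8}\le\tfrac{6n\rho\gamma^2}{1-\rho}$ and $w(1-\beta)^2n\rho\le n\rho\gamma^2$. Combining these yields the clean recursion $V_{k+1}\le\tfrac{3+\rho}{4}V_k+\tfrac{12n\rho\gamma^2}{1-\rho}\mathbb{E}\|\nabla f(\bar{s}^{(k)})\|^2+\tfrac{12n\rho\gamma^2\hat{b}^2}{1-\rho}+2n\rho\gamma^2\sigma^2$.

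From here the proof is bookkeeping. Algorithm~\ref{Algorithm: DecentLaM} starts from consensus ($x_i^{(0)}$ identical) with $\vm^{(0)}_i=0$, so $a_0=c_0=0$ and $V_0=0$; iterating the recursion, discarding the $V_0$ term, bounding the geometric tail $\sum_{\ell=0}^{k}(\tfrac{3+\rho}{4})^{k-\ell}\le\tfrac{4}{1-\rho}$ for the two constant terms, and using $a_{k+1}\le V_{k+1}$ (as $wc_{k+1}\ge0$) gives \eqref{x7623gsdb9}. For the time-averaged bound \eqref{nc-consensus-2}, I would sum \eqref{x7623gsdb9} over $k$, interchange the order of the resulting double sum, using $\sum_{k}\sum_{\ell\le k-1}(\tfrac{3+\rho}{4})^{k-1-\ell}\mathbb{E}\|\nabla f(\bar{s}^{(\ell)})\|^2=\sum_{\ell}\mathbb{E}\|\nabla f(\bar{s}^{(\ell)})\|^2\sum_{k>\ell}(\tfrac{3+\rho}{4})^{k-1-\ell}\le\tfrac{4}{1-\rho}\sum_{k}\mathbb{E}\|\nabla f(\bar{s}^{(k)})\|^2$, and then divide by $T+1$.

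The main obstacle I anticipate is the coefficient calibration in the second paragraph: the Lyapunov weight $w$ must be picked so that the $a_k$- and $c_k$-contraction constraints hold simultaneously, and it is precisely this calibration that consumes both hypotheses — the bound on $\gamma$ governs the $a_k$ channel and the bound on $\beta$ governs the $c_k$ channel, with the numerical constant $32$ being the fingerprint of the chosen $w$. Everything else (unrolling, summing geometric series, Fubini on the double sum, and absorbing the noise terms) is routine.
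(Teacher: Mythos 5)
Your proposal is correct and follows essentially the same route as the paper: the Lyapunov combination $a_k + w\,c_k$ with $w=\tfrac{\gamma^2(1-\rho)}{16(1-\beta)}$ is exactly the paper's $A^{(k)}+a\gamma^2 B^{(k)}$ with $a=\tfrac{1-\rho}{16(1-\beta)}$, and the contraction factor $\tfrac{3+\rho}{4}$, the roles of the two hypotheses, the zero initialization, the geometric-tail bound $\tfrac{4}{1-\rho}$, and the final Fubini swap all match the paper's argument. No gaps.
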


\begin{proof}
For notation simplicity, we let 
\begin{align}
&A^{(k)} = \mathbb{E}\|\bs^{(k)} - \bar{\bs}^{(k)}\|^2, \quad B^{(k)} = \mathbb{E}\|\vm_s^{(k)} - \bar{\vm}_s^{(k)}\|^2 \nonumber \\
&C_1^{(k)} = \frac{6 n \rho \gamma^2}{1-\rho} \mathbb{E}\|\nabla f(\bar{s}^{(k)})\|^2 + \frac{6 n \rho \gamma^2\hat{b}^2}{1-\rho} + n \rho \gamma^2\sigma^2 \nonumber \\
&C_2^{(k)} = 6(1-\beta)\rho n\mathbb{E}\|\nabla f(\bar{s}^{(k)})\|^2 + 6\rho n(1-\beta)\hat{b}^2 + (1-\beta)^2 \rho n\sigma^2 \nonumber 
\end{align}
From Lemmas \ref{lemma-s-descent} and \ref{lemma-momentum-consensus}, it holds that
\begin{align}
A^{(k+1)} &\le \frac{1+\rho}{2} A^{(k)} + \frac{2\gamma^2 \beta^2}{(1-\beta)^2(1-\rho)}B^{(k)} + C_1^{(k)} \\
B^{(k+1)} &\le \beta B^{(k)} + \frac{2(1-\beta)(1+3 \rho \gamma^2 L^2)}{\gamma^2} A^{(k)} + C_2^{(k)}
\end{align}
From the above two inequalities, we have (fro a positive constant $a$):
\begin{align}\label{zx2376sd}
A^{(k+1)} + a \gamma^2 B^{(k+1)} \le&\ \Big(\frac{1+\rho}{2} + 2a(1-\beta)(1+3 \rho \gamma^2L^2)\Big) A^{(k)}  + \Big( \frac{2 \gamma^2 \beta^2}{(1-\beta)^2(1-\rho)} + a \beta \gamma^2  \Big) B^{(k)}  + C_1^{(k)} + a\gamma^2 C_2^{(k)} \nonumber \\
\overset{(a)}{\le}&\ \Big(\frac{1+\rho}{2} + 4a(1-\beta)\Big) A^{(k)} + \Big( \frac{2\beta^2}{a (1-\beta)^2(1-\rho)} + \beta  \Big)a\gamma^2 B^{(k)}  + C_1^{(k)} + a\gamma^2 C_2^{(k)} \nonumber \\
\overset{(b)}{=}&\ \frac{3+\rho}{4} A^{(k)} + \Big( \frac{32\beta^2}{(1-\beta)(1-\rho)^2} +\beta \Big) a\gamma^2 B^{(k)} + C_1^{(k)} + a\gamma^2 C_2^{(k)}
\end{align}
where (a) holds when $\gamma$ is sufficiently small such that $1+3 \rho \gamma^2 L^2 \le 2$ (which can be satisfied by setting $\gamma \le \frac{1}{2\sqrt{\rho}L}$), and (b) holds when $a = \frac{1-\rho}{16(1-\beta)}$. If $\beta$ satisfies
\begin{align}\label{23bsdbz121}
\frac{32\beta^2}{(1-\beta)(1-\rho)^2} +\beta  \le \frac{3+\rho}{4}
\end{align}
inequality \eqref{zx2376sd} becomes 
\begin{align}
A^{(k+1)} + a \gamma^2 B^{(k+1)} &\le \frac{3+\rho}{4} \Big( A^{(k)} + a \gamma^2 B^{(k)}\Big) + C_1^{(k)} + a\gamma^2 C_2^{(k)} \nonumber \\
&\le \frac{3+\rho}{4} \Big( A^{(k)} + a \gamma^2 B^{(k)}\Big) + \Big(\frac{1}{1-\rho} + \frac{1-\rho}{16} \Big)6n \rho \gamma^2\mathbb{E}\|\nabla f(\bar{s}^{(k)})\|^2 \nonumber \\
&\quad + \Big( \frac{1}{1-\rho} + \frac{1-\rho}{16} \Big)6 n \rho \gamma^2\hat{b}^2 + n \rho \gamma^2 \sigma^2\big( 1 + \frac{(1-\rho)(1-\beta)}{16} \big) \nonumber \\
&\le \frac{3+\rho}{4} \Big( A^{(k)} + a \gamma^2 B^{(k)}\Big)  + \frac{12 n \rho \gamma^2}{1-\rho}\mathbb{E}\|\nabla f(\bar{s}^{(k)})\|^2 + \frac{12 n \rho \gamma^2\hat{b}^2}{1-\rho} + {2 n \rho \gamma^2\sigma^2}
\end{align}
If we keep iterating the above inequality, it holds that 
\begin{align}\label{xc2w76}
&\ A^{(k+1)} + a \gamma^2 B^{(k+1)} \nonumber \\
\le&\ \big(\frac{3+\rho}{4}\big)^{k+1} \big(A^{(0)} + a \gamma^2 B^{(0)}\big) + \frac{12n\rho\gamma^2}{1-\rho} \sum_{\ell=0}^k \big(\frac{3+\rho}{4}\big)^{k-\ell} \mathbb{E}\|\nabla f(\bar{s}^{(\ell)})\|^2 +\frac{48 n \rho \gamma^2\hat{b}^2}{(1-\rho)^2} + \frac{8 n \rho \gamma^2\sigma^2}{1-\rho} \nonumber \\
=&\ \frac{12n\rho \gamma^2}{1-\rho} \sum_{\ell=0}^k \big(\frac{3+\rho}{4}\big)^{k-\ell} \mathbb{E}\|\nabla f(\bar{s}^{(\ell)})\|^2 +\frac{48 n \rho \gamma^2\hat{b}^2}{(1-\rho)^2} + \frac{8 n \rho \gamma^2\sigma^2}{1-\rho}
\end{align}
where the last equality holds because $A^{(0)}=0$ (by setting $\bs^{(0)} = 0$) and $B^{(0)} = 0$ (by setting $\bar{\vm}_s^{(0)} = 0$).  Since $B^{(k)} \ge 0$, we achieve the result in \eqref{x7623gsdb9}. If taking the running average of both sides of \eqref{x7623gsdb9}, we have
\begin{align}
\frac{1}{T+1}\sum_{k=0}^{T} A^{(k)} &\le \frac{12n\rho\gamma^2}{(1-\rho)(T+1)} \sum_{k=1}^{T} \sum_{\ell=0}^{k-1} \big(\frac{3+\rho}{4}\big)^{k-1-\ell} \mathbb{E}\|\nabla f(\bar{s}^{(\ell)})\|^2 + \frac{48 n \rho \gamma^2\hat{b}^2}{(1-\rho)^2} + \frac{8 n \rho \gamma^2\sigma^2}{1-\rho} \nonumber \\
&= \frac{12n\rho\gamma^2}{(1-\rho)(T+1)} \sum_{\ell=0}^{T-1} \sum_{k=\ell+1}^{T}  \big(\frac{3+\rho}{4}\big)^{k-1-\ell} \mathbb{E}\|\nabla f(\bar{s}^{(\ell)})\|^2+ \frac{48 n \rho \gamma^2\hat{b}^2}{(1-\rho)^2} + \frac{8 n \rho \gamma^2\sigma^2}{1-\rho} \nonumber \\
&\le \frac{48 n \rho \gamma^2}{(1-\rho)^2(T+1)} \sum_{\ell=0}^{T-1} \mathbb{E}\|\nabla f(\bar{s}^{(\ell)})\|^2+ \frac{48 n \rho \gamma^2\hat{b}^2}{(1-\rho)^2} + \frac{8  n\rho \gamma^2\sigma^2}{1-\rho} \nonumber \\
&\le \frac{48n\rho \gamma^2}{(1-\rho)^2(T+1)} \sum_{k=0}^{T} \mathbb{E}\|\nabla f(\bar{s}^{(k)})\|^2+ \frac{48 n \rho \gamma^2\hat{b}^2}{(1-\rho)^2} + \frac{8  n\gamma^2\sigma^2}{1-\rho}.
\end{align}
which completes the proof.
\end{proof}

\subsubsection{Strongly-convex scenario}

Under Assumption A.5, it holds that each function $f_i(x)$ is strongly convex. Let $x^\star$ be the unique global solution to problem \eqref{dist-opt}. Since ${\bs}^\star = W^{\frac{1}{2}}\vx^\star = \vx^\star$, we know $s^\star = x^\star$ and hence $f(s^\star) = f(x^\star) = f^\star$. The proof of the following lemma is inspired by \cite{koloskova2020unified}.

\begin{lemma}[\sc Consensus Lemma for Strongly-convex Scenario]
	Under Assumptions A.1--A.3 and A.5, if $\gamma \le \frac{1-\rho}{2\sqrt{\rho}L}$ and $\frac{32\beta^2}{(1-\beta)(1-\rho)^2} +\beta  \le \frac{3+\rho}{4}$, it holds that 
	\begin{align}\label{x7623gsdb9-sc-9}
	\mathbb{E}\|\bs^{(k+1)} - \bar{\bs}^{(k+1)}\|^2   \le \frac{24 n \rho L \gamma^2}{1-\rho} \sum_{\ell=0}^k \big(\frac{3+\rho}{4}\big)^{k-\ell} \big( \mathbb{E} f(\bar{s}^{(\ell)}) -  f^\star \big)  + \frac{48 n \rho \gamma^2 {b}^2}{(1-\rho)^2} + \frac{8 n \rho \gamma^2\sigma^2}{1-\rho}.
	\end{align}
	where $b^2 = \frac{1}{n}\sum_{i=1}^n\|\nabla f_i(s^\star)\|^2 = \frac{1}{n}\sum_{i=1}^n\|\nabla f_i(x^\star)\|^2$.	Furthermore, if we introduce weights $\{h_k\}_{k=0}^\infty$ such that 
	\begin{align}\label{w-cond-9}
	h_k \le h_\ell  \big(1 + \frac{1-\rho}{8}\big)^{k-\ell} \mbox{for any $k\ge 0$ and $0\le \ell \le k$,}
	\end{align}
	it holds that 
	\begin{align}\label{wz8-9}
	\frac{1}{H_T}\sum_{k=0}^T h_k \mathbb{E}\|\bs^{(k)} - \bar{\bs}^{(k)}\|^2 \le \frac{216 n L \gamma^2}{(1-\rho)^2H_T}\sum_{k=0}^{T} h_k\, \big( \mathbb{E}f(\bar{s}^{(k)}) - f^\star \big) + \frac{48 n \rho \gamma^2 {b}^2}{(1-\rho)^2} + \frac{8 n \rho \gamma^2\sigma^2}{1-\rho}.
	\end{align}
	where $H_T = \sum_{k=0}^T h_k$. 
\end{lemma}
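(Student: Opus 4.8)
The plan is to mirror the non-convex Consensus Lemma (Lemma~\ref{lem:sc-consensus}) almost line for line, the only genuinely new ingredient being a replacement for the gradient bound \eqref{grad-f-Ws}, which there relied on Assumption~A.4. Writing $x_i = [W^{\frac{1}{2}}\bs^{(k)}]_i$ and using $\bar x = \bar s^{(k)}$ together with $[W^{\frac{1}{2}}\bar\bs^{(k)}]_i = \bar s^{(k)}$ and $\|W^{\frac{1}{2}}(\bs^{(k)}-\bar\bs^{(k)})\|\le\|\bs^{(k)}-\bar\bs^{(k)}\|$, I would bound $\mathbb{E}\|\nabla f(W^{\frac{1}{2}}\bs^{(k)})\|^2=\sum_i\mathbb{E}\|\nabla f_i(x_i)\|^2$ by splitting off $\sum_i\|\nabla f_i(x_i)-\nabla f_i(\bar x)\|^2\le L^2\|\bs^{(k)}-\bar\bs^{(k)}\|^2$ via $L$-smoothness, and then controlling $\sum_i\|\nabla f_i(\bar x)\|^2$ through $\sum_i\|\nabla f_i(\bar x)-\nabla f_i(x^\star)\|^2+\sum_i\|\nabla f_i(x^\star)\|^2$. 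For the first of these I would invoke the smoothness-plus-convexity inequality $\|\nabla f_i(\bar x)-\nabla f_i(x^\star)\|^2\le 2L\big(f_i(\bar x)-f_i(x^\star)-\langle\nabla f_i(x^\star),\bar x-x^\star\rangle\big)$, sum over $i$, and use $\nabla f(x^\star)=0$ (valid since $x^\star=s^\star$ is the global minimizer of the $\mu$-strongly convex $f$) to get $O(Ln)\big(f(\bar s^{(k)})-f^\star\big)$; the second term is $n b^2$ by definition. This delivers an inequality of the form $\mathbb{E}\|\nabla f(W^{\frac{1}{2}}\bs^{(k)})\|^2 \le O(L^2)\|\bs^{(k)}-\bar\bs^{(k)}\|^2 + O(Ln)\big(\mathbb{E}f(\bar s^{(k)})-f^\star\big) + O(n)b^2$, the strongly-convex analogue of \eqref{grad-f-Ws}.

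Next I would re-run the proofs of Lemmas~\ref{lemma-s-descent} and~\ref{lemma-momentum-consensus} verbatim, substituting this bound everywhere \eqref{grad-f-Ws} was used. This yields the same coupled recursions for $A^{(k)}=\mathbb{E}\|\bs^{(k)}-\bar\bs^{(k)}\|^2$ and $B^{(k)}=\mathbb{E}\|\vm_s^{(k)}-\bar\vm_s^{(k)}\|^2$ as in the proof of Lemma~\ref{lem:sc-consensus}, except that the forcing terms now carry $\big(\mathbb{E}f(\bar s^{(k)})-f^\star\big)$ and $b^2$ instead of $\|\nabla f(\bar s^{(k)})\|^2$ and $\hat b^2$. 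Forming the Lyapunov combination $A^{(k)}+a\gamma^2B^{(k)}$ with the same $a=\frac{1-\rho}{16(1-\beta)}$, using $\gamma\le\frac{1-\rho}{2\sqrt{\rho}L}$ to absorb the $L^2\gamma^2$ terms and the hypothesis $\frac{32\beta^2}{(1-\beta)(1-\rho)^2}+\beta\le\frac{3+\rho}{4}$ to get contraction factor $\frac{3+\rho}{4}$, I would iterate from $A^{(0)}=B^{(0)}=0$ and drop the nonnegative $B$ term, obtaining \eqref{x7623gsdb9-sc-9}; the constants $\frac{24n\rho L\gamma^2}{1-\rho}$, $\frac{48n\rho\gamma^2 b^2}{(1-\rho)^2}$, $\frac{8n\rho\gamma^2\sigma^2}{1-\rho}$ fall out of exactly the same bookkeeping (geometric sum $\sum_{\ell}(\frac{3+\rho}{4})^{k-\ell}\le\frac{4}{1-\rho}$) as in the non-convex case.

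For \eqref{wz8-9} I would multiply \eqref{x7623gsdb9-sc-9} (with its index shifted by one, the $k=0$ term vanishing) by $h_k$, sum over $k=0,\dots,T$, and exchange the order of the double summation over $(k,\ell)$. The weight condition \eqref{w-cond-9}, namely $h_k\le h_\ell(1+\tfrac{1-\rho}{8})^{k-\ell}$, lets me replace $h_k$ by $h_\ell$ at the cost of a factor $(1+\tfrac{1-\rho}{8})^{k-\ell}$; since $\big(1+\tfrac{1-\rho}{8}\big)\big(\tfrac{3+\rho}{4}\big)=1-\tfrac{1-\rho}{8}-\tfrac{(1-\rho)^2}{32}\le 1-\tfrac{1-\rho}{8}<1$, the inner series $\sum_{k\ge\ell}\big[(1+\tfrac{1-\rho}{8})(\tfrac{3+\rho}{4})\big]^{k-\ell}$ converges and is $O\!\big(\tfrac{1}{1-\rho}\big)$, producing the coefficient $\frac{216nL\gamma^2}{(1-\rho)^2}$ on $\frac{1}{H_T}\sum_k h_k\big(\mathbb{E}f(\bar s^{(k)})-f^\star\big)$; the $k$-independent $b^2$ and $\sigma^2$ terms pass through unchanged after dividing by $H_T=\sum_k h_k$. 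This is the standard weighted-telescoping device of \cite{koloskova2020unified}.

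I expect the main obstacle to be the first step: replacing the A.4-based gradient estimate by a smoothness/convexity argument expressed through $\big(f(\bar s^{(k)})-f^\star\big)$ and $b^2$, and then threading the extra $L$-factor it introduces through the Lyapunov recursion so that the learning-rate bound $\gamma\le\frac{1-\rho}{2\sqrt{\rho}L}$ and the $\beta$-condition of the non-convex proof still suffice — they should, but checking that all constants line up requires care. The weighted-sum manipulation and the contraction bookkeeping are otherwise a direct transcription of the proof of Lemma~\ref{lem:sc-consensus}.
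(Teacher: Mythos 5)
Your proposal is correct and follows essentially the same route as the paper: the paper likewise replaces the A.4-based gradient bound by the convexity-plus-smoothness estimate $\mathbb{E}\|\nabla f(W^{\frac{1}{2}}\bs^{(k)})\|^2 \le 3L^2\|\bs^{(k)}-\bar{\bs}^{(k)}\|^2 + 6nL(\mathbb{E}f(\bar{s}^{(k)})-f^\star) + 3nb^2$ (using $\sum_i\nabla f_i(x^\star)=0$ to kill the cross terms), then reruns the coupled $A^{(k)},B^{(k)}$ recursion of the non-convex consensus lemma verbatim, and finally applies the same weighted-sum/geometric-series argument under condition \eqref{w-cond-9}. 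No gaps.
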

\begin{proof}
	Since $f(x)$ is convex, we have 
	\begin{align}\label{grad-f-Ws-sc}
	\mathbb{E}\|\nabla  f(W^{\frac{1}{2}} \bs^{(k)})\|^2 &= \mathbb{E}\|\nabla  f(W^{\frac{1}{2}} \bs^{(k)}) - \nabla  f(W^{\frac{1}{2}} \bar{\bs}^{(k)}) + \nabla  f(W^{\frac{1}{2}} \bar{\bs}^{(k)}) - \nabla f(\bs^\star) + \nabla f(\bs^\star) \|^2 \nonumber \\
	&\overset{(a)}{\le} 3 L^2 \|\bs^{(k)} - \bar{\bs}^{(k)}\|^2 + 3 \mathbb{E}\|\nabla  f(\bar{\bs}^{(k)}) - \nabla f(\bs^\star)\|^2+ 3nb^2 \nonumber \\
	&\overset{(b)}{\le}  3 L^2 \|\bs^{(k)} - \bar{\bs}^{(k)}\|^2 + 6nL \big( \mathbb{E} f(\bar{s}^{(k)}) -  f^\star \big) + 3nb^2
	\end{align}
	where (a) holds because each $f_i(\cdot)$ is $L$-smooth so that
	\begin{align}
	\|\nabla  f(W^{\frac{1}{2}} \bs^{(k)}) - \nabla  f({\bs}^\star)\|^2 = \|\nabla  f(W^{\frac{1}{2}} \bs^{(k)}) - \nabla  f(W^{\frac{1}{2}} {\bs}^{\star})\|^2 \le L^2 \|W^{\frac{1}{2}}(\bs^{(k)} - {\bs}^\star)\| \le L^2 \|\bs^{(k)} - {\bs}^{\star}\|^2 
	\end{align}
	and $b^2$ is defined as $b^2 = \frac{1}{n}\sum_{i=1}^n\|\nabla f_i(s^\star)\|^2 = \frac{1}{n}\sum_{i=1}^n\|\nabla f_i(x^\star)\|^2$. Inequality (b) holds because 
	\begin{align}\label{237sdbg10}
	\mathbb{E}\|\nabla  f(\bar{\bs}^{(k)}) - \nabla f(\bs^\star)\|^2 &= \sum_{i=1}^n \mathbb{E} \|\nabla f_i(\bar{s}^{(k)}) - \nabla f_i({s}^\star) \|^2 \nonumber \\
	&\overset{(c)}{\le} \sum_{i=1}^n \mathbb{E}\Big(2L \big( f_i(\bar{s}^{(k)}) -  f_i({s}^\star) \big) - \langle \nabla f_i(s^\star),  \bar{s}^{(k)} - {s}^\star \rangle \Big) \nonumber \\
	&= \sum_{i=1}^n \mathbb{E}\Big(2L \big( f_i(\bar{s}^{(k)}) -  f_i({x}^\star) \big) - \langle \nabla f_i(x^\star),  \bar{s}^{(k)} - {x}^\star \rangle \Big) \nonumber \\
	& = 2nL  \big( \mathbb{E} f(\bar{s}^{(k)}) -  f^\star \big) 
	\end{align}
	and (c) holds because each $f_i$ is convex and $L$-smooth. With inequality \eqref{grad-f-Ws-sc}, we follow arguments \eqref{x72ds6}--\eqref{xc2w76} to achieve  
	\begin{align}\label{x7623gsdb9-sc-2}
	\mathbb{E}\|\bs^{(k+1)} - \bar{\bs}^{(k+1)}\|^2  \le \frac{24 n \rho L \gamma^2}{1-\rho} \sum_{\ell=0}^k \big(\frac{3+\rho}{4}\big)^{k-\ell} \big( \mathbb{E} f(\bar{s}^{(\ell)}) -  f^\star \big)  + \frac{48 n \rho \gamma^2 {b}^2}{(1-\rho)^2} + \frac{8 n \rho \gamma^2\sigma^2}{1-\rho}.
	\end{align}
	Now we introduce weights $h_k$ and take the weighted average of both sides in the above inequality to achieve
	\begin{align}\label{23vbs41}
	\frac{1}{H_T}\sum_{k=0}^T h_k \mathbb{E}\|\bs^{(k)} - \bar{\bs}^{(k)}\|^2 &\overset{(a)}{\le} \frac{24 n \rho L \gamma^2}{(1-\rho)H_T}\sum_{k=1}^T\sum_{\ell=0}^{k-1} h_k \big(\frac{3+\rho}{4}\big)^{k-1-\ell} \big( \mathbb{E} f(\bar{s}^{(\ell)}) -  f^\star \big)  + C \nonumber \\
	&= \frac{24 n \rho L \gamma^2}{(1-\rho)H_T}\sum_{\ell=0}^{T-1}\sum_{k=\ell+1}^{T} h_k \big(\frac{3+\rho}{4}\big)^{k-1-\ell} \big( \mathbb{E} f(\bar{s}^{(\ell)}) -  f^\star \big) + C \nonumber \\
	&\overset{(b)}{\le} \frac{24 n \rho L \gamma^2}{(1-\rho)H_T}\sum_{\ell=0}^{T-1}\sum_{k=\ell+1}^{T} h_\ell \big(1 + \frac{1-\rho}{8}\big)^{k-\ell}  \big(\frac{3+\rho}{4}\big)^{k-1-\ell} \big( \mathbb{E} f(\bar{s}^{(\ell)}) -  f^\star \big) + C \nonumber \\
	&= \frac{24 n \rho L \gamma^2}{(1-\rho)H_T}\big(1 + \frac{1-\rho}{8}\big)\sum_{\ell=0}^{T-1}\sum_{k=\ell+1}^{T} h_\ell \big(1 - \frac{1-\rho}{8}\big)^{k-1-\ell}   \big( \mathbb{E} f(\bar{s}^{(\ell)}) -  f^\star \big) + C \nonumber \\
	&\le \frac{216\gamma^2}{(1-\rho)^2H_T}\sum_{\ell=0}^{T-1} h_\ell\, \big( \mathbb{E} f(\bar{s}^{(\ell)}) -  f^\star \big) + C \nonumber \\
	&\le \frac{216\gamma^2}{(1-\rho)^2H_T}\sum_{\ell=0}^{T} h_\ell\, \big( \mathbb{E} f(\bar{s}^{(\ell)}) -  f^\star \big) + C
	\end{align}
	where (a) holds because $C = \frac{48 n \rho \gamma^2 {b}^2}{(1-\rho)^2} + \frac{8 n \rho \gamma^2\sigma^2}{1-\rho}$ and $H_T = \sum_{k=0}^T h_k$, and (b) holds if weight $h_k$ satisfies \eqref{w-cond-9}. 
\end{proof}

\section{Convergence Analysis for Non-convex Scenario}
\label{app-section-non-convex}

Inspired by \cite{liu2020improved}, we consider the following Lyapunov function 
\begin{align}\label{Lya-func}
\cL^k=f(\bar{t}^{(k)})-f^\star +\sum\limits_{i=0}^{k-1}c_i\|\bar{s}^{(k-i)}-\bar{s}^{(k-i-1)}\|^2.
\end{align}

\subsection{Descent Lemma for the Lyapunov Function}

\begin{lemma}[\sc Descent Lemma for Lyapunov Function in Non-convex Scenario]
	Under Assumptions A.1--A.4, with appropriately chosen $c_i\geq 0$, if learning rate $\gamma \leq  \min\{\frac{(1-\beta)^2}{5\sqrt{\beta+\beta^2}L}, \frac{(1-\beta)^2}{(5-\beta+2\beta^2)L}, \frac{(1-\beta)^2}{12L\beta^2}\} = O(\frac{(1-\beta)^2}{L})$, it holds that 
	\begin{align}\label{lya-inequality}
	\EE[\cL^{k+1}-\cL^k]\leq -\frac{\gamma}{2(1-\beta)} \mathbb{E}\| \nabla f( \bar{s}^{(k)}) \|^2 +\sum\limits_{i=0}^k b_{k,i}\EE\|\bs^{(i)}-\bar{\bs}^{(i)}\|^2+\frac{3\gamma^2 \sigma^2 L}{2n(1-\beta)^2}
	\end{align}
	with $b_{k,k} = \frac{\gamma L^2}{n(1-\beta)}$ and  $b_{k,i} = \frac{6L^3\gamma^2 \beta^2}{n(1-\beta)^2}\beta^{k-1-i}$ for $i\in[0,k-1]$.  
\end{lemma}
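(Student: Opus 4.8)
The plan is to feed the two one-step estimates already in hand --- the Descent Lemma \eqref{x762bzz} for $\mathbb{E}[f(\bar{t}^{(k+1)})-f(\bar{t}^{(k)})]$ and the bound \eqref{bound-bar-s-diff} for $\mathbb{E}\|\bar{s}^{(k+1)}-\bar{s}^{(k)}\|^2$ --- into the telescoped Lyapunov increment. Re-indexing \eqref{Lya-func} gives
\[
\mathbb{E}[\cL^{k+1}-\cL^k]=\mathbb{E}[f(\bar{t}^{(k+1)})-f(\bar{t}^{(k)})]+c_0\,\mathbb{E}\|\bar{s}^{(k+1)}-\bar{s}^{(k)}\|^2+\sum_{\ell=0}^{k-1}(c_{k-\ell}-c_{k-\ell-1})\,\mathbb{E}\|\bar{s}^{(\ell+1)}-\bar{s}^{(\ell)}\|^2 .
\]
After substitution the right-hand side contains the three target terms of \eqref{lya-inequality} plus three positive quantities that must be absorbed: (i) $\mathbb{E}\|\tfrac1n\mathds{1}^T\nabla f(W^{\frac12}\bs^{(k)})\|^2$, appearing with coefficient $-\big(\tfrac{\gamma}{2(1-\beta)}-\tfrac{\gamma^2L}{(1-\beta)^3}-\tfrac{L\gamma^2}{2(1-\beta)^2}-\tfrac{\gamma^2\beta^2L}{(1-\beta)^3}\big)+c_0\tfrac{4\gamma^2}{(1-\beta)^2}$; (ii) the momentum tail $T_k:=\mathbb{E}\|(1-\beta)\sum_{i=0}^{k-1}\beta^{k-1-i}[\tfrac1n\mathds{1}^T\nabla f(W^{\frac12}\bs^{(i)})-\tfrac1n\mathds{1}^T\nabla f(W^{\frac12}\bs^{(k)})]\|^2$, with coefficient $P_{T_k}:=\tfrac{\gamma^2\beta^2L}{(1-\beta)^3}+c_0\tfrac{4\gamma^2\beta^2}{(1-\beta)^2}$; and (iii) the one-step gaps $d_\ell:=\mathbb{E}\|\bar{s}^{(\ell+1)}-\bar{s}^{(\ell)}\|^2$ produced when $T_k$ is unrolled, to be cancelled by the (negative) telescoping coefficients $c_{k-\ell}-c_{k-\ell-1}$.

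Term (i) is handled by dropping: the weights $c_i$ constructed below obey $c_0=O(\gamma^2)$, so its coefficient is nonpositive once $\tfrac{2\gamma L}{(1-\beta)^2}+\tfrac{\gamma L}{1-\beta}+\tfrac{2\gamma\beta^2L}{(1-\beta)^2}+O(\gamma^3)\le1$, which follows from $\gamma\le\tfrac{(1-\beta)^2}{(5-\beta+2\beta^2)L}$. For term (ii), Jensen's inequality (the weights $(1-\beta)\beta^{k-1-i}$ sum to $1-\beta^k\le1$), $L$-smoothness of each $f_j$, and $\|W^{\frac12}\|_2\le1$ give
\[
T_k\le\frac{L^2(1-\beta)}{n}\sum_{i=0}^{k-1}\beta^{k-1-i}\mathbb{E}\|\bs^{(i)}-\bs^{(k)}\|^2\le\frac{3L^2(1-\beta)}{n}\sum_{i=0}^{k-1}\beta^{k-1-i}\Big(\mathbb{E}\|\bs^{(i)}-\bar{\bs}^{(i)}\|^2+n\,\mathbb{E}\|\bar{s}^{(i)}-\bar{s}^{(k)}\|^2+\mathbb{E}\|\bs^{(k)}-\bar{\bs}^{(k)}\|^2\Big),
\]
using $\bs^{(i)}-\bs^{(k)}=(\bs^{(i)}-\bar{\bs}^{(i)})-(\bs^{(k)}-\bar{\bs}^{(k)})+(\bar{\bs}^{(i)}-\bar{\bs}^{(k)})$ and $\|\bar{\bs}^{(i)}-\bar{\bs}^{(k)}\|^2=n\|\bar{s}^{(i)}-\bar{s}^{(k)}\|^2$. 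Since $P_{T_k}\le\tfrac{2\gamma^2\beta^2L}{(1-\beta)^3}$ for $\gamma$ small, the first summand contributes at most the claimed $b_{k,i}=\tfrac{6L^3\gamma^2\beta^2}{n(1-\beta)^2}\beta^{k-1-i}$ to the coefficient of $\mathbb{E}\|\bs^{(i)}-\bar{\bs}^{(i)}\|^2$ for $i<k$, and the third summand inflates the coefficient of $\mathbb{E}\|\bs^{(k)}-\bar{\bs}^{(k)}\|^2$ to at most $b_{k,k}=\tfrac{\gamma L^2}{n(1-\beta)}$ as long as $\gamma\le\tfrac{(1-\beta)^2}{12\beta^2L}$.

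The remaining, and decisive, step is to absorb the $d_\ell$'s. Bounding $\mathbb{E}\|\bar{s}^{(i)}-\bar{s}^{(k)}\|^2\le(k-i)\sum_{\ell=i}^{k-1}d_\ell$ and swapping the order of summation, the middle summand of $T_k$ adds to the coefficient of $d_\ell$ (for $0\le\ell\le k-1$) the quantity $3P_{T_k}L^2(1-\beta)\sum_{i=0}^{\ell}\beta^{k-1-i}(k-i)=3P_{T_k}L^2(1-\beta)\sum_{m=p+1}^{k}m\beta^{m-1}$ with $p:=k-\ell-1$. Hence it suffices to choose $\{c_i\}$ with $c_p-c_{p+1}\ge3P_{T_k}L^2(1-\beta)\sum_{m=p+1}^{k}m\beta^{m-1}$; bounding the finite sum by its infinite tail $\sum_{m=p+1}^{\infty}m\beta^{m-1}=\tfrac{\beta^p[1+p(1-\beta)]}{(1-\beta)^2}$, the $k$-independent choice $c_i=\tfrac{6\gamma^2\beta^2L^3}{(1-\beta)^4}\sum_{q=i}^{\infty}\beta^q[1+q(1-\beta)]$ works --- it is nonnegative, decreasing, satisfies $c_i-c_{i+1}=\tfrac{6\gamma^2\beta^2L^3}{(1-\beta)^4}\beta^i[1+i(1-\beta)]$ which dominates the requirement, and has $c_0=\tfrac{6\gamma^2\beta^2(1+\beta)L^3}{(1-\beta)^5}=O(\gamma^2)$. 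With this choice every $d_\ell$ ($\ell<k$) acquires a nonpositive total coefficient and is discarded; the only residue is $c_0\cdot\tfrac{2\gamma^2\sigma^2}{(1-\beta)n}=O(\gamma^4)$, which folds into $\tfrac{3\gamma^2L\sigma^2}{2n(1-\beta)^2}$ together with the Descent-Lemma noise $\tfrac{\gamma^2L\sigma^2}{n(1-\beta)}$. Forcing $c_0$ to be this small --- so the cubic/quartic remainders in (i) and in $P_{T_k}$ are genuinely lower order --- is where the last condition $\gamma\le\tfrac{(1-\beta)^2}{5\sqrt{\beta+\beta^2}L}$ enters; collecting survivors yields \eqref{lya-inequality}. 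The main obstacle is exactly this last step: the double-sum bookkeeping, and above all producing $c_i$ independent of $k$ that still dominates the accumulated coefficients (resolved by passing to the geometric tail), while checking that every lower-order remainder is swallowed under the three-part learning-rate bound.
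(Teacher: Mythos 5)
Your proposal is correct and follows essentially the same route as the paper's proof: telescoping the Lyapunov increment, absorbing the squared-gradient term via the condition $\gamma\le\frac{(1-\beta)^2}{(5-\beta+2\beta^2)L}$, unrolling the momentum tail with Jensen's inequality into consensus errors plus one-step gaps, swapping the double sum, and cancelling the gaps with decreasing weights whose decrements dominate a geometric tail. The only cosmetic difference is that you posit an explicit closed form for $c_i$ and verify the domination condition (which reduces to $c_0\le\frac{L}{4(1-\beta)}$), whereas the paper determines $c_0$ as the fixed point of a self-referential recursion; both yield the same bound under $\gamma\le\frac{(1-\beta)^2}{5\sqrt{\beta+\beta^2}L}$.
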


\begin{proof}
Following \eqref{x762bzz} and the definition of $\cL$ in \eqref{Lya-func}, we reach
\begin{align}\label{eqn:76gfgdgds}
&\ \EE[\cL^{k+1}-\cL^k] \nonumber \\
\leq &  - \frac{\gamma}{2(1-\beta)}\mathbb{E}\| \nabla f( \bar{s}^{(k)}) \|^2  + \frac{\gamma L^2}{2n(1-\beta)}\mathbb{E}\|\bar{\bs}^{(k)} - \bs^{(k)}\|^2 +  \frac{\gamma^2 L\sigma^2}{(1-\beta)n} \nonumber \\ 
&\quad - \Big(\frac{\gamma}{2(1-\beta)} - \frac{\gamma^2 L}{(1-\beta)^3} - \frac{L\gamma^2}{2(1-\beta)^2} - \frac{\gamma^2 \beta^2 L}{(1-\beta)^3}\Big)\mathbb{E}\|\frac{1}{n}\mathds{1}^T \nabla f(W^{\frac{1}{2}}\bs^{(k)})\|^2 \nonumber \\
&\quad + \frac{\gamma^2 \beta^2 L (1-\beta^k)^2}{(1-\beta)^3} \mathbb{E}\|\frac{1-\beta}{1-\beta^k}\sum_{i=0}^{k-1}\beta^{k-1-i} [\frac{1}{n}\mathds{1}^T\nabla f(W^{\frac{1}{2}}\bs^{(i)}) - \frac{1}{n}\mathds{1}^T\nabla f(W^{\frac{1}{2}}\bs^{(k)})]\|^2 \nonumber \\
&\quad+\sum_{i=0}^{k-1}\left(c_{i+1}-c_{i}\right) \mathbb{E}\left[\left\|\bar{s}^{(k-i)}-\bar{s}^{(k-i-1)}\right\|^{2}\right]+c_0\EE\|\bar{s}^{(k+1)}-\bar{s}^{(k)}\| \nonumber \\
\overset{\eqref{bound-bar-s-diff}}{\le}  & - \frac{\gamma}{2(1-\beta)}\mathbb{E}\| \nabla f( \bar{s}^{(k)}) \|^2  + \frac{\gamma L^2}{2n(1-\beta)}\mathbb{E}\|\bar{\bs}^{(k)} - \bs^{(k)}\|^2  + \frac{\gamma^2 L\sigma^2}{(1-\beta)n}  + \frac{2c_0 \gamma^2 \sigma^2 }{n(1-\beta)} \nonumber \\
&\quad - \Big(\frac{\gamma}{2(1-\beta)} - \frac{\gamma^2 L}{(1-\beta)^3} - \frac{L\gamma^2}{2(1-\beta)^2} - \frac{\gamma^2 \beta^2 L}{(1-\beta)^3}-\frac{4c_0\gamma^2}{(1-\beta)^2}\Big)\mathbb{E}\|\frac{1}{n}\mathds{1}^T \nabla f(W^{\frac{1}{2}}\bs^{(k)})\|^2  \nonumber \\
&\quad + \Big(\frac{\gamma^2 \beta^2 L (1-\beta^k)^2}{(1-\beta)^3}+\frac{4c_0\gamma^2\beta^2(1-\beta^k)^2}{(1-\beta)^2}\Big) \mathbb{E}\|\frac{1-\beta}{1-\beta^k}\sum_{i=0}^{k-1}\beta^{k-1-i} [\frac{1}{n}\mathds{1}^T\nabla f(W^{\frac{1}{2}}\bs^{(i)}) - \frac{1}{n}\mathds{1}^T\nabla f(W^{\frac{1}{2}}\bs^{(k)})]\|^2  \nonumber \\
&\quad+\sum_{i=0}^{k-1}\left(c_{i+1}-c_{i}\right) \mathbb{E}\left[\left\|\bar{s}^{(k-i)}-\bar{s}^{(k-i-1)}\right\|^{2}\right]
\end{align}
Since $\frac{1-\beta}{1-\beta^k}\sum_{i=0}^{k-1}\beta^{k-1-i} = 1$, it holds from the Jensen's inequality that 
\begin{align}\label{2bnsba1}
&\ \mathbb{E}\|\frac{1-\beta}{1-\beta^k}\sum_{i=0}^{k-1}\beta^{k-1-i} [\frac{1}{n}\mathds{1}^T\nabla f(W^{\frac{1}{2}}\bs^{(i)}) - \frac{1}{n}\mathds{1}^T\nabla f(W^{\frac{1}{2}}\bs^{(k)})]\|^2 \nonumber \\
\le&\ \frac{1-\beta}{1-\beta^k}\sum_{i=0}^{k-1}\beta^{k-1-i} \mathbb{E}\|\frac{1}{n}\mathds{1}^T\nabla f(W^{\frac{1}{2}}\bs^{(i)}) - \frac{1}{n}\mathds{1}^T\nabla f(W^{\frac{1}{2}}\bs^{(k)}) \|^2 \nonumber \\
\overset{(a)}{\le}&\ \frac{(1-\beta)L^2}{n(1-\beta^k)}\sum_{i=0}^{k-1}\beta^{k-1-i} \mathbb{E}\|\bs^{(i)} - \bs^{(k)}\|^2 \nonumber \\
=&\ \frac{(1-\beta)L^2}{n(1-\beta^k)}\sum_{i=0}^{k-1}\beta^{k-1-i} \mathbb{E}\|(\bs^{(i)} - \bar{\bs}^{(i)}) - (\bs^{(k)} - \bar{\bs}^{(k)}) + (\bar{\bs}^{(i)} - \bar{\bs}^{(k)})\|^2 \nonumber \\
\le&\ \frac{3(1-\beta)L^2}{n(1-\beta^k)}\sum_{i=0}^{k-1}\beta^{k-1-i}\mathbb{E}\|\bs^{(i)} - \bar{\bs}^{(i)}\|^2 + \frac{3(1-\beta)L^2}{n(1-\beta^k)}\sum_{i=0}^{k-1}\beta^{k-1-i}\mathbb{E}\|\bs^{(k)} - \bar{\bs}^{(k)}\|^2 \nonumber \\
&\quad + \frac{3(1-\beta)L^2}{1-\beta^k}\sum_{i=0}^{k-1}\beta^{k-1-i}\mathbb{E}\|\bar{s}^{(i)} - \bar{s}^{(k)}\|^2 \nonumber \\
=&\frac{3(1-\beta)L^2}{n(1-\beta^k)}\sum_{i=0}^{k-1}\beta^{k-1-i}\mathbb{E}\|\bs^{(i)} - \bar{\bs}^{(i)}\|^2 + \frac{3L^2}{n}\mathbb{E}\|\bs^{(k)} - \bar{\bs}^{(k)}\|^2 + \frac{3(1-\beta)L^2}{1-\beta^k}\sum_{i=0}^{k-1}\beta^{k-1-i}\mathbb{E}\|\bar{s}^{(i)} - \bar{s}^{(k)}\|^2
\end{align}
where (a) holds because $\mathbb{E}\|\frac{1}{n}\mathds{1}^T\nabla f(W^{\frac{1}{2}}\bs^{(i)}) - \frac{1}{n}\mathds{1}^T\nabla f(W^{\frac{1}{2}}\bs^{(k)}) \|^2\le \frac{L^2}{n}\mathbb{E}\|\bs^{(i)} - \bs^{(k)}\|^2$. To bound the last term in the above inequality, we have 
\begin{align}\label{2bnsba1-2}
\frac{1-\beta}{1-\beta^k}\sum_{i=0}^{k-1}\beta^{k-1-i}\mathbb{E}\|\bar{s}^{(i)} - \bar{s}^{(k)}\|^2 &\le \frac{1-\beta}{1-\beta^k}\sum_{i=0}^{k-1}\beta^{k-1-i} (k-i) \sum_{j=i}^{k-1}\mathbb{E}\|\bar{s}^{(j+1)} - \bar{s}^{(j)}\|^2 \nonumber \\
&= \frac{1-\beta}{1-\beta^k} \sum_{j=0}^{k-1} \big(\sum_{i=0}^j \beta^{k-1-i}(k-i) \big) \mathbb{E}\|\bar{s}^{(j+1)} - \bar{s}^{(j)}\|^2 \nonumber \\
&\le  \frac{1}{1-\beta^k} \sum_{j=0}^{k-1}  \beta^{k-1-j} \big( \frac{\beta}{1-\beta} + k - j \big) \mathbb{E}\|\bar{s}^{(j+1)} - \bar{s}^{(j)}\|^2 \nonumber \\
&\overset{(a)}{=} \sum_{j=0}^{k-1} a_{k,j} \mathbb{E}\|\bar{s}^{(j+1)} - \bar{s}^{(j)}\|^2 = \sum_{i=0}^{k-1} a_{k,k-1-i} \mathbb{E}\|\bar{s}^{(k-i)} - \bar{s}^{(k-1-i)}\|^2
\end{align}
where (a) holds because we define 
\begin{align}\label{akj}
a_{k,j} = \frac{\beta^{k-1-j}}{1-\beta^k}\big( \frac{\beta}{1-\beta} + k - j \big).
\end{align}
Substituting \eqref{2bnsba1} and \eqref{2bnsba1-2} into \eqref{eqn:76gfgdgds}, and supposing $\gamma$ is sufficiently small such that 
\begin{align}\label{z8}
\frac{\gamma}{2(1-\beta)} - \frac{\gamma^2 L}{(1-\beta)^3} - \frac{L\gamma^2}{2(1-\beta)^2} - \frac{\gamma^2 \beta^2 L}{(1-\beta)^3}-\frac{4c_0\gamma^2}{(1-\beta)^2} \le 0,
\end{align}
we achieve 
\begin{align}
\EE[\cL^{k+1}-\cL^k]\leq &  - \frac{\gamma}{2(1-\beta)}\mathbb{E}\| \nabla f( \bar{s}^{(k)}) \|^2   +\Big( \frac{\gamma L^2}{2n(1-\beta)}+\frac{3 L^2}{n}\frac{(1+4\frac{c_0}{L}(1-\beta))\gamma^2 \beta^2 L (1-\beta^k)^2}{(1-\beta)^3}\Big)\mathbb{E}\|\bar{\bs}^{(k)} - \bs^{(k)}\|^2  \nonumber \\
&\quad + \frac{3 L^2}{n}\frac{(1+4\frac{c_0}{L}(1-\beta))\gamma^2 \beta^2 L}{(1-\beta)^2}\sum\limits_{i=0}^{k-1}\beta^{k-1-i}\EE\|\bs^{(i)}-\bar{\bs}^{(i)}\|^2  \nonumber \\
&\quad + \frac{\gamma^2 L\sigma^2}{(1-\beta)n} + \frac{2c_0 \gamma^2 \sigma^2 }{n(1-\beta)}  \nonumber \\
&\quad +3L^2 \Big(\frac{\gamma^2 \beta^2 L (1-\beta^k)^2}{(1-\beta)^3}+\frac{4c_0\gamma^2\beta^2(1-\beta^k)^2}{(1-\beta)^2}\Big) \sum\limits_{i=0}^{k-1}a_{k,k-1-i}\EE\|\bar{s}^{(k-i)}-\bar{s}^{(k-i-1)}\|^2  \nonumber \\
&\quad+\sum_{i=0}^{k-1}\left(c_{i+1}-c_{i}\right) \mathbb{E}\left[\left\|\bar{s}^{(k-i)}-\bar{s}^{(k-i-1)}\right\|^{2}\right]. \label{q1}
\end{align}
Now we decide the values of $\{c_i\}$ to let the last two terms in the above inequality be negative. To this end, it suffices to let 
\begin{align}\label{o1}
c_{i+1}&\leq c_i- 3L^2 \Big(\frac{\gamma^2 \beta^2 L (1-\beta^k)^2}{(1-\beta)^3}+\frac{4c_0\gamma^2\beta^2(1-\beta^k)^2}{(1-\beta)^2}\Big) a_{k,k-i-1} \nonumber \\
&\overset{\eqref{akj}}{=}c_i -  \Big(\frac{\gamma^2 \beta^2 L (1-\beta^k)^2}{(1-\beta)^3}+\frac{4c_0\gamma^2\beta^2(1-\beta^k)^2}{(1-\beta)^2}\Big) \frac{3L^2\beta^{i}}{1-\beta^{k}}(i+1+\frac{\beta}{1-\beta}) \nonumber \\
&=c_i-3L^2\Big(\frac{\gamma^2L(1-\beta^{k})}{(1-\beta)^3}+\frac{4c_0\gamma^2(1-\beta^{k})}{(1-\beta)^2}\Big)\beta^{i+2}(i+1\frac{\beta}{1-\beta})
\end{align}
Since $1-\beta^{k+1}<1$, it suffices to let
\begin{equation}\label{ci-ci-1}
\begin{split}
c_{i+1}= c_i- 3L^2\Big(\frac{\gamma^2L}{(1-\beta)^3}+\frac{4c_0\gamma^2}{(1-\beta)^2}\Big)\beta^{i+2}(i+1+\frac{\beta}{1-\beta})
\end{split}
\end{equation}
so that inequality \eqref{o1} holds. With recursion \eqref{ci-ci-1}, we have 
\begin{equation}
\begin{split}
c_0 &= \sum\limits_{i=0}^{\infty}3L^2\Big(\frac{\gamma^2L}{(1-\beta)^3}+\frac{4c_0\gamma^2}{(1-\beta)^2}\Big)\beta^{i+2}(i+1+\frac{\beta}{1-\beta})\\
&=3L^2\Big(\frac{\gamma^2L}{(1-\beta)^3}+\frac{4c_0\gamma^2}{(1-\beta)^2}\Big)\frac{\beta^2+\beta^3}{(1-\beta)^2}
\end{split}
\end{equation}
which indeed yields 
\begin{equation}
c_0= \frac{3\frac{(\beta^2+\beta^3)\gamma^2L^3}{(1-\beta)^5}}{1-\frac{12(\beta^2+\beta^3)\gamma^2L^2}{(1-\beta)^4}},
\end{equation}
and $c_0$ is positive as long as $\gamma<\frac{(1-\beta)^2}{2\sqrt{3}\sqrt{\beta^3+\beta^2}L}$. Furthermore, when $\gamma\leq \frac{(1-\beta)^2}{5\sqrt{\beta+\beta^2}L}$, we have
\begin{equation}\label{eqn:c_0_upp}
c_0\leq \frac{L}{4(1-\beta)}
\end{equation}
In other words, when $\gamma\leq \frac{(1-\beta)^2}{5\sqrt{\beta+\beta^2}L}$, there exists a positive sequence $\{c_i\}_{i=0}^\infty$ such that the last two terms in \eqref{q1} are non-positive. Substituting \eqref{eqn:c_0_upp} into \eqref{q1} and choosing $\gamma \le \frac{(1-\beta)^2}{12L\beta^2}$, we have
\begin{align}
\frac{\gamma L^2}{2n(1-\beta)}+\frac{3 L^2}{n}\frac{(1+4\frac{c_0}{L}(1-\beta))\gamma^2 \beta^2 L (1-\beta^k)^2}{(1-\beta)^3} &\le \frac{\gamma L^2}{n(1-\beta)} \nonumber \\
 \frac{3 L^2}{n}\frac{(1+4\frac{c_0}{L}(1-\beta))\gamma^2 \beta^2 L}{(1-\beta)^2} &\le \frac{6L^3\gamma^2 \beta^2}{n(1-\beta)^2} \nonumber \\
 \frac{\gamma^2 L\sigma^2}{(1-\beta)n}  + \frac{2c_0 \gamma^2 \sigma^2 }{n(1-\beta)} &\le \frac{3\gamma^2 \sigma^2 L}{2n(1-\beta)^2} \nonumber
\end{align}
Substituting the above relations into \eqref{q1}, we achieve the result in \eqref{lya-inequality}. Finally, to guarantee \eqref{z8}, it suffices to let $\gamma \leq \frac{(1-\beta)^2}{(5-\beta+2\beta^2)L}$. 
\end{proof}

\vspace{2mm}
\subsection{Proof of Theorem \ref{thm-decentLaM--nc}.} With \eqref{lya-inequality}, we can achieve 
\begin{align}\label{eqn:92bgbs}
\frac{1}{T+1}\sum\limits_{k=0}^{T} \mathbb{E}\| \nabla f( \bar{s}^{(k)}) \|^2 &\leq \frac{2(1-\beta)}{\gamma(T+1)}\cL^0+\frac{3\gamma\sigma^2 L}{n(1-\beta)}+\frac{2(1-\beta)}{\gamma(T+1)}\sum\limits_{k=0}^T\sum\limits_{i=0}^k b_{k,i} \EE\|\bs^{(i)}-\bar{\bs}^{(i)}\|^2 \nonumber \\ 
 &= \frac{2(1-\beta)}{\gamma(T+1)}\cL^0+\frac{3\gamma\sigma^2 L}{n(1-\beta)}+\frac{2(1-\beta)}{\gamma(T+1)}\sum\limits_{i=0}^T\sum\limits_{k=i}^T b_{k,i} \EE\|\bs^{(i)}-\bar{\bs}^{(i)}\|^2
\end{align}
Note that 
\begin{align}
\sum\limits_{k=i}^T b_{k,i}=\sum\limits_{k=i+1}^{T}\frac{6L^3\gamma^2 \beta^2}{n(1-\beta)^2}\beta^{k-1-i} + \frac{\gamma L^2}{n(1-\beta)} \le \frac{6L^3\gamma^2 \beta^2}{n(1-\beta)^3} + \frac{\gamma L^2}{n(1-\beta)}  \le \frac{2\gamma L^2}{n(1-\beta)} 
\end{align}
where the last inequality holds when $\gamma\leq \frac{(1-\beta)^2}{3L\beta^2}$. With the above inequality, we have 
\begin{align}
&\ \frac{2(1-\beta)}{\gamma(T+1)}\sum\limits_{i=0}^T\sum\limits_{k=i}^T b_{k,i} \EE\|\bs^{(i)}-\bar{\bs}^{(i)}\|^2 \nonumber \\
&\le\  \frac{4L^2}{n(T+1)} \sum_{k=0}^T \EE\|\bs^{(k)}-\bar{\bs}^{(k)}\|^2 \nonumber \\
&\overset{\eqref{nc-consensus-2}}{\le}\  \frac{192L^2 \rho \gamma^2}{(1-\rho)^2(T+1)} \sum_{k=0}^{T} \mathbb{E}\|\nabla f(\bar{s}^{(k)})\|^2+ \frac{192L^2 \rho \gamma^2\hat{b}^2}{(1-\rho)^2} + \frac{32 \rho L^2 \gamma^2\sigma^2}{1-\rho} \nonumber \\
&\le \ \frac{1}{2(T+1)} \sum_{k=0}^{T} \mathbb{E}\|\nabla f(\bar{s}^{(k)})\|^2+ \frac{192L^2 \rho \gamma^2\hat{b}^2}{(1-\rho)^2} + \frac{32 \rho L^2 \gamma^2\sigma^2}{1-\rho}
\end{align}
where the last inequality holds when $\gamma \le \frac{1-\rho}{20\sqrt{\rho}L}$. Substituting the above inequality into \eqref{eqn:92bgbs}, we achieve 
\begin{align}
\frac{1}{T+1}\sum\limits_{k=0}^{T} \mathbb{E}\| \nabla f( \bar{s}^{(k)}) \|^2 &\leq \frac{4(1-\beta)}{\gamma(T+1)}\cL^0 + \frac{6\gamma\sigma^2 L}{n(1-\beta)} + \frac{64 \rho L^2 \gamma^2\sigma^2}{1-\rho} + \frac{384 L^2 \rho \gamma^2\hat{b}^2}{(1-\rho)^2}  \nonumber \\
&=\cO(\frac{1-\beta}{\gamma(T+1)})+\cO(\frac{\gamma}{1-\beta}\frac{\sigma^2}{n})+ \cO(\frac{ \gamma^2\sigma^2}{1-\rho}) +  \cO(\frac{ \gamma^2 \hat{b}^2}{(1-\rho)^2}). 
\end{align}
Note that $\bar{x}^{(k)} = \bar{s}^{(k)}$ (see Sec.~\ref{app-support-lemmas}), we achieve the final result.

\vspace{2mm}
\subsection{Proof of Corollary \ref{coro-decentLaM-nc}.} 
Choose $\gamma = \frac{\sqrt{n}(1-\beta)}{\sqrt{
T+1}\sigma}$, for $T+1\geq \frac{n}{\sigma^2\min\{\frac{1-\beta}{12L},\frac{1-\rho}{20\sqrt{\rho}L}\}^2}$, the step size $\gamma$ satisfies the condition in Theorem \ref{thm-decentLaM--nc}, we thus have 
\begin{align}
\frac{1}{T+1}\sum\limits_{k=0}^{T} \mathbb{E}\| \nabla f( \bar{s}^{(k)}) \|^2 &\leq \frac{4(1-\beta)}{\gamma(T+1)}\cL^0 + \frac{6\gamma\sigma^2 L}{n(1-\beta)} + \frac{64 (1-\beta)^2 \rho L^2 \gamma^2\sigma^2}{1-\rho} + \frac{384 L^2 \rho \gamma^2\hat{b}^2}{(1-\rho)^2}  \nonumber \\
& = \frac{4}{\sqrt{n}\sqrt{T+1}}\cL^0 + \frac{6\sigma}{\sqrt{n}\sqrt{T+1}} + \frac{64 (1-\beta)^4n \rho L^2 }{(1-\rho)(T+1)} + \frac{384(1-\beta)^2n\rho L^2\hat{b}^2}{(1-\rho)^2(T+1)\sigma^2} \nonumber\\
&=\cO(\frac{1}{\sqrt{n}\sqrt{T+1}})+\cO(\frac{(1-\beta)^4n}{(1-\rho)(T+1)})+  \cO(\frac{(1-\beta)^2n}{(1-\rho)^2(T+1)})\nonumber\\
&=\cO(\frac{1}{\sqrt{n}\sqrt{T+1}}+\frac{(1-\beta)^2n}{(1-\rho)^2(T+1)})=\cO(\frac{1}{\sqrt{n(T+1)}})
\end{align}

\section{Convergence Analysis for Strongly-convex Scenario}
\label{app-thm-decentLaM--sc}

\subsection{Descent Lemma for Lyapunov Function}
\begin{lemma}[\sc Descent Lemma for Lyapunov Function for Strongly Convex Scenario]\label{lem:sc-final}
    Under Assumptions A.1--A.3 and A.5, with appropriately chosen $c_i\geq 0$, if learning rate $\gamma \leq  \min\{\frac{(1-\beta)^2}{12L},\frac{(1-\beta)^2}{(7-\beta+5\beta^2)L}\} = O(\frac{(1-\beta)^2}{L})$, it holds that 
    \begin{align}\label{lya-inequality-sc}
     &\mathbb{E}[\cL^{k+1} - \cL^k] \nonumber\\
    \leq &- \frac{\gamma L \mu}{2(1-\beta)(L+\mu)}\EE[\cL^k]-  \frac{\gamma L \mu}{2(1-\beta)(L+\mu)} \big(\mathbb{E}f( \bar{s}^{(k)}) - f^\star\big) +\frac{3\gamma^2 L\sigma^2}{2(1-\beta)n} \nonumber\\
    &\quad + \frac{5L\mu\gamma^2 \beta^2 \sigma^2}{(L+\mu)2n(1-\beta)^2} +\sum\limits_{i=0}^{k}b_{k,i}\EE\|\bs^{(i)}-\bar{\bs}^{(i)}\|^2
    \end{align}
    with $b_{k,k} \leq \frac{\gamma L^2}{2n(1-\beta)}+ \frac{9\gamma^2L^3}{n(1-\beta)^3}+\frac{9L^3\mu \gamma^2 }{(L+\mu)n(1-\beta)^3}$ and  $b_{k,i} \leq  \Big( \frac{9\gamma^2 L^3 }{n(1-\beta)^2} + \frac{9L^3\mu \gamma^2 }{(L+\mu)n(1-\beta)^2} \Big)\beta^{k-i}$ for $i\in[0,k-1]$.
\end{lemma}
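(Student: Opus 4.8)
The plan is to mirror the non-convex Lyapunov descent \eqref{lya-inequality} step for step, and then inject strong convexity (Assumption A.5) only at the end to upgrade the additive decrease $-\frac{\gamma}{2(1-\beta)}\mathbb{E}\|\nabla f(\bar{s}^{(k)})\|^2$ into a multiplicative contraction of $\cL^k$. First I would start from the Descent Lemma \eqref{x762bzz} for $\mathbb{E}[f(\bar{t}^{(k+1)})]$, add the increment of the history term $\sum_i c_i\|\bar{s}^{(k-i)}-\bar{s}^{(k-i-1)}\|^2$ from the Lyapunov definition \eqref{Lya-func}, and use \eqref{bound-bar-s-diff} to control the $\mathbb{E}\|\bar{s}^{(k+1)}-\bar{s}^{(k)}\|^2$ term entering through $c_0$. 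Expanding the gradient-difference/momentum terms with the Jensen bounds \eqref{2bnsba1}--\eqref{2bnsba1-2} and with Lemma \ref{lm-bound-ms} (taking $\vy=W^{\frac12}\bs^{(k)}$) yields, exactly as in \eqref{q1}, an inequality whose right-hand side contains: the decrease $-\frac{\gamma}{2(1-\beta)}\mathbb{E}\|\nabla f(\bar{s}^{(k)})\|^2$; a coefficient times $\mathbb{E}\|\frac1n\mathds{1}^T\nabla f(W^{\frac12}\bs^{(k)})\|^2$ which I force to be non-positive by a step-size condition of the type \eqref{z8}; consensus-error terms $\mathbb{E}\|\bs^{(i)}-\bar{\bs}^{(i)}\|^2$; a weighted tail $\sum_{i=0}^{k-1}(c_{i+1}-c_i+\text{positive stuff})\,\mathbb{E}\|\bar{s}^{(k-i)}-\bar{s}^{(k-i-1)}\|^2$; and the noise $\propto \gamma^2\sigma^2$.

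Next I would fix the sequence $\{c_i\}_{i\ge0}$ by a recursion of the same flavour as \eqref{ci-ci-1}, but now carrying an extra factor $(1-\delta)$ with $\delta=\frac{\gamma L\mu}{2(1-\beta)(L+\mu)}$, chosen precisely so that the weighted tail becomes $\le -\delta\sum_i c_i\|\bar{s}^{(k-i)}-\bar{s}^{(k-i-1)}\|^2$ (the $c_i$-part of $\cL^k$ contracts at the same rate $\delta$) while only the $i=0$ entry survives, to be folded into the consensus estimate later. Solving this recursion and using $\gamma=O((1-\beta)^2/L)$ gives a uniform bound $c_0\le \frac{L}{4(1-\beta)}$ analogous to \eqref{eqn:c_0_upp}, which then collapses the residual consensus coefficients to the claimed $b_{k,k}$ and $b_{k,i}\propto\beta^{k-i}$. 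This part is essentially the same bookkeeping as in the non-convex proof.

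The genuinely new ingredient is invoking strong convexity. Using \eqref{t-recursion-1}, namely $\bar{t}^{(k)}-\bar{s}^{(k)}=-\frac{\gamma\beta}{(1-\beta)^2}\bar{m}_s^{(k)}$, together with $L$-smoothness I would write $f(\bar{t}^{(k)})-f^\star\le f(\bar{s}^{(k)})-f^\star+\langle\nabla f(\bar{s}^{(k)}),\bar{t}^{(k)}-\bar{s}^{(k)}\rangle+\frac{L}{2}\|\bar{t}^{(k)}-\bar{s}^{(k)}\|^2$, and bound $f(\bar{s}^{(k)})-f^\star$ via the Polyak--{\L}ojasiewicz inequality $\|\nabla f(\bar{s}^{(k)})\|^2\ge 2\mu(f(\bar{s}^{(k)})-f^\star)$. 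Spending a portion of the decrease $-\frac{\gamma}{2(1-\beta)}\|\nabla f(\bar{s}^{(k)})\|^2$ then buys both $-\delta(f(\bar{t}^{(k)})-f^\star)$ and the residual $-\delta(f(\bar{s}^{(k)})-f^\star)$; the factor $L/(L+\mu)$ inside $\delta$ is exactly the slack that lets the cross term $\langle\nabla f(\bar{s}^{(k)}),\bar{m}_s^{(k)}\rangle$ and the $\|\bar{m}_s^{(k)}\|^2$ correction be absorbed (again dominating $\mathbb{E}\|\bar{m}_s^{(k)}\|^2$ by Lemma \ref{lm-bound-ms}, whose $\|\frac1n\mathds{1}^T\nabla f(W^{\frac12}\bs^{(k)})\|^2$ contribution is kept non-positive by tightening the step size to $\gamma\le\frac{(1-\beta)^2}{(7-\beta+5\beta^2)L}$, and whose $\sigma^2/n$ part contributes the $\frac{5L\mu\gamma^2\beta^2\sigma^2}{(L+\mu)2n(1-\beta)^2}$ term). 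Combining this with the $c_i$-sum contraction from the previous step produces $-\delta\,\cL^k$ plus the stated residual and noise terms, which is exactly \eqref{lya-inequality-sc}.

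The main obstacle is the simultaneous calibration in the middle step: one must choose the weights $c_i$ so that the successive-difference tail telescopes, the whole Lyapunov function (including its $c_i$-sum) decays at the single rate $\delta$, the leftover consensus coefficients come out with exactly the geometric decay $\beta^{k-i}$ and the stated magnitudes, and $c_0$ stays positive and $\le L/(4(1-\beta))$ --- all under the two step-size thresholds, whose precise algebraic form is dictated by requiring the spurious $\|\frac1n\mathds{1}^T\nabla f(W^{\frac12}\bs^{(k)})\|^2$ coefficient and the tail coefficient to be non-positive. Everything else is a routine but lengthy computation of constants.
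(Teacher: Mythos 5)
Your proposal follows essentially the same route as the paper's proof: reuse the non-convex Lyapunov descent machinery, convert the $-\frac{\gamma}{2(1-\beta)}\mathbb{E}\|\nabla f(\bar{s}^{(k)})\|^2$ decrease into a contraction of $\cL^k$ via the strong-convexity bound $\|\nabla f(\bar{s}^{(k)})\|^2\ge 2\mu(f(\bar{s}^{(k)})-f^\star)$ combined with the $L$-smooth expansion of $f(\bar{t}^{(k)})$ around $\bar{s}^{(k)}$ using $\bar{t}^{(k)}-\bar{s}^{(k)}=-\frac{\gamma\beta}{(1-\beta)^2}\bar{m}_s^{(k)}$, and modify the $c_i$ recursion by the factor $(1-\delta)$ with $\delta=\frac{\gamma L\mu}{2(1-\beta)(L+\mu)}$ so the history term contracts at the same rate. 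The only cosmetic discrepancy is the uniform bound on $c_0$ (the paper obtains $c_0\le\frac{L}{2(1-\beta)}$ rather than $\frac{L}{4(1-\beta)}$ in this scenario), which does not affect the argument.
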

\begin{proof}
We firstly derive a lower bound for $\mathbb{E}\| \nabla f( \bar{s}^{(k)}) \|^2$. Recall from Lemma \eqref{x762bzz}, we have
\begin{align}\label{x762bzz}
\mathbb{E}[f(\bar{t}^{(k+1)})] &\le \mathbb{E}[f(\bar{t}^{(k)})]  - \frac{\gamma}{2(1-\beta)}\mathbb{E}\| \nabla f( \bar{s}^{(k)}) \|^2  + \frac{\gamma L^2}{2n(1-\beta)}\mathbb{E}\|\bar{\bs}^{(k)} - \bs^{(k)}\|^2 +  \frac{\gamma^2 L\sigma^2}{(1-\beta)n} \nonumber \\
&\quad - \Big(\frac{\gamma}{2(1-\beta)} - \frac{\gamma^2 L}{(1-\beta)^3} - \frac{L\gamma^2}{2(1-\beta)^2} - \frac{\gamma^2 \beta^2 L}{(1-\beta)^3}\Big)\mathbb{E}\|\frac{1}{n}\mathds{1}^T \nabla f(W^{\frac{1}{2}}\bs^{(k)})\|^2 \nonumber \\
&\quad + \frac{\gamma^2 \beta^2 L}{(1-\beta)^3} \mathbb{E}\|(1-\beta)\sum_{i=0}^{k-1}\beta^{k-1-i} [\frac{1}{n}\mathds{1}^T\nabla f(W^{\frac{1}{2}}\bs^{(i)}) - \frac{1}{n}\mathds{1}^T\nabla f(W^{\frac{1}{2}}\bs^{(k)})]\|^2.
\end{align} \label{lm-descent-general}
Since $f(x)$ is $\mu$-strongly convex, we have
\begin{align}\label{23bxsc7a}
\mathbb{E}\| \nabla f( \bar{s}^{(k)}) \|^2 \ge 2\mu \big( \mathbb{E}f( \bar{s}^{(k)}) - f^\star \big).
\end{align}
On the other hand, since $f(x)$ is $L$-smooth, it holds that 
\begin{equation*}
\begin{split}
\mathbb{E}\left[f(\bar{t}^{(k)})\right] \leq & \mathbb{E}\left[f(\bar{s}^{(k)})\right]+\mathbb{E}\left[\left\langle \nabla f(\bar{s}^{(k)}), \bar{t}^{(k)}-\bar{s}^{(k)}\right\rangle\right]+\frac{L}{2} \mathbb{E}\|\bar{t}^{(k)}-\bar{s}^{(k)}\|^{2} \\
\overset{\eqref{t-recursion-1}}{=}&\mathbb{E}\left[f(\bar{s}^{(k)})\right] + \mathbb{E}\left[\left\langle \nabla f(\bar{s}^{(k)})-\frac{1-\beta}{1-\beta^{k+1}}\sum_{i=0}^{k} \beta^{k-i} \nabla f(\bar{s}^{(i)})+\frac{1-\beta}{1-\beta^{k+1}}\sum_{i=0}^{k} \beta^{k-i} \nabla f(\bar{s}^{(i)}),-\frac{\gamma \beta}{(1-\beta)^2} \bar{m}_s^{(k)}\right\rangle\right] \\
&+\frac{L}{2} \mathbb{E}\left[\left\|\frac{\gamma \beta}{(1-\beta)^2} \bar{m}_s^{(k)}\right\|^{2}\right] \\
\overset{(a)}{\leq}&\mathbb{E}\left[f(\bar{s}^{(k)})\right] +\frac{\gamma \eta_1}{2(1-\beta)}  \mathbb{E}\left[\left\|\nabla f(\bar{s}^{(k)})-\frac{1-\beta}{1-\beta^{k+1}} \sum_{i=0}^{k} \beta^{k-i} \nabla f(\bar{s}^{(i)})\right\|^{2}\right]+\frac{\gamma}{2(1-\beta)\eta_1} \mathbb{E}\left[\left\|\frac{\beta}{1-\beta} \bar{m}_s^{(k)}\right\|^{2}\right] \\
&+\mathbb{E}\left[\left\langle\frac{1-\beta}{1-\beta^{k+1}} \sum_{i=0}^{k} \beta^{k-i} \nabla f(\bar{s}^{(i)}),-\frac{\gamma \beta}{(1-\beta)^2} \bar{m}_s^{(k)}\right\rangle\right]+\frac{L}{2} \mathbb{E}\left[\left\|\frac{\gamma \beta}{(1-\beta)^2} \bar{m}_s^{(k)}\right\|^{2}\right] \\
\overset{(b)}{\leq} & \mathbb{E}\left[f(\bar{s}^{(k)})\right] +\frac{\gamma \eta_1}{2(1-\beta)}  \mathbb{E}\left[\left\|\nabla f(\bar{s}^{(k)})-\frac{1-\beta}{1-\beta^{k+1}} \sum_{i=0}^{k} \beta^{k-i} \nabla f(\bar{s}^{(i)})\right\|^{2}\right] \\
&+\left(\frac{\gamma}{2(1-\beta)\eta_1} +\frac{L \gamma^{2}}{2(1-\beta)^2}\right) \left(\frac{\beta}{1-\beta}\right)^{2} \mathbb{E}\left[\left\|\bar{m}_s^{(k)}\right\|^{2}\right] \\
&+\frac{\gamma\beta}{(1-\beta)^2}\left(\frac{\eta_{2}}{2} \mathbb{E}\left[\left\|\frac{1-\beta}{1-\beta^{k+1}} \sum_{i=0}^{k} \beta^{k-i} \nabla f(\bar{s}^{(i)})\right\|^{2}\right]+\frac{1}{2 \eta_{2}} \mathbb{E}\left[\left\|\bar{m}_s^{(k)}\right\|^{2}\right]\right) \\
\overset{(c)}{\leq}& \mathbb{E}\left[f(\bar{s}^{(k)})\right]  + \frac{\gamma(1-\beta^{k+1})}{2(1-\beta)^2} \mathbb{E}\left[\left\|\nabla f(\bar{s}^{(k)})-\frac{1-\beta}{1-\beta^{k+1}} \sum_{i=0}^{k} \beta^{k-i} \nabla f(\bar{s}^{(i)})\right\|^{2}\right] \\
&+\left(\frac{\gamma\beta^2}{(1-\beta)^2(1-\beta^{k+1})}+\frac{L \gamma^{2}}{2}\frac{\beta^2}{(1-\beta)^4}\right) \mathbb{E}\left[\left\|\bar{m}_s^{(k)}\right\|^{2}\right]  +\frac{\gamma(1-\beta^{k+1})}{2(1-\beta)^2} \mathbb{E}\left[\left\|\frac{1-\beta}{1-\beta^{k+1}} \sum_{i=0}^{k} \beta^{k-i} \nabla f(\bar{s}^{(i)})\right\|^{2}\right],
\end{split}
\end{equation*}
where (a) and (b) hold because of the Cauchy's inequality, and $(c)$ holds by letting $\eta_1=\frac{1-\beta^{k+1}}{1-\beta}$, $\eta_2=\frac{1}{\beta(1-\beta^{k+1})}$ respectively and using the fact $1-\beta^{k+1}\leq \frac{1}{1-\beta^{k+1}}$. Substituting the above inequality into \eqref{23bxsc7a}, we reach
\begin{align}\label{xczn2ewg}
&\ \mathbb{E}\| \nabla f( \bar{s}^{(k)}) \|^2 \nonumber \\
 \ge&\ \mu \big( \mathbb{E}f( \bar{s}^{(k)}) - f^\star \big) + \mu \Biggl( \mathbb{E}f( \bar{t}^{(k)}) - f^\star  - \frac{\gamma(1-\beta^{k+1})}{2(1-\beta)^2} \mathbb{E}\left[\left\|\nabla f(\bar{s}^{(k)})-\frac{1-\beta}{1-\beta^{k+1}} \sum_{i=0}^{k} \beta^{k-i} \nabla f(\bar{s}^{(i)})\right\|^{2}\right] \nonumber \\
&\ -  \left(\frac{\gamma\beta^2}{(1-\beta)^2(1-\beta^{k+1})}+\frac{L \gamma^{2}}{2}\frac{\beta^2}{(1-\beta)^4}\right) \mathbb{E}\left[\left\|\bar{m}_s^{(k)}\right\|^{2}\right]  - \frac{\gamma(1-\beta^{k+1})}{2(1-\beta)^2} \mathbb{E}\left[\left\|\frac{1-\beta}{1-\beta^{k+1}} \sum_{i=0}^{k} \beta^{k-i} \nabla f(\bar{s}^{(i)})\right\|^{2}\right]\Biggl)
\end{align}

By Lemma \ref{lm-bound-ms} and set $\mathbf{y}=W\bs^{(k)} $, we have
\begin{align}\label{p2378-sc}
\mathbb{E}\|\bar{m}_s^{(k)}\|^2 &\leq \frac{2(1-\beta)(1-\beta^{2(k+1)})\sigma^2}{n}+4(1-\beta^k)^2 \mathbb{E}\|\frac{1}{n}\mathds{1}^T\nabla f(W\bs^{(k)})\|^2 \nonumber\\
&\quad + 4\mathbb{E}\|(1-\beta)\sum_{i=0}^{k-1}\beta^{k-1-i} [\frac{1}{n}\mathds{1}^T\nabla f(W\bs^{(i)}) - \frac{1}{n}\mathds{1}^T\nabla f(W\bs^{(k)})]\|^2
\end{align}
We also have
\begin{align}\label{p2378-sc-2}
\mathbb{E}\left[\left\|\frac{1-\beta}{1-\beta^{k+1}} \sum_{i=0}^{k} \beta^{k-i} \nabla f(\bar{s}^{(i)})\right\|^{2}\right]
\le& 2 \mathbb{E}\left[\left\|\frac{1-\beta}{1-\beta^{k+1}} \sum_{i=0}^{k} \beta^{k-i} \nabla f(\bar{s}^{(i)}) - \nabla f(\bar{s}^{(k)})\right\|^{2}\right] + 2\mathbb{E}\|\nabla f(\bar{s}^{(k)})\|^2\nonumber\\
\end{align}
Substituting \eqref{p2378-sc} and \eqref{p2378-sc-2} to \eqref{xczn2ewg}, we have 
\begin{align}\label{eqn:gvbfbiodf}
&(1 + \frac{\gamma \mu}{(1-\beta)^2}) \mathbb{E}\| \nabla f( \bar{s}^{(k)}) \|^2 \nonumber\\
\ge&\ \mu \big( \mathbb{E}f( \bar{s}^{(k)}) - f^\star \big) + \mu \Biggl( \mathbb{E}f( \bar{t}^{(k)}) - f^\star  - \frac{3\gamma(1-\beta^{k+1})}{2(1-\beta)^2} \mathbb{E}\left[\left\|\nabla f(\bar{s}^{(k)})-\frac{1-\beta}{1-\beta^{k+1}} \sum_{i=0}^{k} \beta^{k-i} \nabla f(\bar{s}^{(i)})\right\|^{2}\right] \nonumber \\
&\ -  4(1-\beta^k)^2\left(\frac{\gamma\beta^2}{(1-\beta)^2(1-\beta^{k+1})}+\frac{L \gamma^{2}}{2}\frac{\beta^2}{(1-\beta)^4}\right)\mathbb{E}\|\frac{1}{n}\mathds{1}^T\nabla f(W\bs^{(k)})\|^2\nonumber\\
&- 4\left(\frac{\gamma\beta^2}{(1-\beta)^2(1-\beta^{k+1})}+\frac{L \gamma^{2}}{2}\frac{\beta^2}{(1-\beta)^4}\right)\mathbb{E}\|(1-\beta)\sum_{i=0}^{k-1}\beta^{k-1-i} [\frac{1}{n}\mathds{1}^T\nabla f(W\bs^{(i)}) - \frac{1}{n}\mathds{1}^T\nabla f(W\bs^{(k)})]\|^2 \nonumber \\
&\ -\frac{2\sigma^2}{n}\left(\frac{\gamma\beta^2(1+\beta^{k+1})}{1-\beta}+\frac{L \gamma^{2}}{2}\frac{\beta^2(1-\beta^{2(k+1)})}{(1-\beta)^3}\right) \Biggl)\nonumber\\
\overset{(a)}{\geq}&\ \mu \big( \mathbb{E}f( \bar{s}^{(k)}) - f^\star \big) + \mu \Biggl( \mathbb{E}f( \bar{t}^{(k)}) - f^\star  - \frac{3\gamma(1-\beta^{k+1})}{2(1-\beta)^2} \mathbb{E}\left[\left\|\nabla f(\bar{s}^{(k)})-\frac{1-\beta}{1-\beta^{k+1}} \sum_{i=0}^{k} \beta^{k-i} \nabla f(\bar{s}^{(i)})\right\|^{2}\right] \nonumber \\
& - \frac{6\gamma}{(1-\beta)^2}\mathbb{E}\|\frac{1}{n}\mathds{1}^T\nabla f(W\bs^{(k)})\|^2-\frac{5\gamma\beta^2\sigma^2}{n(1-\beta)}\nonumber\\
&- \frac{\gamma\beta^2}{(1-\beta)^2}\big(\frac{4}{1-\beta^{k+1}}+2\big)\mathbb{E}\|(1-\beta)\sum_{i=0}^{k-1}\beta^{k-1-i} [\frac{1}{n}\mathds{1}^T\nabla f(W\bs^{(i)}) - \frac{1}{n}\mathds{1}^T\nabla f(W\bs^{(k)})]\|^2\Biggl)
\end{align}
where, in (a), we use the fact $\frac{(1-\beta^k)^2}{1-\beta^{k+1}}<1$ and $\frac{L \gamma^{2}}{2}\frac{\beta^2}{(1-\beta)^4}\leq \frac{\gamma \beta^2}{2(1-\beta)^2}$ because of $\gamma\leq \frac{(1-\beta)^2}{L}$. Again when because $\gamma\leq\frac{(1-\beta)^2}{L}$, we have $1+\frac{\gamma\mu}{(1-\beta)^2}\leq 1+\frac{\mu}{L}$ and hence by \eqref{eqn:gvbfbiodf}, it holds that
\begin{align}\label{23zzzzt23}
&\mathbb{E}\| \nabla f( \bar{s}^{(k)}) \|^2 \nonumber \\
\ge&\ \frac{L\mu}{L+\mu}  \big( \mathbb{E}f( \bar{s}^{(k)}) - f^\star \big) + \frac{L\mu}{L+\mu}  \Biggl( \mathbb{E}f( \bar{t}^{(k)}) - f^\star  - \frac{3\gamma(1-\beta^{k+1})}{2(1-\beta)^2} \mathbb{E}\left[\left\|\nabla f(\bar{s}^{(k)})-\frac{1-\beta}{1-\beta^{k+1}} \sum_{i=0}^{k} \beta^{k-i} \nabla f(\bar{s}^{(i)})\right\|^{2}\right] \nonumber \\
& - \frac{6\gamma }{(1-\beta)^2}\mathbb{E}\|\frac{1}{n}\mathds{1}^T\nabla f(W\bs^{(k)})\|^2-\frac{5\gamma\beta^2\sigma^2}{n(1-\beta)} \nonumber\\
& - \frac{\gamma\beta^2}{(1-\beta)^2}\big(\frac{4}{1-\beta^{k+1}}+2\big)\mathbb{E}\|(1-\beta)\sum_{i=0}^{k-1}\beta^{k-1-i} [\frac{1}{n}\mathds{1}^T\nabla f(W\bs^{(i)}) - \frac{1}{n}\mathds{1}^T\nabla f(W\bs^{(k)})]\|^2  \Biggl). 
\end{align}

Now, we consider the Lyapunov function $\cL^k=f(\bar{t}^{(k)})-f^\star +\sum\limits_{i=0}^{k-1}c_i\|\bar{s}^{(k-i)}-\bar{s}^{(k-i-1)}\|^2$, where $c_i\geq 0$ will be decided later. By \eqref{eqn:76gfgdgds}, we have
\begin{equation}\label{eqn:77gfgdgds_copy}
\begin{split}
     \EE[\cL^{k+1}-\cL^k]\leq &  - \frac{\gamma}{2(1-\beta)}\mathbb{E}\| \nabla f( \bar{s}^{(k)}) \|^2  + \frac{\gamma L^2}{2n(1-\beta)}\mathbb{E}\|\bar{\bs}^{(k)} - \bs^{(k)}\|^2 +  \frac{\gamma^2 L\sigma^2}{(1-\beta)n} +\frac{2c_0\gamma^2\sigma^2}{(1-\beta)n}\\
    &\quad - \Big(\frac{\gamma}{2(1-\beta)} - \frac{\gamma^2 L}{(1-\beta)^3} - \frac{L\gamma^2}{2(1-\beta)^2} - \frac{\gamma^2 \beta^2 L}{(1-\beta)^3}-\frac{4c_0\gamma^2}{(1-\beta)^2}\Big)\mathbb{E}\|\frac{1}{n}\mathds{1}^T \nabla f(W^{\frac{1}{2}}\bs^{(k)})\|^2 \\
    &\quad + \Big(\frac{\gamma^2 \beta^2 L }{(1-\beta)^3}+\frac{4c_0\gamma^2\beta^2}{(1-\beta)^2}\Big) \mathbb{E}\|(1-\beta)\sum_{i=0}^{k-1}\beta^{k-1-i} [\frac{1}{n}\mathds{1}^T\nabla f(W\bs^{(i)}) - \frac{1}{n}\mathds{1}^T\nabla f(W\bs^{(k)})]\|^2\\
    &\quad+\sum_{i=0}^{k-1}\left(c_{i+1}-c_{i}\right) \mathbb{E}\left[\left\|\bar{s}^{(k-i)}-\bar{s}^{(k-i-1)}\right\|^{2}\right]
\end{split}
\end{equation}
Substituting \eqref{23zzzzt23} into \eqref{eqn:77gfgdgds_copy}, we achieve 
\begin{align}\label{x762bzz-sc-2}
&\ \mathbb{E}[\cL^{k+1} - \cL^k] \nonumber \\
&\le  - \frac{\gamma L \mu}{2(1-\beta)(L+\mu)} \big(\mathbb{E}f( \bar{t}^{(k)}) - f^\star\big) -  \frac{\gamma L \mu}{2(1-\beta)(L+\mu)} \big(\mathbb{E}f( \bar{s}^{(k)}) - f^\star\big) + \frac{\gamma L^2}{2n(1-\beta)}\mathbb{E}\|\bar{\bs}^{(k)} - \bs^{(k)}\|^2 \nonumber \\
&\quad +  \frac{\gamma^2 L\sigma^2}{(1-\beta)n} +  \frac{5L\mu\gamma^2 \beta^2 \sigma^2}{(L+\mu)2n(1-\beta)^2}+ \frac{2c_0 \gamma^2 \sigma^2}{n(1-\beta)}\nonumber \\
&\quad - \Big(\frac{\gamma}{2(1-\beta)} - \frac{\gamma^2 L}{(1-\beta)^3} - \frac{L\gamma^2}{2(1-\beta)^2} - \frac{\gamma^2 \beta^2 L}{(1-\beta)^3} - \frac{3L\mu \gamma^2}{(L+\mu)(1-\beta)^3} - \frac{4c_0\gamma^2}{(1-\beta)^2}\Big)\mathbb{E}\|\frac{1}{n}\mathds{1}^T \nabla f(W^{\frac{1}{2}}\bs^{(k)})\|^2 \nonumber \\
&\quad +\Big( \frac{\gamma^2 \beta^2L }{(1-\beta)^3} + \frac{L\mu \gamma^2 \beta^2}{(L+\mu)(1-\beta)^3}\big(\frac{2}{1-\beta^{k+1}}+1\big) + \frac{4c_0\gamma^2\beta^2}{(1-\beta)^2} \Big) \mathbb{E}\|(1-\beta)\sum_{i=0}^{k-1}\beta^{k-1-i} [\frac{1}{n}\mathds{1}^T\nabla f(W\bs^{(i)}) - \frac{1}{n}\mathds{1}^T\nabla f(W\bs^{(k)})]\|^2 \nonumber \\
&\quad + \frac{3L\mu\gamma^2(1-\beta^{k+1})}{4(1-\beta)^3(L+\mu)} \mathbb{E}\big\|\nabla f(\bar{s}^{(k)})-\frac{1-\beta}{1-\beta^{k+1}} \sum_{i=0}^{k} \beta^{k-i} \nabla f(\bar{s}^{(i)})\big\|^{2}\nonumber \\
&\quad + \sum_{i=0}^{k-1}(c_{i+1}- c_i)\mathbb{E}\|\bar{s}^{(k-i)} - \bar{s}^{(k-1-i)}\|^2
\end{align}

Following \eqref{2bnsba1} and \eqref{2bnsba1-2}, we have
\begin{align}\label{eqn:79hghgwq_copy}
&\ \mathbb{E}\|(1-\beta)\sum_{i=0}^{k-1}\beta^{k-1-i} [\frac{1}{n}\mathds{1}^T\nabla f(W^{\frac{1}{2}}\bs^{(i)}) - \frac{1}{n}\mathds{1}^T\nabla f(W^{\frac{1}{2}}\bs^{(k)})]\|^2 \nonumber \\
\leq&\frac{3(1-\beta)(1-\beta^k)L^2}{n}\sum_{i=0}^{k-1}\beta^{k-1-i}\mathbb{E}\|\bs^{(i)} - \bar{\bs}^{(i)}\|^2 + \frac{3(1-\beta^k)^2L^2}{n}\mathbb{E}\|\bs^{(k)} - \bar{\bs}^{(k)}\|^2 \nonumber\\
&\quad + 3(1-\beta^k)^2L^2\sum_{i=0}^{k-1} a_{k,k-1-i} \mathbb{E}\|\bar{s}^{(k-i)} - \bar{s}^{(k-1-i)}\|^2
\end{align}
and 
\begin{align}\label{eqn:79hghgwq_copy2}
&(1-\beta^{k+1})\mathbb{E}\big\|\nabla f(\bar{s}^{(k)})-\frac{1-\beta}{1-\beta^{k+1}} \sum_{i=0}^{k} \beta^{k-i} \nabla f(\bar{s}^{(i)})\big\|^{2}\nonumber\\
\leq &\frac{(1-\beta)^2\beta}{1-\beta^{k+1}}\sum\limits_{i=0}^k\beta^{k-i}\sum\limits_{i=0}^{k-1}\beta^{k-1-i}L^2\mathbb{E}\big\|\bar{s}^{(k)}-\bar{s}^{(i)})\big\|^{2}\nonumber \\
\leq &(1-\beta^k)\beta L^2 \sum\limits_{i=0}^{k-1}a_{k,k-1-i}\EE\|\bar{s}^{(k-i)}-\bar{s}^{(k-i-1)}\|^2
\end{align}
with $a_{k,k-1-i}=\frac{\beta^i}{1-\beta^k}(\frac{\beta}{1-\beta}+i+1)$.

Suppose $\gamma$ is sufficiently small such that
\begin{equation}\label{eqn:suppose_sc}
    \frac{\gamma}{2(1-\beta)} - \frac{\gamma^2 L}{(1-\beta)^3} - \frac{L\gamma^2}{2(1-\beta)^2} - \frac{\gamma^2 \beta^2 L}{(1-\beta)^3} - \frac{3L\mu \gamma^2 \beta^2}{(L+\mu)(1-\beta)^3} - \frac{4c_0\gamma^2}{(1-\beta)^2}\geq 0
\end{equation}
which we will check later on. Then substituting \eqref{eqn:79hghgwq_copy}, \eqref{eqn:79hghgwq_copy2} and \eqref{eqn:suppose_sc} into \eqref{x762bzz-sc-2} and noting $\frac{1-\beta^k}{1-\beta^{k+1}}<1$, $\beta^2\leq \beta<1$, we have
\begin{align}\label{x762bzz-sc-22}
&\mathbb{E}[\cL^{k+1} - \cL^k] \nonumber \\
&\le  - \frac{\gamma L \mu}{2(1-\beta)(L+\mu)} \big(\mathbb{E}f( \bar{t}^{(k)}) - f^\star\big) -  \frac{\gamma L \mu}{2(1-\beta)(L+\mu)} \big(\mathbb{E}f( \bar{s}^{(k)}) - f^\star\big) \nonumber\\
&\quad +  \frac{\gamma^2 L\sigma^2}{2(1-\beta)n} + \frac{5L\mu\gamma^2 \beta^2 \sigma^2}{(L+\mu)2n(1-\beta)^2}+ \frac{2c_0 \gamma^2 \sigma^2}{n(1-\beta)}\nonumber \\
&\quad + \Big(\frac{\gamma L^2}{2n(1-\beta)}+\frac{3L^2}{n}\Big( \frac{\gamma^2 L }{(1-\beta)^3} + \frac{3L\mu \gamma^2 }{(L+\mu)(1-\beta)^3} + \frac{4c_0\gamma^2}{(1-\beta)^2} \Big)\Big)\mathbb{E}\|\bar{\bs}^{(k)} - \bs^{(k)}\|^2 \nonumber \\
&\quad +\frac{3(1-\beta)L^2}{n}\Big( \frac{\gamma^2 L }{(1-\beta)^3} + \frac{3L\mu \gamma^2 }{(L+\mu)(1-\beta)^3} + \frac{4c_0\gamma^2}{(1-\beta)^2} \Big)\sum\limits_{i=0}^{k-1}\beta^{k-i}\EE\|\bs^{(i)}-\bar{\bs}^{(i)}\|^2\nonumber\\
&\quad +\Biggl(3\Big( \frac{\gamma^2 L }{(1-\beta)^3} + \frac{3L\mu \gamma^2 }{(L+\mu)(1-\beta)^3} + \frac{4c_0\gamma^2}{(1-\beta)^2} \Big) + \frac{3L\mu\gamma^2}{4(1-\beta)^3(L+\mu)}\Biggl)L^2\beta(1-\beta^{k}) \sum\limits_{i=0}^{k-1}a_{k,k-i-1}\EE\|\bar{s}^{(k-i)}-\bar{s}^{(k-i-1)}\|^2\nonumber \\
&\quad + \sum_{i=0}^{k-1}(c_{i+1}- c_i)\mathbb{E}\|\bar{s}^{(k-i)} - \bar{s}^{(k-1-i)}\|^2\nonumber\\
&=  - \frac{\gamma L \mu}{2(1-\beta)(L+\mu)}\big(\mathbb{E}f( \bar{t}^{(k)}) - f^\star\big) -  \frac{\gamma L \mu}{2(1-\beta)(L+\mu)} \big(\mathbb{E}f( \bar{s}^{(k)}) - f^\star\big) \nonumber\\
&\quad + \frac{\gamma^2 L\sigma^2}{2(1-\beta)n} + \frac{5L\mu\gamma^2 \beta^2 \sigma^2}{(L+\mu)2n(1-\beta)^2}+ \frac{2c_0 \gamma^2 \sigma^2}{n(1-\beta)}\nonumber\\
&\quad +\sum\limits_{i=0}^{k}b_{k,i}\EE\|\bs^{(i)}-\bar{\bs}^{(i)}\|^2+\sum\limits_{i=0}^{k-1}(c_{i+1}-c_i+P\beta L^2(1-\beta^{k})a_{k,k-i-1})\EE\|\bar{s}^{(k-i)}-\bar{s}^{(k-i-1)}\|^2
\end{align}
where
\begin{align}
b_{k,k}&=\frac{\gamma L^2}{2n(1-\beta)}+\frac{3L^2}{n}\Big( \frac{\gamma^2 L }{(1-\beta)^3} + \frac{3L\mu \gamma^2 }{(L+\mu)(1-\beta)^3} + \frac{4c_0\gamma^2}{(1-\beta)^2} \Big)\label{eqn:bkk}\\
b_{k,i}&=\Big( \frac{\gamma^2 L }{(1-\beta)^2} + \frac{3L\mu \gamma^2 }{(L+\mu)(1-\beta)^2} + \frac{4c_0\gamma^2}{1-\beta} \Big)\frac{3L^2}{n}\beta^{k-i}\quad \mbox{ if }0\leq i\leq k-1\label{eqn:bki}\\
P&= \frac{3\gamma^2 L }{(1-\beta)^3} + \frac{39L\mu \gamma^2 }{4(L+\mu)(1-\beta)^3} + \frac{12c_0\gamma^2}{(1-\beta)^2}.
\end{align}

Now we show there exist $c_i\geq 0$ such that $c_0\leq \frac{L}{2(1-\beta)}$ and for $0\leq i\leq k-1$, $\forall\, k\geq 1$
\begin{align}\label{eqn:ci-1}
&c_{i+1}-c_{i}+PL^2\beta(1-\beta^{k})a_{k,k-i-1}\leq  -\frac{\gamma L \mu}{2(1-\beta)(L+\mu)}c_i\nonumber\\
\end{align}
hence we can achieve that 
\begin{align}
&\mathbb{E}[\cL^{k+1} - \cL^k] \nonumber\\
\leq &- \frac{\gamma L \mu}{2(1-\beta)(L+\mu)}\EE[\cL^k]-  \frac{\gamma L \mu}{2(1-\beta)(L+\mu)} \big(\mathbb{E}f( \bar{s}^{(k)}) - f^\star\big) +\frac{3\gamma^2 L\sigma^2}{2(1-\beta)n} \nonumber\\
&\quad + \frac{5L\mu\gamma^2 \beta^2 \sigma^2}{(L+\mu)2n(1-\beta)^2} +\sum\limits_{i=0}^{k}b_{k,i}\EE\|\bs^{(i)}-\bar{\bs}^{(i)}\|^2
\end{align}
Note \eqref{eqn:ci-1} is equivalent to 
\begin{align}\label{eqn:ci-2}
&c_{i+1}+PL^2\beta^{i+1}\big(i+1+\frac{\beta}{1-\beta}\big)\leq (1-\frac{\gamma L \mu}{2(1-\beta)(L+\mu)})c_i\nonumber\\
\Longleftrightarrow\,&\frac{c_{i+1}}{ (1-\frac{\gamma L \mu}{2(1-\beta)(L+\mu)})^{i+1}}\leq \frac{c_i}{ (1-\frac{\gamma L \mu}{2(1-\beta)(L+\mu)})^i}-PL^2\frac{\beta^{i+1}(i+1+\frac{\beta}{1-\beta})}{ (1-\frac{\gamma L \mu}{2(1-\beta)(L+\mu)})^{i+1}}.
\end{align}
Therefore, by choosing $c_i(i\geq 1)$ such that the equality in \eqref{eqn:ci-2} holds and noting
\[
\gamma\leq \frac{(1-\beta)^2}{L}\leq \frac{2(1-\beta)^2(L+\mu)}{L\mu(1+\sqrt{\beta})}\,\Longrightarrow\,\frac{\beta}{ 1-\frac{\gamma L \mu}{2(1-\beta)(L+\mu)}}\leq \sqrt{\beta}<1
\] 
in order to assure positiveness, it suffices to let 
\begin{align}\label{eqn:ci-3}
c_0&=PL^2 \frac{6}{(1-\beta)^2}\\
&\geq PL^2\frac{1}{(1-\beta)^2} (\sqrt{\beta}(1+\sqrt{\beta})^2+\beta^\frac{3}{2}(1+\sqrt{\beta}))\nonumber\\
&=PL^2\Big(\frac{\sqrt{\beta}}{(1-\sqrt{\beta})^2}+\frac{\beta}{1-\beta}\frac{\sqrt{\beta}}{1-\sqrt{\beta}}\Big)\nonumber \\
&=PL^2\sum\limits_{i=0}^\infty (\sqrt{\beta})^{i+1}(i+1+\frac{\beta}{1-\beta})\nonumber\\
&\geq PL^2\sum\limits_{i=0}^\infty \frac{\beta^{i+1}}{ (1-\frac{\gamma L \mu}{2(1-\beta)(L+\mu)})^{i+1}}\big(i+1+\frac{\beta}{1-\beta}\big)\nonumber
\end{align}
Note $P= \frac{3\gamma^2 L }{(1-\beta)^3} + \frac{39L\mu \gamma^2 }{4(L+\mu)(1-\beta)^3} + \frac{12c_0\gamma^2}{(1-\beta)^2}$. Hence \eqref{eqn:ci-3} yields 
\begin{equation}
c_0 =\frac{\frac{\gamma^2L^3}{(1-\beta)^5}\Big(18+\frac{117\mu}{2(L+\mu)}\Big)}{1-\frac{72\gamma^2L^2}{(1-\beta)^4}}
\end{equation}
which is positive as long as $\gamma<\frac{(1-\beta)^2}{6\sqrt{2}L}$. Furthermore, if $\gamma\leq \frac{(1-\beta)^2}{12L}$, we have 
\begin{equation}\label{eqn:c0_upp_sc}
c_0\leq \frac{\frac{\gamma^2L^3}{(1-\beta)^5}\Big(18+\frac{117\mu}{2(L+\mu)}\Big)}{\frac{72\gamma^2L^2}{(1-\beta)^4}}=\frac{(\frac{1}{4}+\frac{117\mu}{144(L+\mu)})L}{(1-\beta)}\leq \frac{L}{2(1-\beta)}
\end{equation}
Then, following \eqref{eqn:bkk} and \eqref{eqn:bki}, we reach
\begin{align}\label{eqn:bkk-2}
b_{k,k}&\leq \frac{\gamma L^2}{2n(1-\beta)}+ \frac{9\gamma^2L^3}{n(1-\beta)^3}+\frac{9L^3\mu \gamma^2 }{(L+\mu)n(1-\beta)^3}\\
b_{k,i}&\leq \Big( \frac{9\gamma^2 L^3 }{n(1-\beta)^2} + \frac{9L^3\mu \gamma^2 }{(L+\mu)n(1-\beta)^2} \Big)\beta^{k-i}\quad \mbox{ if }0\leq i\leq k-1.\label{eqn:bki-2}
\end{align}
Finally, to guarantee \eqref{eqn:suppose_sc}, it suffices to let $\gamma \leq \frac{(1-\beta)^2}{(7-\beta+5\beta^2)L}$.
\end{proof}


\subsection{Proof of Theorem \ref{thm-decentLaM--sc}}
By Lemma \ref{lem:sc-final}, we have
\begin{align}\label{lya-inequality-sc-copy2}
     &\mathbb{E}[\cL^{k+1} - \cL^k] \nonumber\\
    \leq &- \frac{\gamma L \mu}{2(1-\beta)(L+\mu)}\EE[\cL^k]-  \frac{\gamma L \mu}{2(1-\beta)(L+\mu)} \big(\mathbb{E}f( \bar{s}^{(k)}) - f^\star\big) +\frac{3\gamma^2 L\sigma^2}{2(1-\beta)n} \nonumber\\
    &\quad + \frac{5L\mu\gamma^2 \beta^2 \sigma^2}{(L+\mu)2n(1-\beta)^2} +\sum\limits_{i=0}^{k}b_{k,i}\EE\|\bs^{(i)}-\bar{\bs}^{(i)}\|^2
    \end{align}
    with $b_{k,k} \leq \frac{\gamma L^2}{2n(1-\beta)}+ \frac{9\gamma^2L^3}{n(1-\beta)^3}+\frac{9L^3\mu \gamma^2 }{(L+\mu)n(1-\beta)^3}$ and  $b_{k,i} \leq  \Big( \frac{9\gamma^2 L^3 }{n(1-\beta)^2} + \frac{9L^3\mu \gamma^2 }{(L+\mu)n(1-\beta)^2} \Big)\beta^{k-i}$ for $i\in[0,k-1]$.
Note \eqref{lya-inequality-sc-copy2} is equivalent to 
\begin{align}
h_k\big(\mathbb{E}f( \bar{s}^{(k)}) - f^\star\big)&\leq\Big( \frac{2(1-\beta)(L+\mu)}{\gamma L\mu}-1\Big)h_k\EE[\cL^k] -\frac{2(1-\beta)(L+\mu)}{\gamma L\mu}h_k\mathbb{E}[\cL^{k+1} ]\nonumber\\
&\quad +\big(\frac{3\gamma (L+\mu)\sigma^2}{n\mu }+\frac{5\gamma \beta^2\sigma^2}{n(1-\beta)}\big)h_k\nonumber\\
&\quad +\frac{2(1-\beta)(L+\mu)}{\gamma L\mu}\sum\limits_{i=0}^{k}b_{k,i}\EE\|\bs^{(i)}-\bar{\bs}^{(i)}\|^2.
\end{align}
We thus have 
\begin{align}
\frac{1}{H_T}\sum\limits_{k=0}^Th_k\big(\mathbb{E}f( \bar{s}^{(k)}) - f^\star\big)&\leq \frac{1}{H_T} \sum\limits_{k=0}^T\Big[\Big( \frac{2(1-\beta)(L+\mu)}{\gamma L\mu}-1\Big)h_k\EE[\cL^k] -\frac{2(1-\beta)(L+\mu)}{\gamma L\mu}h_k\mathbb{E}[\cL^{k+1} ]\Big]\\
&\quad + \frac{1}{H_T}\frac{2(1-\beta)(L+\mu)}{\gamma L\mu}\sum\limits_{k=0}^T \sum\limits_{i=0}^k b_{k,i}h_k\mathbb{E}\|\bar{\bs}^{(k)} - \bs^{(k)}\|^2+ \frac{3\gamma (L+\mu)\sigma^2}{n\mu }+\frac{5\gamma \beta^2\sigma^2}{n(1-\beta)}.
\end{align}
Choose $\frac{2(1-\beta)(L+\mu)}{\gamma L\mu}h_k= \Big( \frac{2(1-\beta)(L+\mu)}{\gamma L\mu}-1\Big)h_{k+1}$, i.e.
\begin{equation}\label{eqn:HT-1}
h_k = \big(1-\frac{\gamma L\mu}{2(1-\beta)(L+\mu)})^{-k},\quad \frac{h_0}{H_T}=\frac{1}{H_T}\leq \frac{1}{h_T}=\big(1-\frac{\gamma L\mu}{2(1-\beta)(L+\mu)})^{T}
\end{equation}
then by telescoping, we reach
\begin{align}\label{gojgprtn}
\frac{1}{H_T}\sum\limits_{k=0}^Th_k\big(\mathbb{E}f( \bar{s}^{(k)}) - f^\star\big)&\leq \frac{h_0}{H_T} \Big( \frac{2(1-\beta)(L+\mu)}{\gamma L\mu}-1\Big)\EE[\cL^0]\nonumber \\
&\quad + \frac{1}{H_T}\frac{2(1-\beta)(L+\mu)}{\gamma L\mu}\sum\limits_{i=0}^T\sum\limits_{k=i}^{T}h_kb_{k,i}\EE\|\bs^{(i)}-\bar{\bs}^{(i)}\|^2+ \frac{3\gamma (L+\mu)\sigma^2}{n\mu }+\frac{5\gamma \beta^2\sigma^2}{n(1-\beta)}.   
\end{align}
Since 
\begin{align}\label{eqn:bkk-3}
\frac{2(1-\beta)(L+\mu)}{\gamma L\mu}b_{k,k} &\overset{\eqref{eqn:bkk-2}}{\leq }\frac{2(1-\beta)(L+\mu)}{\gamma L\mu}\Big(\frac{\gamma L^2}{2n(1-\beta)}+ \frac{9\gamma^2L^3}{n(1-\beta)^3}+\frac{9L^3\mu \gamma^2 }{(L+\mu)n(1-\beta)^3}\Big)\nonumber\\
&= \frac{L}{n}\frac{L+\mu}{\mu}\Big(1+\frac{18 \gamma L}{(1-\beta)^2}+\frac{18\gamma L\mu}{(L+\mu)(1-\beta)^2}\Big)\nonumber\\
&\overset{(a)}{\leq} \frac{2L}{n}\frac{L+\mu}{\mu}
\end{align}
where we use $\gamma \leq \frac{(1-\beta)^2}{27L}$ and $\frac{\mu}{L+\mu}\leq \frac{1}{2}$ in (a). Similarly, we reach
\begin{equation}\label{eqn:bki-3}
\frac{2(1-\beta)(L+\mu)}{\gamma L\mu}b_{k,i}\leq \frac{L}{n}\frac{L+\mu}{\mu}(1-\beta)\beta^{k-i}.
\end{equation}
Since $\gamma \leq\frac{(1-\beta)^2}{L} \leq \frac{(1-\beta)^2(L+\mu)}{L\mu}$ yields $\beta(1-\frac{\gamma L \mu}{2(1-\beta)(L+\mu)})^{-1}\leq \frac{2\beta}{1+\beta} $
\begin{align}\label{hghjhgjwe}
&\frac{2(1-\beta)(L+\mu)}{\gamma L\mu}\sum\limits_{k=i}^Th_kb_{k,i}\nonumber\\
\overset{\eqref{eqn:bkk-3},\eqref{eqn:bki-3}}{\leq} &\frac{2L}{n}\frac{L+\mu}{\mu} h_i+ \frac{L}{n}\frac{L+\mu}{\mu}(1-\beta)\sum\limits_{k=i+1}^T\beta^{k-i}h_k\nonumber\\
=&\frac{2L}{n}\frac{L+\mu}{\mu} h_i+ \frac{L}{n}\frac{L+\mu}{\mu}(1-\beta)\sum\limits_{k=i+1}^T\beta^{k-i}(1-\frac{\gamma L \mu}{2(1-\beta)(L+\mu)})^{-(k-i)}h_i\nonumber\\
\leq &\frac{2L}{n}\frac{L+\mu}{\mu} h_i+ \frac{L}{n}\frac{L+\mu}{\mu}(1-\beta)\sum\limits_{k=i+1}^\infty\big(\frac{2\beta}{1+\beta}\big)^{-(k-i)}h_i\nonumber\\
\overset{(c)}{\leq} &\frac{4L}{n}\frac{L+\mu}{\mu} h_i
\end{align}
where we use the fact $\sum\limits_{k=1}^{\infty}(\frac{2\beta}{1+\beta})^{k}=\frac{2\beta}{1-\beta}\leq \frac{2}{1-\beta}$ in (c).

Therefore, combining \eqref{hghjhgjwe} with \eqref{gojgprtn}, we reach that 
\begin{align}
&\quad \frac{1}{H_T}\sum\limits_{k=0}^Th_k\big(\mathbb{E}f( \bar{s}^{(k)}) - f^\star\big)\nonumber\\
&\leq \frac{h_0}{H_T} \Big( \frac{2(1-\beta)(L+\mu)}{\gamma L\mu}-1\Big)\EE[\cL^0]+\frac{1}{H_T}\sum\limits_{i=0}^T \frac{4L}{n}\frac{L+\mu}{\mu}h_i\EE\|\bs^{(i)}-\bar{\bs}^{(i)}\|^2+\frac{3\gamma (L+\mu)\sigma^2}{n\mu }+\frac{5\gamma \beta^2\sigma^2}{n(1-\beta)}\nonumber\\
\overset{(a)}{\leq}& \frac{h_0}{H_T} \Big( \frac{2(1-\beta)(L+\mu)}{\gamma L\mu}-1\Big)\EE[\cL^0]+\frac{3\gamma (L+\mu)\sigma^2}{n\mu }+\frac{5\gamma \beta^2\sigma^2}{n(1-\beta)}\nonumber \\
&\quad +\frac{4L}{n}\frac{L+\mu}{\mu} \Biggl(\frac{216 n L \gamma^2}{(1-\rho)^2H_T}\sum_{k=0}^{T} h_k\, \big( \mathbb{E}f(\bar{s}^{(k)}) - f^\star \big) + \frac{48 n\gamma^2{b}^2}{(1-\rho)^2} + \frac{8n\gamma^2\sigma^2}{1-\rho}\Biggl)
\end{align}
where we have use Lemma \ref{lem:sc-consensus} the fact that $(1-\frac{\gamma L\mu}{2(1-\beta)(L+\mu})^{-1}\leq1+\frac{1-\rho}{8}$ because of $\gamma \leq \frac{(1-\rho)(1-\beta)(L+\mu)}{5L\mu}$.
When $\gamma \leq \frac{1-\rho}{\sqrt{1728}L\sqrt{\kappa+1}}$ $(\kappa=\frac{L}{\mu})$, we have $1-\frac{4L}{n}\frac{L+\mu}{\mu}\frac{216nL^2\gamma^2}{(1-\rho)^2}\leq \frac{1}{2}$ and hence
\begin{align}
&\frac{1}{H_T}\sum\limits_{k=0}^Th_k\big(\mathbb{E}f( \bar{s}^{(k)}) - f^\star\big)\nonumber\\
\leq& \frac{h_0}{H_T} \frac{4(1-\beta)(L+\mu)}{\gamma L\mu}\cL^0+\frac{6\gamma (L+\mu)\sigma^2}{n\mu }+\frac{10\gamma \beta^2\sigma^2}{n(1-\beta)}+4L\frac{L+\mu}{\mu}\big(\frac{48\gamma^2\hat{b}^2}{(1-\rho)^2}+\frac{8\gamma^2\sigma^2}{1-\rho}\big)\nonumber\\
\overset{\eqref{eqn:HT-1}}{\leq} &\frac{4(1-\beta)(L+\mu)}{\gamma L\mu}\cL^0(1-\frac{\gamma L\mu}{2(1-\beta)(L+\mu)})^T+\frac{6\gamma (L+\mu)\sigma^2}{n\mu }+\frac{10\gamma \beta^2\sigma^2}{n(1-\beta)}+4L\frac{L+\mu}{\mu}\big(\frac{48\gamma^2\hat{b}^2}{(1-\rho)^2}+\frac{8\gamma^2\sigma^2}{1-\rho}\big)\nonumber\\
=&\cO\Big(\frac{(1-\beta)\kappa}{\gamma}(1-\frac{\gamma}{(1-\beta)\kappa})^{T}+\frac{\gamma \sigma^2}{n}\max\{\kappa,\frac{1}{1-\beta}\} +\frac{\gamma^2\kappa\hat{b}^2}{(1-\rho)^2}+\frac{\gamma^2\kappa \sigma^2}{1-\rho}\Big).
\end{align}
Note that $\bar{x}^{(k)} = \bar{s}^{(k)}$ (see Sec.~\ref{app-support-lemmas}), we achieve the final result.

\subsection{Proof of Corollary \ref{coro-sc}}
Choose $\gamma = \frac{2(1-\beta)(L+\mu)\ln(nT^2)}{L\mu T}$, for $T$ sufficiently large, the step size $\gamma$ satisfies the condition in Theorem \ref{thm-decentLaM--sc}, hence we have 
\begin{align}
&\frac{1}{H_T}\sum\limits_{k=0}^Th_k\big(\mathbb{E}f( \bar{s}^{(k)}) - f^\star\big)\nonumber\\
\leq &\frac{4(1-\beta)(L+\mu)}{\gamma L\mu}\cL^0(1-\frac{\gamma L\mu}{2(1-\beta)(L+\mu)})^T+\frac{6\gamma (L+\mu)\sigma^2}{n\mu }+\frac{10\gamma \beta^2\sigma^2}{n(1-\beta)}+4L\frac{L+\mu}{\mu}\big(\frac{48\gamma^2\hat{b}^2}{(1-\rho)^2}+\frac{8\gamma^2\sigma^2}{1-\rho}\big)\nonumber\\
\leq &\frac{4(1-\beta)(L+\mu)}{\gamma L\mu}\cL^0\exp(-\frac{\gamma L\mu}{2(1-\beta)(L+\mu)}T)+\frac{6\gamma (L+\mu)\sigma^2}{n\mu }+\frac{10\gamma \beta^2\sigma^2}{n(1-\beta)}+4L\frac{L+\mu}{\mu}\big(\frac{48\gamma^2\hat{b}^2}{(1-\rho)^2}+\frac{8\gamma^2\sigma^2}{1-\rho}\big)\nonumber\\
= &\frac{2}{\ln(nT^2)}\cL^0\frac{1}{nT}+\frac{12(1-\beta) (L+\mu)^2\sigma^2\ln(nT^2)}{nL\mu^2T }+\frac{20 (L+\mu)\sigma^2\ln(T^2)}{nL\mu T}\nonumber\\
&+4L\frac{L+\mu}{\mu}\big(\frac{48\hat{b}^2}{(1-\rho)^2}+\frac{8\sigma^2}{1-\rho}\big)\frac{4(1-\beta)^2(L+\mu)^2\ln(nT^2)^2}{L^2\mu^2T^2}\nonumber\\
=&\tilde{\cO}(\frac{1}{nT})
\end{align}
where $\tilde{\cO}$ hides constants and polylogarithmic factors.

\section{More Experimental Details}

\subsection{Experimental setting for Table \ref{talbe:motivation}}
\label{app-table-introduction}
Cifar-10 dataset contains 50,000 training samples and 10,000 validating samples. We follow the SOTA training scheme and train totally 200 epochs. The learning rate is linearly scaled and gradually warmed up form a relatively small value (e.g. 0.1) in the first 5 epochs. We decay the learning rate by a factor of 10 at 100, 150 epochs. To eliminate the effect of topology with different size in decentralized algorithms, we used 8 workers (i.e. 8 GPUs) in all decentralized training and changed the batch size of every single GPU respectively. For ImageNet experiments, the training setting is introduced in Sec.~\ref{sec:experiment} in details.

\subsection{Experimental setting for linear regression (i.e., Figs.~\ref{fig:ls-dsgd-dmsgd} and \ref{fig:ls-dsgd-dmsgd-decentLaM})}
\label{app-exp-linear-regression}
In this experiment, we consider a linear regression problem:
\begin{align}
\label{23nbsdbasd}
\min_{x\in \mathbb{R}^d}\quad \frac{1}{n}\sum_{i=1}^n f_i(x) \quad \mbox{where} \quad f_i(x) = \frac{1}{2}\|A_i x - b_i\|^2.
\end{align}
In the above problem, we set $n=8$ and all computing nodes are organized into the mesh topolology, see Fig.~\ref{fig:topo}. The weight matrix is generated from the Metropolis-Hastings rule \cite[Table~14.1]{sayed2014adaptation} so that it satisfies Assumption A.3. Quantities $A_i \in \mathbb{R}^{50\times 30}$ and $b_i \in \mathbb{R}^{50}$ are local data held in node $i$. Each $A_i$ is generated from the standard Gaussian distribution $\mathcal{N}(0,1)$, and $b_i = A_x x^o + s$ in which $x^o \in \RR^{30}$ is a predefined solution, and $s$ is a white noise with magnitude $0.01$. For DSGD, DmSGD, and DecentLaM, we set learning rate $\gamma=0.001$ and $\beta=0.8$. To evaluate the inconsistency bias, we let each node $i$ access the accurate gradient $\nabla f_i(x) = A_i^T(A_i x- b_i)$ rather than the stochastic gradient descent. The $y$-axis indicates the relative error $\frac{1}{n}\sum_{i=1}^n \|x_i^{(k)} - x^\star\|^2/\|x^\star\|^2$ in which $x^\star$ is the optimal solution to problem \eqref{23nbsdbasd}.

\subsection{Network topologes used in Table \ref{table-imagenet-resnet50-topo}}\label{app-topo}
We empirically investigate a series of undirected deterministic and  time-varying topologies. We generate the weight matrix $W$ according to the Metropolis-Hastings rule \cite[Table~14.1]{sayed2014adaptation} so that it is satisfies Assumption A.3. A positive-definite $W$ is not required in any of our experiments. We organize all computing nodes into the following topologies with BlueFog \cite{bluefog}. 

\begin{itemize}
    \item \textbf{Ring}. All nodes forms a logical ring topology. Every node communicate with its direct neighbors (i.e. 2 peers).
    \item \textbf{Mesh}. All nodes forms a logical mesh topology. Every node communicate with its direct neighbors. It is a multi-peer topology.
    \item \textbf{Symmetric Exponential Graph \cite{bluefog,assran2019stochastic}}. The node with odd rank $i$ communicates with even ranks $i+2^0-1, i+2^1-1, ..., i+2^{\left \lfloor \log_2 (n-1) \right \rfloor}$ by sending a message and waiting for a response. 
    \item \textbf{Bipartite Random Match}. All nodes are evenly divided into two non-overlapping groups randomly per iteration. Communication is only allowed within each pair of the nodes. We keep the same random seed in all nodes to avoid deadlocks.
\end{itemize}

We also visually illustrate the aforementioned topologies with 8 nodes in Fig.~\ref{fig:topo}.

\begin{figure}[t!]
    \centering
    \includegraphics[width=0.9\textwidth]{./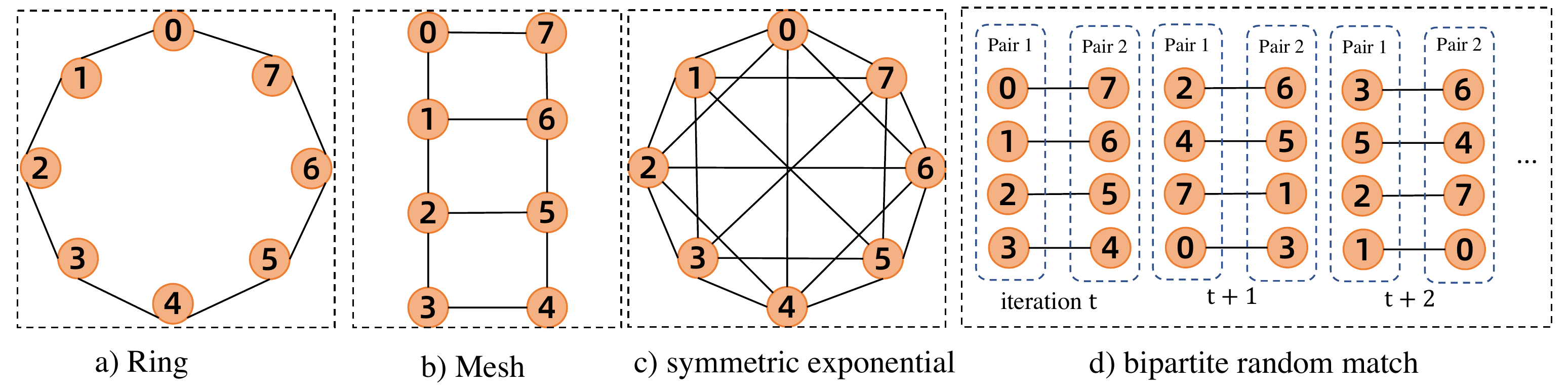}
    \caption{\small Different topologies with 8 nodes. } 
    \label{fig:topo} \vspace{-2mm}
    \vspace{-2mm}
\end{figure}

\end{document}